\theoremstyle{plain}
\newtheorem{theorem}{Theorem}[section]
\newtheorem{lemma}[theorem]{Lemma}
\theoremstyle{definition}
\newtheorem{definition}[theorem]{Definition}
\theoremstyle{remark}
\newtheorem{remark}[theorem]{Remark}
\newif\ificml
\def\eqref#1{equation~\ref{#1}}
\def\Eqref#1{Equation~\ref{#1}}
\def\1{\bm{1}}
\DeclareMathAlphabet{\mathsfit}{\encodingdefault}{\sfdefault}{m}{sl}
\SetMathAlphabet{\mathsfit}{bold}{\encodingdefault}{\sfdefault}{bx}{n}
\newcommand{\E}{\mathbb{E}}
\newcommand{\KL}{D_{\mathrm{KL}}}
\DeclareMathOperator*{\argmax}{arg\,max}
\DeclareMathOperator*{\argmin}{arg\,min}
\newcommand{\real}{\mathbb{R}}
\newcommand{\bx}{\mathbf{x}}
\newcommand{\bX}{\mathbf{X}}
\newcommand{\CX}{\mathcal{X}}
\newcommand{\CY}{\mathcal{Y}}
\newcommand{\horizon}{T}
\newcommand \dd {\,\mathrm{d}}
\newcommand \traindata {\ensuremath{\mathbf{D}^T}}
\newcommand \target {\ensuremath{f^T}}
\newcommand \extracted {\ensuremath{f^E}}
\newcommand \querydata {\ensuremath{\mathbf{D}^Q}}
\newcommand \marich {\textsc{Marich}}
\newcommand \budget {\ensuremath{B}}
\newtheorem{theorem}{Theorem}
\newtheorem{definition}{Definition}
\newtheorem{remark}{Remark}
\newtheorem{lemma}{Lemma}
\newtheorem*{rep@theorem}{\rep@title}
\newcommand{\newreptheorem}[2]{%
	\newenvironment{rep#1}[1]{%
		\def\rep@title{\textbf{#2} \ref{##1}}%
		\begin{rep@theorem}}%
		{\end{rep@theorem}}}
\newcommand\newcomment[1]{\hspace*{\fill}$\rhd$ \textit{#1}}
\definecolor{LightCyan}{rgb}{0.88,1,1}
\definecolor{beige}{rgb}{0.96, 0.96, 0.86}
\definecolor{lightgray}{rgb}{0.83, 0.83, 0.83}
\title{Marich: A Query-efficient Distributionally Equivalent\\ Model Extraction Attack using Public Data}
\author{%
  Pratik Karmakar\thanks{A significant portion of the work has been done as a part of P. Karmakar's masters in Ramakrishna Mission Vivekananda Educational and Research Institute, Belur, India.\\\hspace*{1em}$^{**}${ Code is available at: \url{https://github.com/debabrota-basu/marich}}}\\
  School of Computing, National University of Singapore\\ CNRS@CREATE Ltd, 1 Create Way, Singapore \\
  \texttt{pratik.karmakar@u.nus.edu}
  \And
  Debabrota Basu\\
  \'Equipe Scool, Univ. Lille, Inria,
  CNRS, Centrale Lille\\ UMR 9189- CRIStAL, F-59000 Lille, France\\
  \texttt{debabrota.basu@inria.fr}
  }
\begin{document}

\maketitle
\doparttoc 
\faketableofcontents 

\begin{abstract}
We study design of black-box model extraction attacks that can \textit{send minimal number of queries from} a \textit{publicly available dataset} to a target ML model through a predictive API with an aim \textit{to create an informative and distributionally equivalent replica} of the target.
First, we define \textit{distributionally equivalent} and \textit{Max-Information model extraction} attacks, and reduce them into a variational optimisation problem. The attacker sequentially solves this optimisation problem to select the most informative queries that simultaneously maximise the entropy and reduce the mismatch between the target and the stolen models. This leads to \textit{an active sampling-based query selection algorithm}, \marich{}, which is \textit{model-oblivious}. Then, we evaluate \marich{} on different text and image data sets, and different models, including CNNs and BERT. \marich{} extracts models that achieve $\sim 60-95\%$ of true model's accuracy and uses $\sim 1,000 - 8,500$ queries from the publicly available datasets, which are different from the private training datasets. Models extracted by \marich{} yield prediction distributions, which are $\sim2-4\times$ closer to the target's distribution in comparison to the existing active sampling-based attacks. The extracted models also lead to 84-96$\%$ accuracy under membership inference attacks. Experimental results validate that \marich{} is \textit{query-efficient}, and capable of performing task-accurate, high-fidelity, and informative model extraction.
\end{abstract}
\section{Introduction}\vspace*{-.5em}
In recent years, Machine Learning as a Service (MLaaS) is widely deployed and used in industries. In MLaaS~\citep{ribeiro2015mlaas}, an ML model is trained remotely on a private dataset, deployed in a Cloud, and offered for public access through a prediction API, such as \href{https://aws.amazon.com/machine-learning/ai-services/}{Amazon AWS}, \href{https://cloud.google.com/prediction}{Google API}, \href{https://azure.microsoft.com/en-us/products/app-service/api/}{Microsoft Azure}.
An API allows an user, including a potential \textit{adversary}, \textit{to send queries to the ML model and fetch corresponding predictions}.
Recent works have shown such models with public APIs can be stolen, or extracted, by designing black-box model extraction attacks~\citep{tramer2016stealing}.
In model extraction attacks, an adversary queries the target model with a query dataset, which might be same or different than the private dataset, collects the corresponding predictions from the target model, and builds a replica model of the target model.
The goal is to construct a model which is almost-equivalent to the target model over input space~\citep{jagielski2020high}.

Often, ML models are proprietary, guarded by IP rights, and expensive to build. These models might be trained on datasets which are expensive to obtain~\citep{yang2019xlnet} and consist of private data of individuals~\citep{lowd2005good}.
Also, extracted models can be used to perform other privacy attacks on the private dataset used for training, such as membership inference~\citep{nasr2019comprehensive}.
Thus, understanding susceptibility of models accessible through MLaaS presents an important conundrum.
This motivates us to i\textit{nvestigate black-box model extraction attacks while the adversary has no access to the private data or a perturbed version of it}~\citep{papernot2017practical}. Instead, \textit{the adversary uses a public dataset to query the target model}~\citep{orekondy2019knockoff,pal2020activethief}.

\setlength{\textfloatsep}{8pt}
\begin{figure}[t!]
\centering
  \includegraphics[scale=0.45,trim={0 1.5cm 0 2cm},clip]{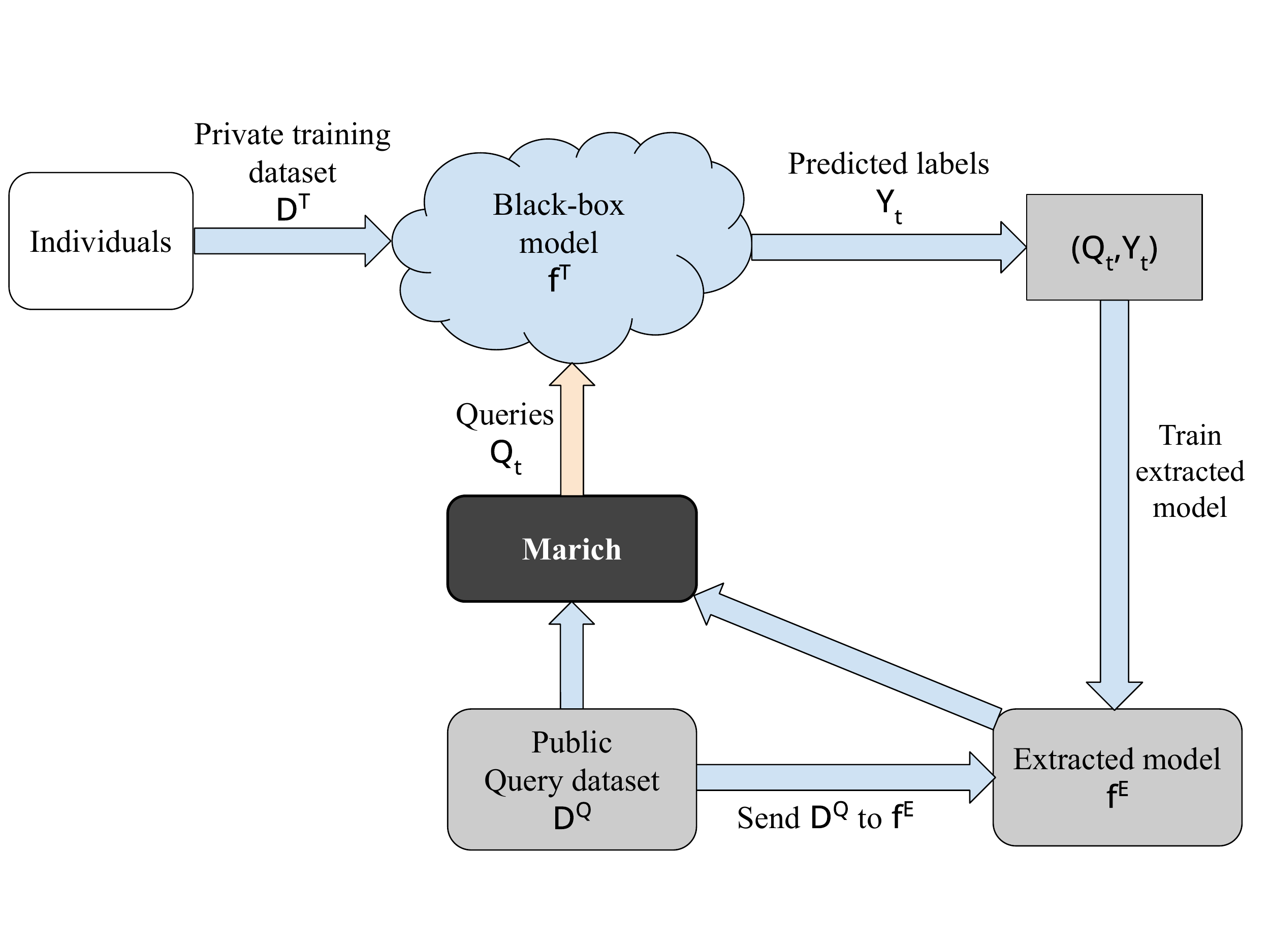}
  \caption{Black-box model extraction with \marich{}.}\label{fig:framework}
\end{figure}

Query-efficient black-box model extraction poses a tension between the number of queries sent to the target model and the accuracy of extracted model~\citep{pal2020activethief}. With more queries and predictions, an adversary can build a better replica. But querying an API too much can be expensive, as each query incurs a monetary cost in MLaaS. Also, researchers have developed algorithms that can detect adversarial queries, when they are not well-crafted or sent to the API in large numbers~\citep{juuti2019prada,pal2021stateful}. Thus, designing a query-efficient attack is paramount for practical deployment. Also, it exposes how more information can be leaked from a target model with less number of interactions.

In this paper, \textit{we investigate effective definitions of efficiency of model extraction and corresponding algorithm design for query-efficient black-box model extraction attack with public data, which is oblivious to deployed model and applicable for any datatype.}

\noindent\textbf{Our contributions.} Our investigation yields three contributions.

\textit{1. Formalism: Distributional equivalence and Max-Information extraction.} Often, the ML models, specifically classifiers, are stochastic algorithms. They also include different elements of randomness during training. Thus, rather than focusing on equivalence of extracted and target models in terms of a fixed dataset or accuracy on that dataset~\citep{jagielski2020high}, we propose \textit{a distributional notion of equivalence}. We propose that if the joint distribution induced by a query generating distribution and corresponding prediction distribution due to both the target and the extracted models are same, they will be called distributionally equivalent (Sec.~\ref{sec:formulation}). Another proposal is to reinforce the objective of the attack, i.e. to extract as much information as possible from the target model. This allows us to formulate the Max-Information attack, where the adversary aims to maximise the mutual information between the extracted and target models' distributions. 
\textit{Our hypothesis is that if we can extract the predictive distribution of a model, it would be enough to replicate other properties of the model (e.g. accuracy) and also to run other attacks (e.g. membership inference),
rather than designing specific attacks to replicate the task accuracy or the model weights}.
We show that both the attacks can be performed by sequentially solving a single variational optimisation~\citep{staines2012variational} problem (Eqn.~(\ref{eqn:empirical_minmax})).

\textit{2. Algorithm: Adaptive query selection for extraction with \marich{}.} We propose an algorithm, \marich{} (Sec.~\ref{sec:algo}), that optimises the objective of the variational optimisation problem (Eqn.~(\ref{eqn:empirical_minmax})). Given an extracted model, a target model, and previous queries, \marich{} adaptively selects a batch of queries enforcing this objective. Then, it sends the queries to the target model, collects the predictions (i.e. the class predicted by target model), and uses them to further train the extracted model (Algo.~\ref{alg:algorithm}). In order to select the most informative set of queries, it deploys three sampling strategies in cascade. These strategies select: a) the most informative set of queries, b) the most diverse set of queries in the first selection, and c) the final subset of queries where the target and extracted models mismatch the most. Together these strategies allow \marich{} to select a small subset of queries that both maximise the information leakage, and align the extracted and target models (Fig.~\ref{fig:framework}).

\textit{3. Experimental analysis.} We perform extensive the most for a given modelevaluation with both image and text datasets, and diverse model classes, such as Logistic Regression (LR), ResNet, CNN, and BERT (Sec.~\ref{sec:experiments}). Leveraging \marich's model-obliviousness, we even extract a ResNet trained on CIFAR10 with a CNN and out-of-class queries from ImageNet. Our experimental results validate that \marich{} extracts more accurate replicas of the target model and high-fidelity replica of the target's prediction distributions in comparison to existing active sampling algorithms. While \marich{} uses a small number of queries ($\sim 1k - 8.5k$) selected from publicly available query datasets, the extracted models yield accuracy comparable with the target model while encountering a membership inference attack. This shows that \marich{} can extract alarmingly informative models query-efficiently. Additionally,  as \marich{} can extract the true model's predictive distribution with a different model architecture and a mismatched querying dataset, it allows us to design a model-oblivious and dataset-oblivious approach to attack.

\subsection{Related works}\vspace*{-.5em}

\noindent\textbf{Taxonomy of model extraction.} Black-box model extraction (or model stealing or model inference) attacks aim to \textit{replicate} of a target ML model, commonly classifiers, deployed in a remote service and accessible through a public API~\citep{tramer2016stealing}. The replication is done in such a way that the extracted model achieves one of the three goals: a) \textit{accuracy close to that of the target model on the private training data} used to train the target model, b) \textit{maximal agreement in predictions} with the target model on the \textit{private training data}, and c) maximal agreement in prediction with the target model over the \textit{whole input domain}. Depending on the objective, they are called \textit{task accuracy}, \textit{fidelity}, and \textit{functional equivalence model extractions}, respectively~\citep{jagielski2020high}. Here, \textit{we generalise these three approaches using a novel definition of distributional equivalence and also introduce a novel information-theoretic objective of model extraction which maximises the mutual information between the target and the extracted model over the whole data domain.}

\noindent\textbf{Frameworks of attack design.} Following~\citep{tramer2016stealing}, researchers have proposed multiple attacks to perform one of the three types of model extraction.
The attacks are based on two main approaches: \textit{direct recovery} (target model specific)~\citep{milli2019model,csinn2018,jagielski2020high} and \textit{learning} (target model specific/oblivious). The learning-based approaches can also be categorised into supervised learning strategies, where the adversary has access to both the true labels of queries and the labels predicted by the target model~\citep{tramer2016stealing,jagielski2020high}, and online active learning strategies, where the adversary has only access to the predicted labels of the target model, and actively select the future queries depending on the previous queries and predicted labels~\citep{papernot2017practical,pal2020activethief,chandrasekaran2020exploring}. \textit{As query-efficiency is paramount for an adversary while attacking an API to save the budget and to keep the attack hidden and also the assumption of access true label from the private data is restrictive, we focus on designing an online and active learning-based attack strategy that is model oblivious.}

\noindent\textbf{Types of target model.} While \citep{milli2019model,chandrasekaran2020exploring} focus on performing attacks against linear models, all others are specific to neural networks~\citep{milli2019model,jagielski2020high,pal2020activethief} and even a specific architecture~\citep{correia2018copycat}. In contrast, \marich{} is \textit{capable of attacking both linear models and neural networks.} Additionally, \textit{\marich{} is model-oblivious}, i.e. it can attack one model architecture (e.g. ResNet) using a different model architecture (e.g. CNN).

\noindent\textbf{Types of query feedback.} Learning-based attacks often assume access to either the probability vector of the target model over all the predicted labels~\citep{tramer2016stealing,orekondy2019knockoff,pal2020activethief,jagielski2020high}, or the gradient of the last layer of the target neural network~\citep{milli2019model,miura2021megex}, which are hardly available in a public API. In contrast, following \citep{papernot2017practical}, \textit{we assume access to only the predicted labels of the target model for a set of queries}, which is always available with a public API. Thus, experimentally, we cannot compare with existing active sampling attacks requiring access to the whole prediction vector~\citep{pal2020activethief,orekondy2019knockoff}, and thus, compare with a wide-range of active sampling methods that can operate only with the predicted label, such as $K$-center sampling, entropy sampling, least confidence sampling, margin sampling etc.~\citep{ren2021survey}. Details are in Appendix~\ref{app:activel}.

\noindent\textbf{Choices of public datasets for queries.} There are two approaches of querying a target model: \textit{data-free} and \textit{data-selection based}. In \textit{data-free attacks}, the attacker begins with noise. The informative queries are generated further using a GAN-like model fed with responses obtained from an API~\citep{zhou2020dast,dfme,miura2021megex,zhang2022towards,sanyal2022towards}. Typically, it requires almost a million queries to the API to start generating sensible query data (e.g. sensible images that can leak from a model trained on CIFAR10). But since one of our main focus is query-efficiency, we focus on \textit{data-selection based attacks}, where an adversary has access to a query dataset to select the queries from and to send it to the target model to obtain predicted labels. 
In literature, researchers assume three types of query datasets: \textit{synthetically generated samples}~\citep{tramer2016stealing}, \textit{adversarially perturbed private (or task domain) dataset}~\citep{papernot2017practical,juuti2019prada}, and \textit{publicly available (or out-of-task domain) dataset}~\citep{orekondy2019knockoff,pal2020activethief}. As we do not want to restrict \marich{} to have access to the knowledge of the private dataset or any perturbed version of it, \textit{we use publicly available datasets, which are different than the private dataset.}
To be specific, we only assume whether we should query the API with images, text, or tabular data and not even the identical set of labels. For example, we experimentally attack models trained on CIFAR10 with ImageNet queries having different classes.

Further discussions on related active sampling algorithms and distinction of \marich{} with the existing works are deferred to Appendix~\ref{app:activel}.


\section{Background: Classifiers, model extraction, membership inference attacks}
Before proceeding to the details, we present the fundamentals of a classifier in ML, and two types of inference attacks: Model Extraction (ME) and Membership Inference (MI).

\noindent\textbf{Classifiers.} A classifier in ML~\citep{goodfellow2016deep} is a function $f: \CX \rightarrow \CY$ that maps a set of input features $\bX \in \CX$ to an output $Y \in \CY$.\footnote{We denote sets/vectors by \textbf{bold} letters, and the distributions by \textit{calligraphic} letters. We express random variables in UPPERCASE, and an assignment of a random variable in lowercase.} The output space is a finite set of classes, i.e. $\{1,\ldots,k\}$.
Specifically, a classifier $f$ is a parametric function, denoted as $f_{\theta}$, with parameters $\theta \in \real^d$, and is trained on a dataset $\mathbf{D}^T$, i.e. a collection of $n$ tuples $\{(\bx_i, y_i)\}_{i=1}^n$ generated IID from an underlying distribution $\mathcal{D}$.
Training implies that given a model class $\mathcal{F}=\{f_\theta|\theta\in\Theta\}$, a loss function $l: \CY \times \CY \rightarrow \real_{\geq0}$, and training dataset $\traindata$, we aim to find the optimal parameter
$\theta^* \triangleq \argmin_{\theta \in \Theta} \sum_{i=1}^n l(f_{\theta}(\bx_i), y_i).$
We use cross-entropy, i.e. $l(f_{\theta}(\bx_i), y_i) \triangleq -y_i \log(f_{\theta}(\bx_i))$, as the loss function for classification.

\noindent\textbf{Model extraction attack.} A model extraction attack is an inference attack where an adversary aims to steal a target model $\target$ trained on a private dataset $\traindata$ and create another replica of it $\extracted$~\citep{tramer2016stealing}. In the black-box setting that we are interested in, the adversary can only query the target model $\target$ by sending queries $Q$ through a publicly available API and to use the corresponding predictions $\hat{Y}$ to construct $\extracted$. The goal of the adversary is to create a model which is either (a) as similar to the target model as possible for all input features, i.e. $\target(x) = \extracted(x)~\forall x \in \CX$~\citep{song2020overlearning,chandrasekaran2020exploring} or (b) predicts labels that has maximal agreement with that of the labels predicted by the target model for a given data-generating distribution, i.e. $\extracted = \argmin \Pr_{x\sim\mathcal{D}}[l(\extracted(x),\target(x))]$~\citep{tramer2016stealing,pal2020activethief,jagielski2020high}. The first type of attacks are called the functionally equivalent attacks. The later family of attacks is referred as the fidelity extraction attacks. The third type of attacks aim to find an extracted model $\extracted$ that achieves maximal classification accuracy for the underlying private dataset used to train the $\target$. These are called task accuracy extraction attacks~\citep{tramer2016stealing,milli2019model,orekondy2019knockoff}. In this paper, \textit{we generalise the first two type of attacks by proposing the distributionally equivalent attacks and experimentally show that it yields both task accuracy and fidelity.}

\noindent\textbf{Membership inference attack.} Another popular family of inference attacks on ML models is the Membership Inference (MI) attacks~\citep{shokri2017membership,yeom2018privacy}. In MI attack, given a private (or member) dataset $\traindata$ to train $\target$ and another non-member dataset $S$ with $|\traindata\cap S|\neq \emptyset$, the goal of the adversary is to infer whether any $x \in \CX$ is sampled from the member dataset $\traindata$ or the non-member dataset $S$. Effectiveness of an MI attacks can be measured by its accuracy of MI, i.e. the total fraction of times the MI adversary identifies the member and non-member data points correctly. Accuracy of MI attack on the private data using  $\extracted$ rather than $\target$ is considered as a measure of effectiveness of the extraction attack~\citep{nasr2019comprehensive}. We show that the model $\extracted$ extracted using \marich{} allows us to obtain similar MI accuracy as that obtained by directly attacking the target model $\target$ using even larger number of queries. 
This validates that \textit{the model $\extracted$ by \marich{} in a black-box setting acts as an information equivalent replica of the target model $\target$.}

\vspace*{-.5em}\section{Distributional equivalence and Max-Information model extractions}\label{sec:formulation} \vspace*{-.5em} 
In this section, we introduce the distributionally equivalent and Max-Information model extractions. 
We further reduce both the attacks into a variational optimisation problem.

\begin{definition}[\textbf{Distributionally equivalent model extraction}]\label{def:dist_equiv}
For any query generating distribution $\mathcal{D}^Q$ over $\real^d \times \mathcal{Y}$, an extracted model $\extracted : \real^d \rightarrow Y$ is distributionally equivalent to a target model $\target: \real^d \rightarrow Y$, if the joint distributions of input features $Q \in \real^d \sim \mathcal{D}^Q$ and predicted labels induced by both the models are same almost surely. This means that for any divergence $D$, two distributionally equivalent models $\extracted$ and $\target$ satisfy $D(\Pr(\target(Q),Q)\|\Pr(\extracted(Q),Q)) = 0 ~\forall~\mathcal{D}^Q$.  
\end{definition}
To ensure query-efficiency in distributionally equivalent model extraction, an adversary aims to choose a query generating distribution $\mathcal{D}^Q$ that minimises it further.
If we assume that the extracted model is also a parametric function, i.e. $\extracted_\omega$ with parameters $\omega \in \Omega$, we can solve the query-efficient distributionally equivalent extraction by computing 
\begin{align}\label{eqn:dist_equiv}
    ({\omega}^*_{\mathrm{DEq}}, \mathcal{D}^{Q}_{\min})\triangleq\argmin_{\omega \in \Omega} \argmin_{\mathcal{D}^{Q}} D(\Pr(\target_{\theta^*}(Q),Q)\|\Pr(\extracted_{\omega}(Q),Q)). \vspace*{-.5em}
\end{align} 
Equation~(\ref{eqn:dist_equiv}) allows us to choose a different class of models with different parametrisation for extraction till the joint distribution induced by it matches with that of the target model. For example, the extracted model can be a logistic regression or a CNN if the target model is a logistic regression.
This formulation also enjoys the freedom to choose the data distribution $\mathcal{D}^Q$ for which we want to test the closeness. Rather the distributional equivalence pushes us to find the best query distribution for which the mismatch between the posteriors reduces the most and to compute an extracted model $\extracted_{\omega^*}$ that induces the joint distribution closest to that of the target model $\target_{\theta^*}$.

\textbf{Connection with different types of model extraction.}
For $D=D_{\mathrm{KL}}$, our formulation extends the fidelity extraction from label agreement to prediction distribution matching, which addresses the future work indicated by~\citep{jagielski2020high}. If we choose $\mathcal{D}^Q_{\min} = \mathcal{D}^T$, and substitute $D$ by prediction agreement, distributional equivalence retrieves the fidelity extraction attack. 
If we choose $\mathcal{D}^Q_{\min} = \mathrm{Unif}(\CX)$, distributional equivalent extraction coincides with functional equivalent extraction. 
Thus, a distributional equivalence attack can lead to both fidelity and functional equivalence extractions depending on the choice of query generating distribution $\mathcal{D}^Q$ and the divergence $D$.
\begin{theorem}[Upper bounding distributional closeness]\label{thm:ub_distequiv}
If we choose KL-divergence as the divergence function $D$, then for a given query generating distribution $\mathcal{D}^Q$
\begin{align}
\KL(\Pr(\target_{\theta^*}(Q),Q)\|\Pr(\extracted_{\omega^*_{\mathrm{DEq}}}(Q),Q))\leq \min_{\omega}  \E_Q[l(\target_{\theta^*}(Q),\extracted_{\omega}(Q))]-H(\extracted_{\omega}(Q)).
\end{align}
\end{theorem}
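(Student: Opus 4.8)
The plan is to collapse the divergence between the two \emph{joint} laws into an expectation of label-space divergences between the two \emph{predictive} laws, and then to rewrite each such conditional KL as a cross-entropy term (which is exactly the loss $l$) minus an entropy term. The first observation is that the joint distributions $\Pr(\target_{\theta^*}(Q),Q)$ and $\Pr(\extracted_\omega(Q),Q)$ have the \emph{same} marginal over $Q$, namely the query-generating distribution $\mathcal{D}^Q$. By the chain rule for KL divergence the marginal contribution then vanishes, leaving
\[
\KL\big(\Pr(\target_{\theta^*}(Q),Q)\,\big\|\,\Pr(\extracted_\omega(Q),Q)\big)=\E_{Q\sim\mathcal{D}^Q}\big[\KL\big(\Pr(\target_{\theta^*}(Q)\mid Q)\,\big\|\,\Pr(\extracted_\omega(Q)\mid Q)\big)\big].
\]

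Next, for a fixed $Q$ I would split the inner KL between the two predictive distributions over $\mathcal{Y}$ into its cross-entropy and entropy parts,
\[
\KL\big(\Pr(\target_{\theta^*}(Q)\mid Q)\,\big\|\,\Pr(\extracted_\omega(Q)\mid Q)\big)=l\big(\target_{\theta^*}(Q),\extracted_\omega(Q)\big)-H\big(\target_{\theta^*}(Q)\big),
\]
identifying the cross-entropy of the replica's predictive distribution relative to the target's with the cross-entropy loss $l$ in the statement; this is the distributional lift of the hard-label cross-entropy used when fitting $\extracted$ to the target's outputs. Taking $\E_Q$ gives, for every $\omega$,
\[
\KL\big(\Pr(\target_{\theta^*}(Q),Q)\,\big\|\,\Pr(\extracted_\omega(Q),Q)\big)=\E_Q\big[l(\target_{\theta^*}(Q),\extracted_\omega(Q))\big]-\E_Q\big[H(\target_{\theta^*}(Q))\big].
\]

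To conclude, I would invoke the optimality of $\omega^*_{\mathrm{DEq}}$: taking $D=\KL$ in Eqn.~(\ref{eqn:dist_equiv}), this parameter minimizes the left-hand side over $\omega$, and since $\E_Q[H(\target_{\theta^*}(Q))]$ does not depend on $\omega$, it equivalently minimizes $\E_Q[l(\target_{\theta^*}(Q),\extracted_\omega(Q))]$. Substituting $\omega=\omega^*_{\mathrm{DEq}}$ and then discarding the non-negative entropy of the target's predictive distribution --- or trading it for the replica's predictive entropy $H(\extracted_\omega(Q))$ --- yields the claimed inequality.

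The step I expect to be the main obstacle is precisely this last piece of entropy bookkeeping: the decomposition produces $-\,\E_Q[H(\target_{\theta^*}(Q))]$, while the statement carries $-\,H(\extracted_\omega(Q))$. The coarser bound $\KL(\cdots)\le\min_\omega\E_Q[l(\target_{\theta^*}(Q),\extracted_\omega(Q))]$ is immediate from non-negativity of entropy; obtaining the sharper stated form requires the replica's predictive entropy to be compared with the target's (e.g.\ via realizability of $\target_{\theta^*}$ inside the extracted model class, or by interpreting the entropy term as that of the target), which is where I would spend the most care. A secondary, purely bookkeeping point: in Eqn.~(\ref{eqn:dist_equiv}) the argmin over $\omega$ is coupled with the optimal $\mathcal{D}^Q_{\min}$, so for a fixed $\mathcal{D}^Q$ one should take $\omega^*_{\mathrm{DEq}}$ to be the corresponding inner minimizer before applying the chain of equalities above.
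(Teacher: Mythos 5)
Your decomposition is the same one the paper uses: factor both joints through the common marginal $\mathcal{D}^Q$ so that only the conditionals survive, then split the resulting expected conditional KL into the expected cross-entropy $\E_Q[l(\target_{\theta^*}(Q),\extracted_{\omega}(Q))]$ minus the (expected) entropy of the target's predictive distribution. Up to the identity
\[
\KL\bigl(\Pr(\target_{\theta^*}(Q),Q)\,\|\,\Pr(\extracted_{\omega}(Q),Q)\bigr)=\E_Q\bigl[l(\target_{\theta^*}(Q),\extracted_{\omega}(Q))\bigr]-\E_Q\bigl[H(\target_{\theta^*}(Q))\bigr],
\]
you and the paper are in lockstep, and taking $\min_\omega$ at the end is also how the paper concludes.

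The one place you stop short is exactly the step you flag as the ``main obstacle'': converting $-\E_Q[H(\target_{\theta^*}(Q))]$ into $-H(\extracted_{\omega}(Q))$. You offer only the coarser bound that drops the entropy term, which does not give the stated inequality. The paper closes this gap with a data-processing argument: since $\extracted_{\omega}$ is trained solely on the outputs of $\target_{\theta^*}$, its prediction distribution is (treated as) a post-processing of the target's, so $H(\extracted_{\omega}(Q))\leq H(\target_{\theta^*}(Q))$ and hence $-H(\target_{\theta^*}(Q))\leq -H(\extracted_{\omega}(Q))$, which is precisely the entropy swap you were missing. Note this is a modelling assumption about the extraction pipeline rather than a consequence of the definitions alone (and for a fixed $\omega$ ranged over in the $\min$, it is not automatic), so your instinct that this step needs extra justification --- realizability or an explicit processing relation between the two predictive laws --- is well placed; the paper simply asserts it. Your secondary remark about $\omega^*_{\mathrm{DEq}}$ minimizing the cross-entropy term alone is harmless for the target-entropy version of the identity but does not transfer to the stated right-hand side, where $H(\extracted_{\omega}(Q))$ depends on $\omega$; the clean way to finish, as in the paper, is just to take $\min_\omega$ of both sides of an inequality that holds for every $\omega$.
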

By variational principle, Theorem~\ref{thm:ub_distequiv} implies that \textit{minimising the upper bound} on the RHS leads to an extracted model which minimises the KL-divergence for a chosen query distribution. 

\noindent\textbf{Max-Information model extraction.} Objective of any inference attack is to leak as much information as possible from the target model $\target$. Specifically, in model extraction attacks, we want to create an informative replica $\extracted$ of the target model $\target$ such that it induces a joint distribution $\Pr(\extracted_{\omega}(Q),Q)$, which retains the most information regarding the target's joint distribution. 
As adversary controls the query distribution, we aim to choose a query distribution $\mathcal{D}^Q$ that maximises information leakage.
\begin{definition}[\textbf{Max-Information model extraction}]\label{def:maxinf}
A model $\extracted : \real^d \rightarrow Y$ and a query distribution $\mathcal{D}^Q$ are called a Max-Information extraction of a target model $\target: \real^d \rightarrow Y$ and a Max-Information query distribution, respectively, if they maximise the mutual information between the joint distributions of input features $Q \in \real^d \sim \mathcal{D}^Q$ and predicted labels induced by $\extracted$ and that of the target model. Mathematically, $(\extracted_{\omega^*},\mathcal{D}^Q_{\max})$ is a Max-Information extraction of $\target_{\theta^*}$ if
\begin{align}\label{eqn:maxinf}
    (\omega^*_{\mathrm{MaxInf}},\mathcal{D}^Q_{\max}) \triangleq\argmax_{\omega} \argmax_{\mathcal{D}_Q} I(\Pr(\target_{\theta^*}(Q),Q)\|\Pr(\extracted_{\omega}(Q),Q))
\end{align}
\end{definition}\vspace*{-.5em}
Similar to Definition~\ref{def:dist_equiv}, Definition~\ref{def:maxinf} also does not restrict us to choose a parametric model $\omega$ different from that of the target $\theta$. It also allows us to compute the data distribution $\mathcal{D}^Q$ for which the information leakage is maximum rather than relying on the private dataset $\traindata$ used for training $\target$. 
\begin{theorem}[Lower bounding information leakage]\label{thm:lb_maxinf}
For any given distribution $\mathcal{D}^Q$, the information leaked by any Max-Information attack (\Eqref{eqn:maxinf}) is lower bounded as:
\begin{align}
I(\Pr(\target_{\theta^*}(Q),Q)\|\Pr(\extracted_{\omega^*_{\mathrm{MaxInf}} }(Q),Q)) \geq \max_{\omega} -\E_Q[l(\target_{\theta^*}(Q),\extracted_{\omega}(Q))]+H(\extracted_{\omega}(Q)).
\end{align}
\end{theorem}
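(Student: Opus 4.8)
The plan is to treat Theorem~\ref{thm:lb_maxinf} as the information‑theoretic dual of Theorem~\ref{thm:ub_distequiv}: there one \emph{upper}‑bounds a KL between the target's and the extracted model's joint prediction distributions by a cross‑entropy‑minus‑entropy functional, and here we want to \emph{lower}‑bound the corresponding mutual information by its negative. First I would unpack the information between the two joint distributions. Since the query marginal $\mathcal{D}^Q$ is shared by $\Pr(\target_{\theta^*}(Q),Q)$ and $\Pr(\extracted_{\omega}(Q),Q)$, the chain rule collapses the joint‑level quantity to an expectation over $Q\sim\mathcal{D}^Q$ of an information quantity comparing the conditional label distributions $P_T(\cdot\mid q)\defn\Pr(\target_{\theta^*}(q)=\cdot)$ and $P_E(\cdot\mid q)\defn\Pr(\extracted_{\omega}(q)=\cdot)$ on the finite label set $\mathcal{Y}$; this step is bookkeeping and needs only measurability of the conditionals, not continuity of $\mathcal{D}^Q$.

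The crux is a variational lower bound on mutual information. Writing the mutual information in the form $H(\cdot)-H(\cdot\mid\cdot)$ and applying Gibbs' inequality $-\sum_y p_y\log p_y\le -\sum_y p_y\log r_y$, valid for every distribution $r$, upper‑bounds the conditional‑entropy term by a cross‑entropy whose free distribution $r$ is instantiated by the extracted model's prediction $\extracted_{\omega}$; equivalently one can quote the Barber--Agakov bound $I(X;Y)\ge H(X)+\E[\log r(X\mid Y)]$ with $r$ taken to be $\extracted_\omega$. The cross‑entropy term produced is exactly $\E_Q[l(\target_{\theta^*}(Q),\extracted_{\omega}(Q))]$ for $l$ the cross‑entropy loss of the paper (negative log of the extracted model's probability on the target's predicted label), and the residual term is $H(\extracted_{\omega}(Q))$. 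This gives, for every fixed $\omega$, the inequality $I(\Pr(\target_{\theta^*}(Q),Q)\|\Pr(\extracted_{\omega}(Q),Q))\ge -\E_Q[l(\target_{\theta^*}(Q),\extracted_{\omega}(Q))]+H(\extracted_{\omega}(Q))$. Finally, since $\omega^*_{\mathrm{MaxInf}}$ is by construction the maximiser over $\omega$ of the left‑hand side for the given $\mathcal{D}^Q$, instantiating this per‑$\omega$ inequality at the $\omega$ achieving $\max_\omega\{-\E_Q[l]+H(\extracted_\omega(Q))\}$ yields the stated bound, exactly as the $\min_\omega$ is pushed through on the right in Theorem~\ref{thm:ub_distequiv}.

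The step I expect to be the main obstacle is making the variational inequality point the right way and land on precisely these two terms. Any data‑processing‑type manipulation on the Markov chain $\target_{\theta^*}(Q)-Q-\extracted_{\omega}(Q)$ produces an \emph{upper} bound on the mutual information, which is the wrong direction, so one must commit to the Gibbs/Barber--Agakov route and be careful that the free distribution is the extracted model (so the cross‑entropy becomes the loss $l$ and $\omega$ enters differentiably) rather than the target, and that the leftover entropy is reported against the intended reference predicted‑label distribution (marginal versus conditional). This is also where the parallel with Theorem~\ref{thm:ub_distequiv} has to be checked sign‑by‑sign: the upper‑bound argument there \emph{subtracts} an entropy term, whereas here the same algebra must \emph{add} it.
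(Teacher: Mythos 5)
Your proposal is correct and takes essentially the same route as the paper's proof: both isolate the entropy term $H(\extracted_{\omega}(Q))$ from the joint mutual information via non-negativity/monotonicity of entropy, and both rest on the same key inequality — that conditional entropy is dominated by cross-entropy (your Gibbs/Barber--Agakov step is precisely the paper's Lemma relating cross entropy and conditional entropy) — before pushing the maximum over $\omega$ through at the end. The only cosmetic difference is that the paper expands $I$ as $H(X)+H(Y)-H(X,Y)$ and discards non-negative terms rather than invoking the $H(\cdot)-H(\cdot\mid\cdot)$ variational form directly.
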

By variational principle, Theorem~\ref{thm:lb_maxinf} implies that \textit{maximising the lower bound} in the RHS will lead to an extracted model which maximises the mutual information between target and extracted joint distributions for a given query generating distribution.

\noindent\textbf{Distributionally equivalent and Max-Information extractions: A variational optimisation formulation.} From Theorem~\ref{thm:ub_distequiv} and~\ref{thm:lb_maxinf}, we observe that the lower and upper bounds of the objective functions of distribution equivalent and Max-Information attacks are negatives of each other. Specifically, $- \KL(\Pr(\target_{\theta^*}(Q),Q)\|\Pr(\extracted_{\omega^*_{\mathrm{DEq}}}(Q),Q)) \geq \max_{\omega} - F(\omega, \mathcal{D}^Q)$ and $ I(\Pr(\target_{\theta^*}(Q),Q)\|\Pr(\extracted_{\omega^*_{\mathrm{MaxInf}} }(Q),Q))$ $\geq  \max_{\omega} F(\omega, \mathcal{D}^Q)$, where
\begin{align}\hspace*{-1em}
    F(\omega, \mathcal{D}^Q) \triangleq -\E_Q[l(\target_{\theta^*}(Q),\extracted_{\omega}(Q))]+H(\extracted_{\omega}(Q)).
\end{align}
Thus, following a variational approach, we aim to solve an optimisation problem on $F(\omega, \mathcal{D}^Q)$ in an online and frequentist manner. We do not assume a parametric family of $\mathcal{D}^Q$. Instead, we choose a set of queries $Q_t \in \real^d$ at each round $t\in T$. This leads to an empirical counterpart of our problem:\vspace*{-.3em}
\begin{align}\label{eqn:empirical_minmax}
    &\max_{\omega \in \omega, Q_{[0,\horizon]}\in {\querydata}_{[\horizon]}}\hat{F}(\omega, Q_{[0,\horizon]})\triangleq \max_{\omega, Q_{[0,\horizon]}}-\frac{1}{\horizon}\sum_{t=1}^{\horizon}l(\target_{\theta^*}(Q_t),\extracted_{\omega}(Q_t))] +\sum_{t=1}^{\horizon} H(\extracted_{\omega}(Q_t)).\vspace*{-.5em}
\end{align}
As we need to evaluate $\target_{\theta^*}$ for each $Q_t$, we refer $Q_t$'s as \textit{queries}, the dataset $\querydata \subseteq \real^d \times \CY$ from where they are chosen as the \textit{query dataset}, and the corresponding unobserved distribution $\mathcal{D}^Q$ as the \textit{query generating distribution}. Given the optimisation problem of \Eqref{eqn:empirical_minmax}, we propose an algorithm \marich{} to solve it effectively.

\vspace*{-.5em}\section{Marich: A query selection algorithm for model extraction}\label{sec:algo}\vspace*{-.5em}
In this section, we propose an algorithm, \marich{}, to solve Equation~(\ref{eqn:empirical_minmax}) in an adaptive manner.
\begin{algorithm}[b!]
\caption{\marich{}}
\label{alg:algorithm}
\textbf{Input}: Target model: $\target$, Query dataset: $\querydata$, $\#$Classes: $k$\\
\textbf{Parameter}: $\#$initial samples: $n_0$, Training epochs: $E_{max}$, $\#$Batches of queries: $T$, Query budget: $\budget$, Subsampling ratios: $\gamma_1, \gamma_2 \in (0,1]$\\
\textbf{Output}: Extracted model $\extracted$ 
\begin{algorithmic}[1] 
\STATE //* Initialisation of the extracted model*// \newcomment{Phase 1}
\STATE $Q^{train}_0$ $\gets$ $n_0$ datapoints randomly chosen from $D^Q$
\STATE $Y^{train}_0 \gets \target(Q^{train}_0)$ \newcomment{Query the target model $\target$ with $Q^{train}_0$}\label{line:query1}
\FOR{epoch $\gets$ 1 to $E_{max}$}
\STATE $\extracted_0 \gets $ Train $f^E$ with $(Q^{train}_0,Y^{train}_0)$
\ENDFOR
\STATE //* Adaptive query selection to build the extracted model*// \newcomment{Phase 2}
\FOR{$t$ $\gets$ 1 to $\horizon$}
\STATE $Q_t^{entropy} \gets \textsc{{EntropySampling}}(\extracted_{t-1}, \querydata\setminus Q^{train}_{t-1}, \budget)$ 
\STATE $Q^{grad}_t \gets \textsc{EntropyGradientSampling} (\extracted_{t-1},$ $Q_t^{entropy}, \gamma_1 \budget{})$
\STATE $Q^{loss}_t \gets~~~~\textsc{LossSampling}(\extracted_{t-1}, Q^{grad}_t, Q^{train}_{t-1},$ $Y^{train}_{t-1}, \gamma_1 \gamma_2  \budget)$ \label{line:loss}
\STATE $Y^{new}_t \gets \target(Q^{loss}_t)$ \newcomment{Query the target model $\target$ with $Q^{loss}_t$}
\STATE $Q^{train}_t \gets Q^{train}_{t-1} \cup Q^{loss}_t$
\STATE $Y^{train}_t \gets Y^{train}_{t-1} \cup Y^{new}_t$
\FOR{epoch $\gets$ 1 to $E_{max}$}
\STATE $\extracted_{t} \gets$ Train $\extracted_{t-1}$ with $(Q^{train}_t,Y^{train}_t)$
\ENDFOR
\ENDFOR
\STATE \textbf{return}  Extracted model $\extracted \gets \extracted_{\horizon}$
\end{algorithmic}
\end{algorithm}

\noindent\textbf{Algorithm design.} We observe that once the queries $Q_{[0,T]}$ are selected, the outer maximisation problem of Eq.~(\ref{eqn:empirical_minmax}) is equivalent to regualrised loss minimisation. Thus, it can be solved using any standard empirical risk minimisation algorithm (e.g. Adam, SGD). Thus, to achieve query efficiency, we focus on designing a query selection algorithm that selects a batch of queries $Q_t$ at round $t\leq \horizon$:\vspace*{-.5em}
\begin{align}\label{eqn:query_select}
    Q_t \triangleq &\argmax_{Q \in \querydata} \underset{\text{Model-mismatch term}}{\underbrace{-\frac{1}{t}\sum_{i=1}^{t-1}l(\target_{\theta^*}(Q_{i}\cup Q),\extracted_{\omega_{t-1}}(Q_{i}\cup Q))]}}+\underset{\text{Entropy term}}{\underbrace{\sum_{i=1}^{t-1} H(\extracted_{\omega_{t-1}}(Q_{i}\cup Q))}}.
\end{align}
Here, $\extracted_{\omega_{t-1}}$ is the model extracted by round $t-1$.
Equation~(\ref{eqn:query_select}) indicates two criteria to select the queries. With the \textbf{entropy term}, we want to select a query that maximises the entropy of predictions for the extracted model $\extracted_{\omega_{t-1}}$. This allows us to select the queries which are most informative about the mapping between the input features and the prediction space. 
With the \textbf{model-mismatch term}, Eq.~(\ref{eqn:query_select}) pushes the adversary to select queries where the target and extracted models mismatch the most. Thus, minimising the loss between target and extracted models for such a query forces them to match over the whole domain. Algorithm~\ref{alg:algorithm} illustrates a pseudocode of \marich{} (Appendix~\ref{app:pseudocode}).

\noindent\textbf{Initialisation phase.} To initialise the extraction, we select a set of $n_0$ queries, called $Q^{train}_0$, uniformly randomly from the query dataset $\querydata$. We send these queries to the target model and collect corresponding predicted classes $Y^{train}_0$ (Line 3). We use these $n_0$ samples of input-predicted label pairs to construct a primary extracted model $\extracted_0$.

\noindent\textbf{Active sampling.} As the adaptive sampling phase commences, we select $\gamma_1 \gamma_2 \budget$ number of queries at round $t$. To \textit{maximise} the \textbf{entropy term} and \textit{minimise} the \textbf{model-mismatch term} of Eq.~(\ref{eqn:query_select}), we sequentially deploy \textsc{EntropySampling} and \textsc{LossSampling}. To achieve further query-efficiency, we refine the queries selected using \textsc{EntropySampling} by \textsc{EntropyGradientSampling}, which finds the most diverse subset from a given set of queries.
Now, we describe the sampling strategies.

\noindent\textsc{EntropySampling.} First, we aim to select the set of queries which unveil most information about the mapping between the input features and the prediction space. Thus, we deploy \textsc{EntropySampling}. In \textsc{EntropySampling}, we compute the output probability vectors from $\extracted_{t-1}$ for all the query points in $\querydata{}\setminus Q^{train}_{t-1}$ and then select top $\budget$ points with highest entropy: 
$$Q^{entropy} \gets \argmax_{X \subset X_{in}, |X| = \budget} H(\extracted(X_{in})).$$ 
Thus, we select the queries $Q^{entropy}_t$, about which $\extracted_{t-1}$ is most confused and training on these points makes the model more informative.

\noindent\textsc{EntropyGradientSampling.} To be frugal about the number of queries, we refine $Q^{entropy}_t$ to compute the most diverse subset of it. 
First, we compute the gradients of entropy of $\extracted_{t-1}(x)$, i.e. $\nabla_x H(\extracted_{t-1}(x))$, for all $x \in Q^{entropy}_{t}$. 
The gradient at point $x$ reflects the change at $x$ in the prediction distribution induced by $\extracted_{t-1}$.
We use these gradients to embed the points $x \in Q^{entropy}_t$. Now, we deploy K-means clustering to find $k$ (= $\#$classes) clusters with centers $C_{in}$. Then, we sample $\gamma_1 \budget$ points from these clusters: 
$$Q^{grad} \gets\argmin_{X \subset Q_t^{entropy}, |X| = \gamma_1\budget} \sum_{x_i\in X} \sum_{x_j\in C_{in}} \|\nabla_{x_i}H(\extracted(.))-\nabla_{x_j}H(\extracted(.))\|_2^2.$$ 
Selecting from $k$ clusters ensures diversity of queries and reduces them by $\gamma_1$.

\noindent\textsc{LossSampling.} We select points from $Q^{grad}_t$ for which the predictions of $\target_{\theta^*}$ and $\extracted_{t-1}$ are most dissimilar. To identify these points, we compute the loss $l(\target{}(x),\extracted_{t-1}(x))$ for all $x \in Q^{train}_{t-1}$. Then, we select top-$k$ points from $Q^{train}_{t-1}$ with the highest loss values (Line 11), and sample a subset $Q^{loss}_t$ of size $\gamma_1 \gamma_2 \budget$ from $Q^{grad}_t$ which are closest to the $k$ points selected from $Q^{train}_{t-1}$. 
This ensures that $\extracted_{t-1}$ would better align with $\target$ if it trains on the points where the mismatch in predictions are higher.

At the end of Phase 2 in each round of sampling, $Q^{loss}_{t}$ is sent to $\target$ for fetching the labels $Y^{train}_t$ predicted by the target model. We use $(Q^{loss}_t, Y^{loss}_t)$ along with $(Q^{train}_{t-1},Y^{train}_{t-1})$ to train $\extracted_{t-1}$ further. 
Thus, \marich{} performs $n_0 +\gamma_1\gamma_2\budget\horizon$ number of queries through $\horizon + 1$ number of interactions with the target model $\target$ to create the final extracted model $\extracted_{\horizon}$.
We experimentally demonstrate effectiveness of the model extracted by \marich{} to achieve high task accuracy and to act as an informative replica of the target for extracting private information regarding private training data $\traindata$.

\noindent\textbf{Discussions.}  Eq.~(\ref{eqn:query_select}) dictates that the active sampling strategy should try to select queries that maximise the entropy in the prediction distribution of the extracted model, while decreases the mismatch in predictions of the target and the extracted models. We further use the \textsc{EntropyGradientSampling} to choose a smaller but most diverse subset. As Eq.~(\ref{eqn:query_select}) does not specify any ordering between these objectives, one can argue about the sequence of using these three sampling strategies. We choose to use sampling strategies in the decreasing order of runtime complexity as the first strategy selects the queries from the whole query dataset, while the following strategies work only on the already selected queries. We show in Appendix~\ref{sec:runtime} that \textsc{LossSampling} incurs the highest runtime followed by \textsc{EntropyGradientSampling}, while \textsc{EntropySampling} is significantly cheaper.

\begin{remark}
    Previously, researchers have deployed different active sampling methods 
    to efficiently select queries for attacks~\citep{papernot2017practical,chandrasekaran2020exploring,pal2020activethief}. But our derivation shows that the active query selection can be grounded on the objectives of distributionally equivalent and max-information extractions. Thus, though the end result of our formulation, i.e. \marich{}, is an active query selection algorithm, our framework is different and novel with respect to existing active sampling based works.
\end{remark}

\vspace*{-1em}
\section{Experimental analysis}\label{sec:experiments}\vspace*{-.5em}
Now, we perform an experimental evaluation of models extracted by \marich{}. Here, we discuss the experimental setup, the objectives of experiments, and experimental results. We defer the source code, extended results, parametric similarity of the extracted models, effects of model-mismatch, details of different samplings, and hyperparameters to Appendix.

\noindent\textbf{Experimental setup.} We implement a prototype of \marich{} using Python 3.9 and PyTorch 1.12, and run on a NVIDIA GeForce RTX 3090 24 GB GPU. We perform attacks against \textit{four target models} ($\target$), namely Logistic Regression (LR), CNN~\citep{cnn}, ResNet~\citep{he2016deep}, BERT~\citep{devlin2018bert}, trained on \textit{three private datasets} ($\traindata$): MNIST handwritten digits~\citep{deng2012mnist}, CIFAR10~\citep{cifar10} and \href{https://www.kaggle.com/competitions/learn-ai-bbc/data}{BBC News}, respectively. For model extraction, we use EMNIST letters dataset~\citep{emnist}, CIFAR10, {ImageNet}~\citep{deng2009imagenet}, and AGNews~\citep{Zhang2015CharacterlevelCN}, as publicly-available, mismatched query datasets $\querydata$.

To instantiate task accuracy, we compare accuracy of the extracted models $\extracted_{\marich}$ with the target model and models extracted by K-Center (KC)~\citep{seneractive}, Least-Confidence sampling (LC)~\citep{li2006confidence}, Margin sampling (MS)~\citep{balcan2007margin, jiang2019minimum}, Entropy Sampling (ES)~\citep{lewis1994sequential}, and Random Sampling (RS). 
To instantiate informativeness of the extracted models~\citep{nasr2019comprehensive}, we compare the Membership Inference (MI), i.e. MI accuracy and MI agreements ($\%$ and AUC), performed on the target models, and the models extracted using \marich{} and competitors with same query budget. For MI, we use in-built membership attack from IBM ART~\citep{art2018}. For brevity, we discuss Best of Competitors (BoC) against \marich{} for each experiment (except Fig.~\ref{fig: accs}-~\ref{fig:kl})
\begin{figure*}[t!]
\centering
\includegraphics[width=0.7\textwidth,trim={1cm 8cm 1cm 1cm},clip]{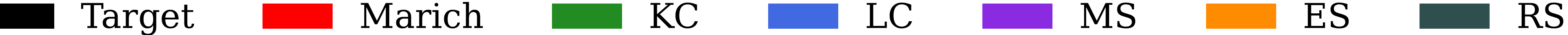}\\\vspace*{-.1em}
\begin{subfigure}{.49\textwidth}
  \centering
  \includegraphics[width=\textwidth]{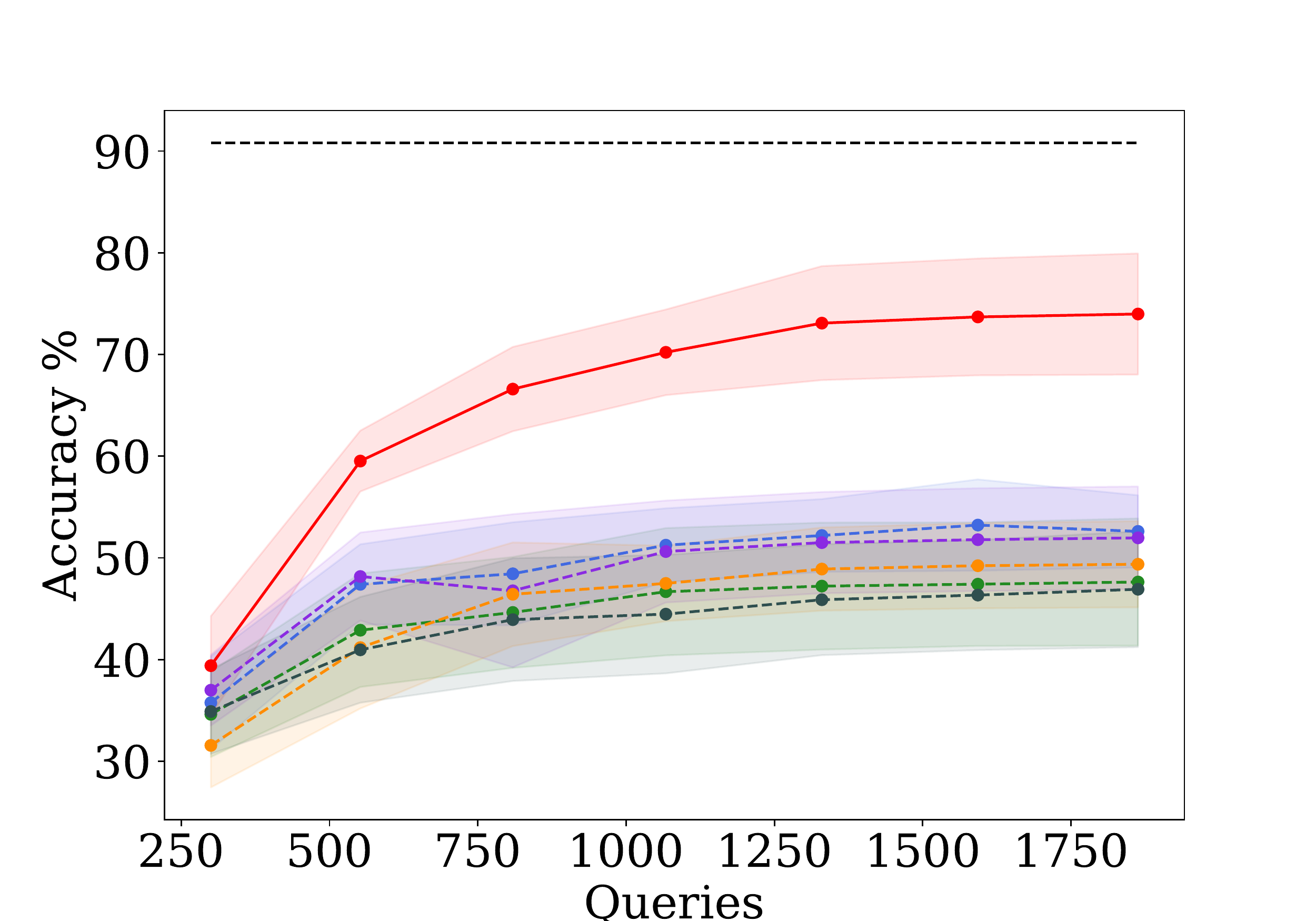}
  \caption{LR/EMNIST query}
  \label{fig:logreg_emnist_acc}
\end{subfigure}\hfill
\begin{subfigure}{.49\textwidth}
  \centering
  \includegraphics[width=\textwidth]{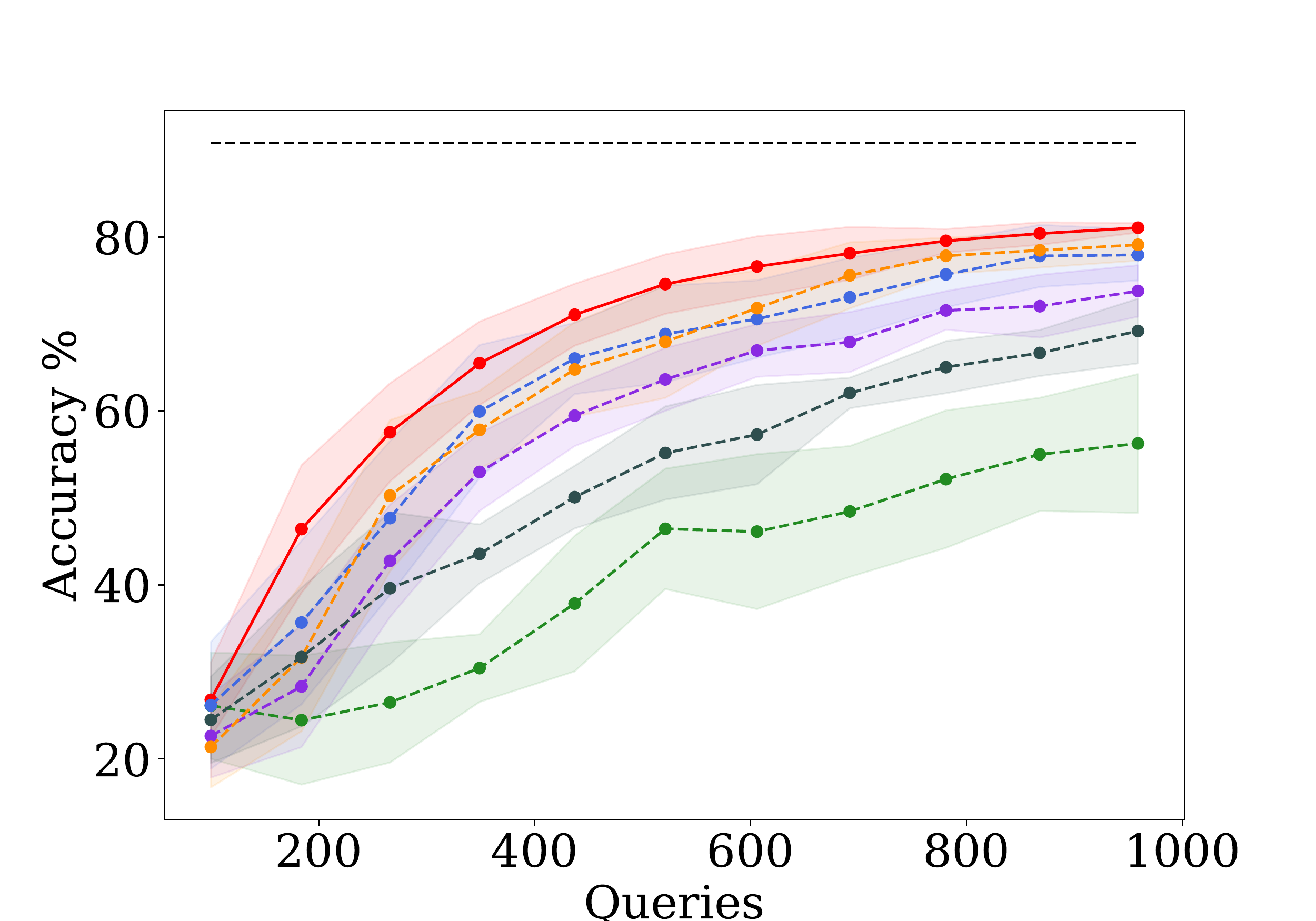}
  \caption{LR/CIFAR10 query}
  \label{fig:logreg_cifar_acc}
\end{subfigure}\\
\begin{subfigure}{.49\textwidth}
  \centering
  \includegraphics[width=\textwidth]{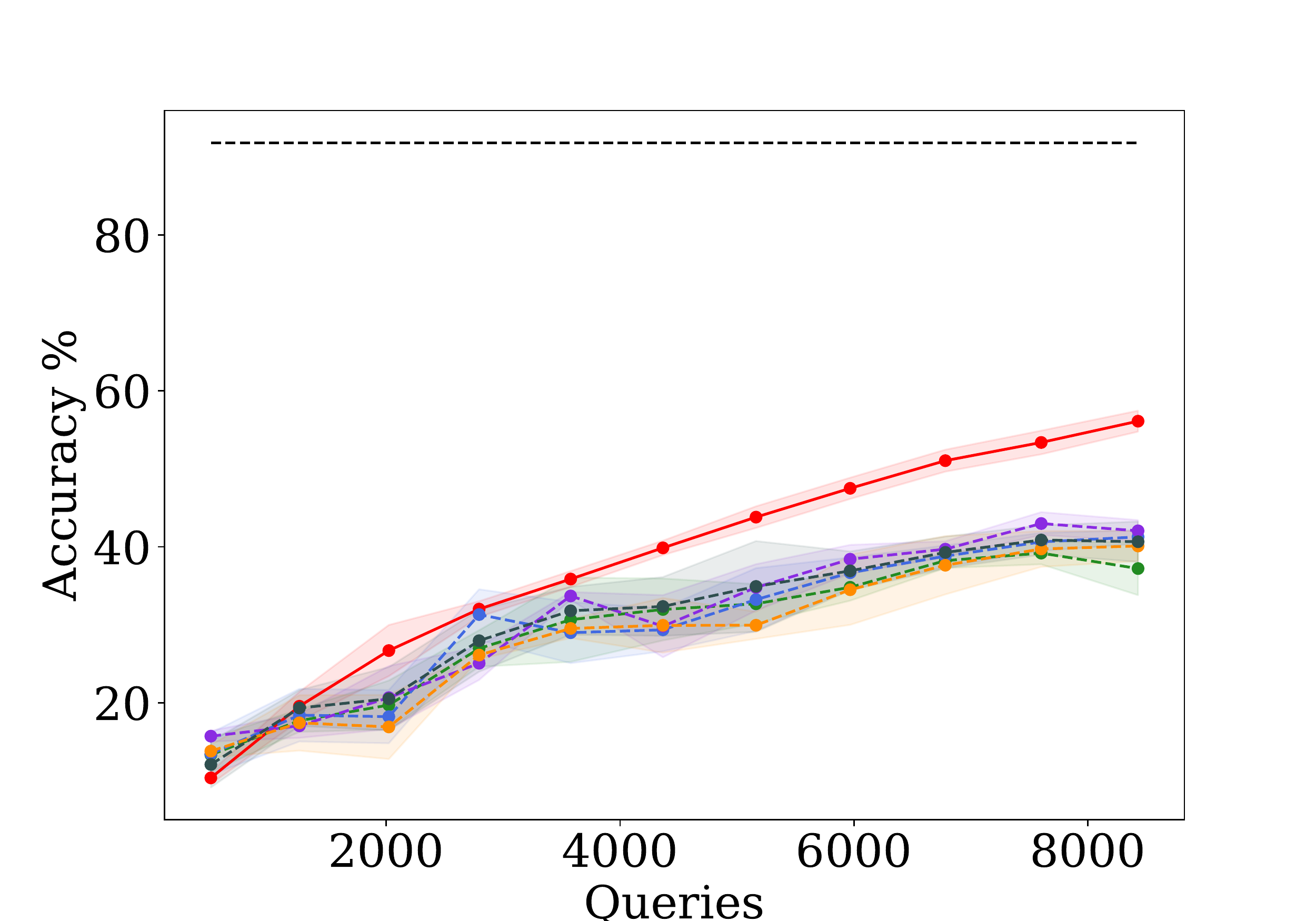}
  \caption{ResNet/ImgNet query}
  \label{fig:resnet_acc}
\end{subfigure}\hfill
\begin{subfigure}{.49\textwidth}
  \centering
  \includegraphics[width=\textwidth]{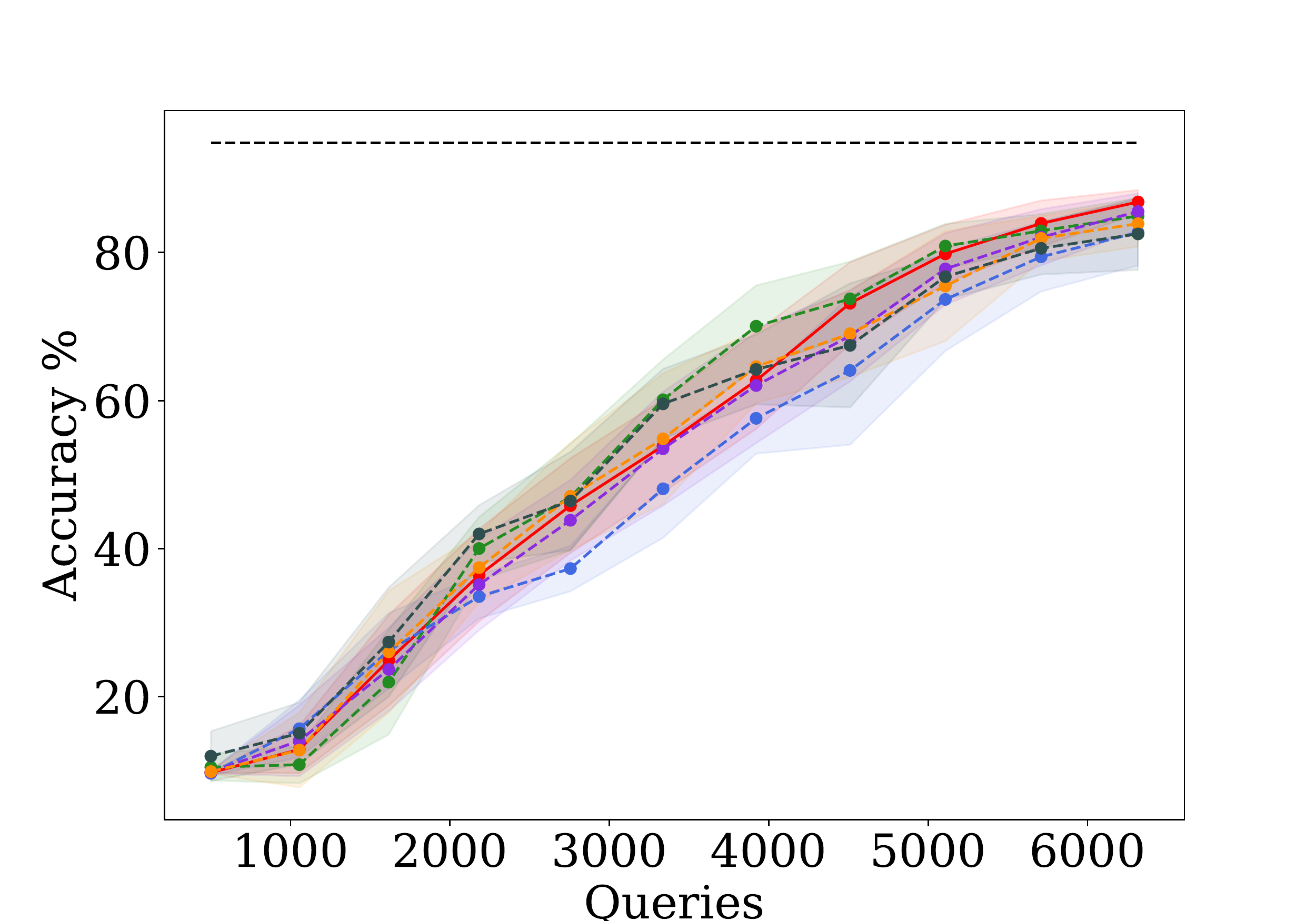}
  \caption{CNN/EMNIST query}
  \label{fig:cnn_acc}
\end{subfigure}
\caption{Accuracy of the extracted models (mean $\pm$ std. over 10 runs) w.r.t. the target model using \marich{}, and competing active sampling methods (KC, LC, MS, ES, RS). Each figure represents (a target model, a query dataset). Models extracted by \marich{} are closer to the target models.}\label{fig: accs}
\end{figure*}

The objectives of the experiments are:

1. \textit{How do the accuracy of the model extracted using \marich{} on the private dataset compare with that of the target model, and RS with same query budget?}

2. \textit{How close are the prediction distributions of the model extracted using \marich{} and the target model? Can \marich{} produce better replica of target's prediction distribution than other active sampling methods, leading to better distributional equivalence?}

3. \textit{How do the models extracted by \marich{} behave under Membership Inference (\emph{MI}) in comparison to the target models, and the models extracted by RS with same budget?} {The MI accuracy achievable by attacking a model acts as a proxy of how informative is the model.}

4. \textit{How does the performance of extracted models change if Differentially Private (DP) mechanisms~\citep{dwork2006calibrating} are applied on target model either during training or while answering the queries?}

\setlength{\textfloatsep}{4pt}
\begin{figure}[t!]
\centering
\begin{minipage}{\textwidth}
\centering
\begin{subfigure}{.49\textwidth}
  \centering
  \includegraphics[width=\textwidth]{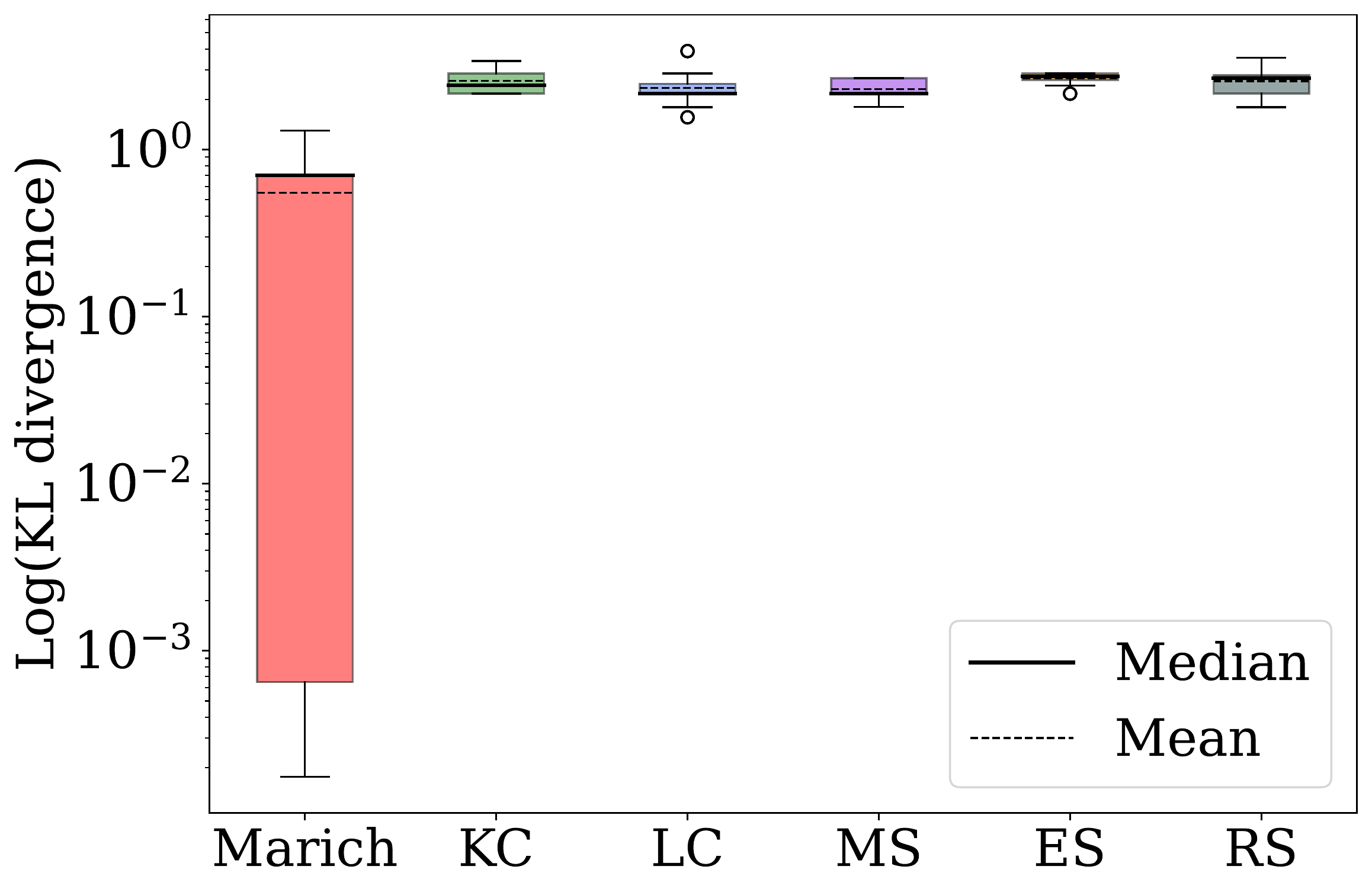}
  \caption{LR with EMNIST}\label{fig:kldiv_lr}\vspace*{-.3em}
\end{subfigure}
\begin{subfigure}{.49\textwidth}
  \centering
 \includegraphics[width=\textwidth]{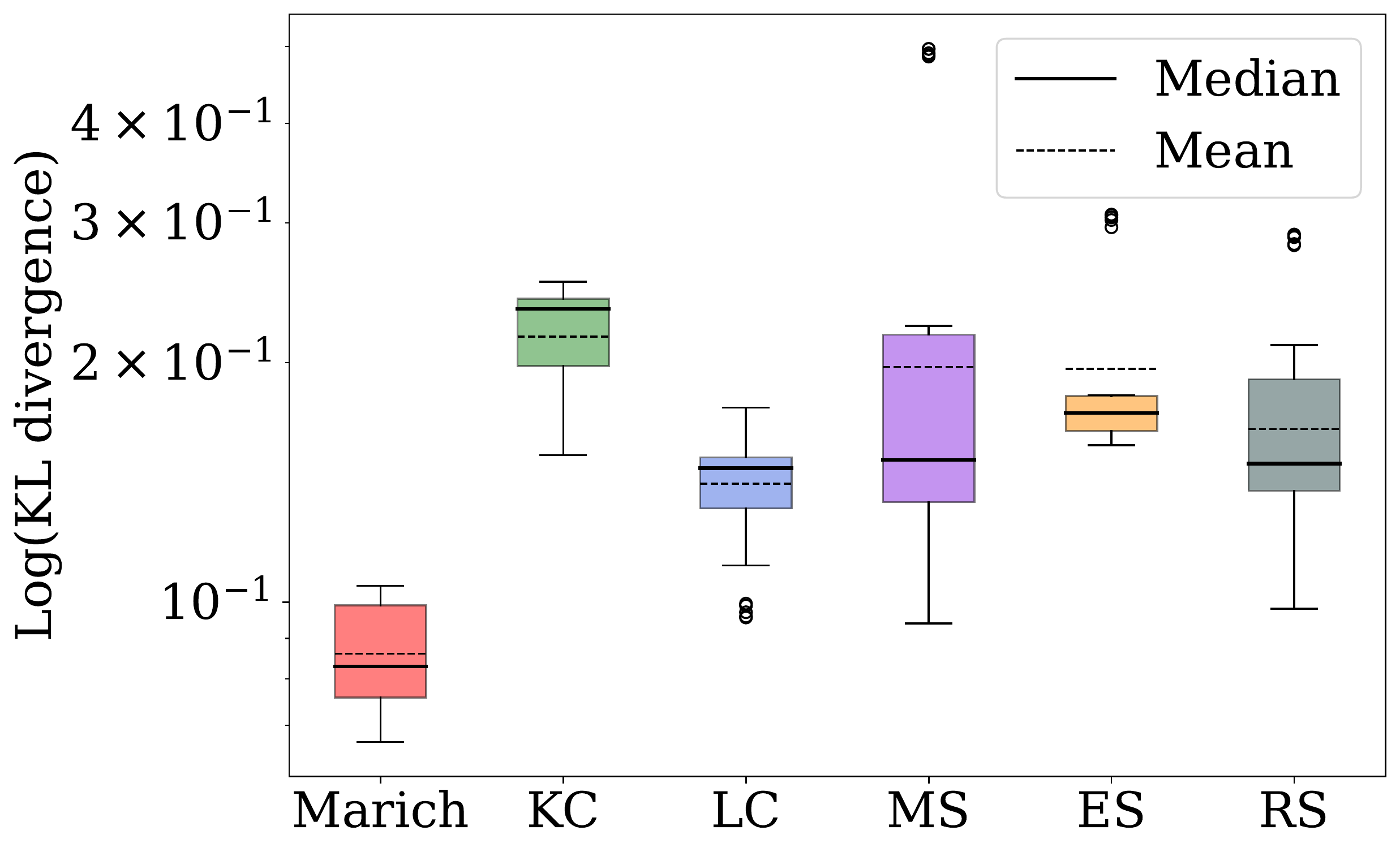}
  \caption{ResNet with ImageNet}\label{fig:kldiv_bert}\vspace*{-.3em}
\end{subfigure}
 \caption{Comparing fidelity of the prediction distributions (in log scale) for different active learning algorithms. \marich{} achieves $2-4\times$ lower KL-divergence than others.}\label{fig:kl}
\end{minipage}\\~\\
\begin{minipage}{\textwidth}
\centering
\begin{subfigure}{.48\textwidth}
  \centering
  \includegraphics[width=0.95\textwidth]{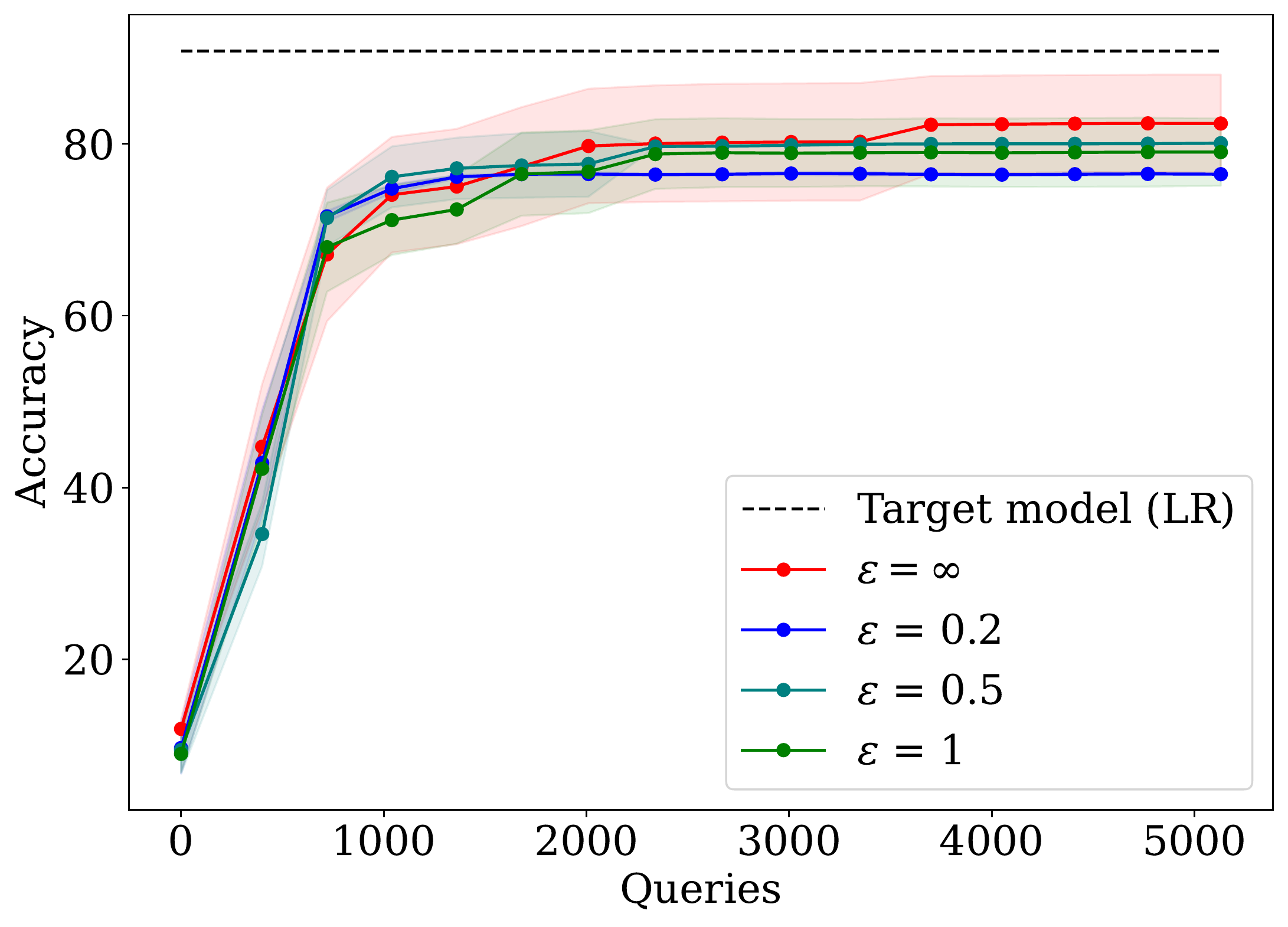}
  \caption{DP-SGD to train target}\label{fig:dp_main_sgd}\vspace*{-0.3em}
\end{subfigure}
\begin{subfigure}{.48\textwidth}
  \centering
 \includegraphics[width=0.95\textwidth]{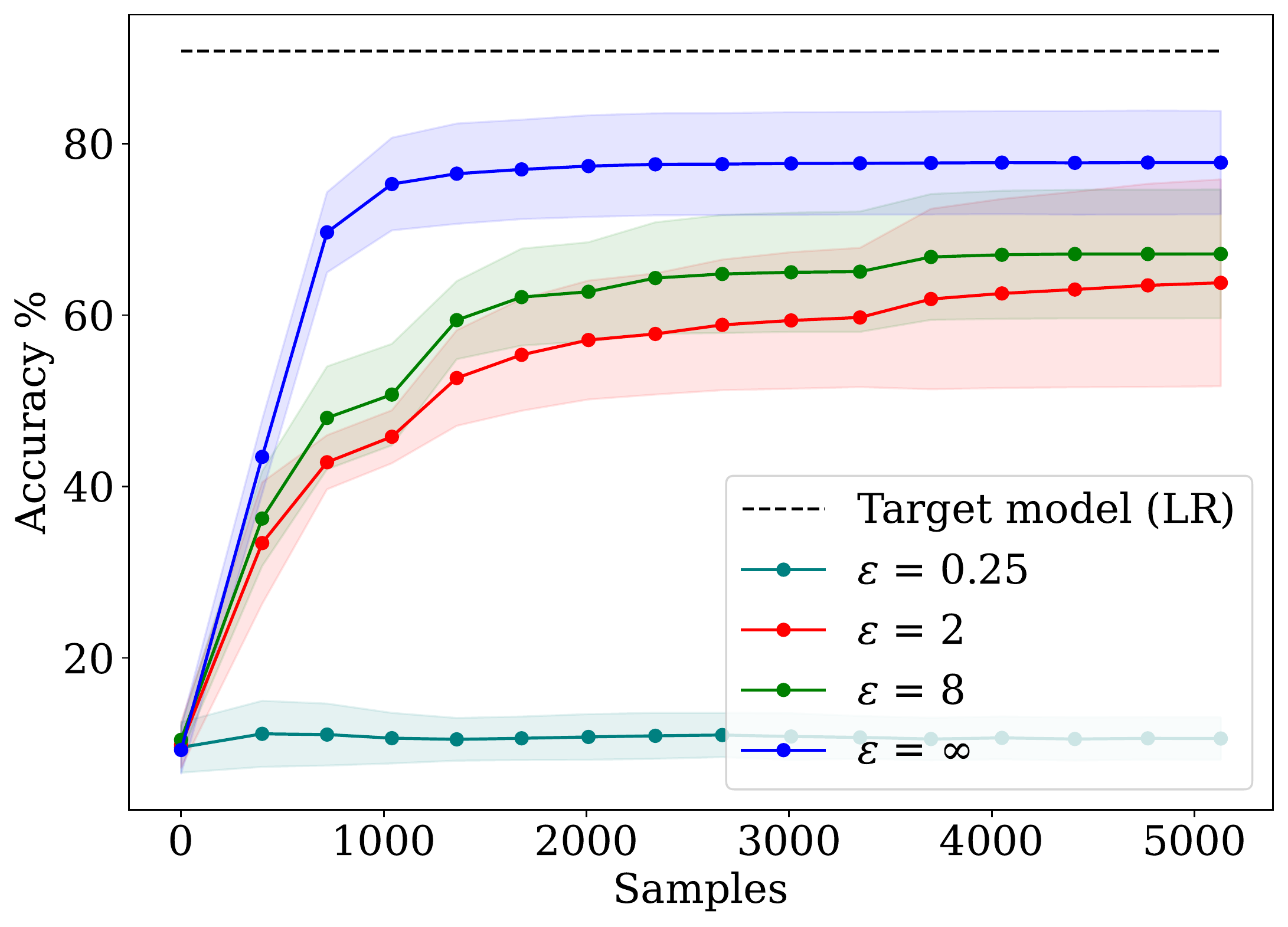}
  \caption{Perturb output of query}\label{fig:dp_main_query}\vspace*{-.3em}
\end{subfigure}
 \caption{Comparing test accuracy of the models extracted by \marich{} against different DP mechanisms (DP-SGD and Output Perturbation) applied on the target model.}\label{fig:dp_main}
 \end{minipage}
\end{figure}


\noindent\textbf{Accuracy of extracted models.} \marich{} extracts LR models with 1,863 and 959 queries selected from EMNIST and CIFAR10, while attacking a target LR model, $\target_{\text{LR}}$ trained on MNIST (test accuracy: $90.82\%$). The models extracted by \marich{} using EMNIST and CIFAR10 achieve test accuracy $73.98\%$ and $86.83\%$ ($81.46\%$ and $95.60\%$ of $\target_{\text{LR}}$), respectively (Figure~\ref{fig:logreg_emnist_acc}-\ref{fig:logreg_cifar_acc}). The models extracted using BoC show test accuracy $52.60\%$ and $79.09\%$ ($57.91\%$ and $87.08\%$ of $\target_{\text{LR}}$), i.e. significantly less than that of \marich.
\marich{} attacks a ResNet, $\target_{\text{ResNet}}$, trained on CIFAR10 (test accuracy: $91.82\%$) with 8,429 queries from ImageNet dataset, and extracts a CNN. The extracted CNN shows $56.11\%$ $(61.10\%$ of $\target_{ResNet}$) test accuracy. But the model extracted using BoC achieves $42.05\%$ $(45.79\%$ of $\target_{ResNet}$) accuracy (Figure~\ref{fig:resnet_acc}). We also attack a CNN with another CNN, which also reflects \marich's improved performance (Figure~\ref{fig:cnn_acc}).
\textit{To verify \marich's effectiveness for text data}, we also attack a BERT, $\target_{BERT}$ trained on BBCNews (test accuracy: $98.65\%$) with queries from the AGNews dataset. By using only 474 queries, \marich{} extracts a model with $85.45\%$ ($86.64\%$ of $\target_{BERT}$) test accuracy. The model extracted by BoC shows test accuracy $79.25\%$ ($80.36\%$ of $\target_{BERT}$).
\textit{For all the models and datasets, \marich{} extracts models that achieve test accuracy closer to target models, and are more accurate than models extracted by the other algorithms.}


\noindent\textbf{Distributional equivalence of extracted models.} One of our aims is to extract a distributionally equivalent model of the target $\target$ using \marich. Thus, in Figure~\ref{fig:kl}, we illustrate the KL-divergence (mean$\pm$std. over 10 runs) between the prediction distributions of the target model and the model extracted by \marich. Due to brevity, we show two cases in the main paper: when we attack i) an LR trained on MNIST with EMNIST with an LR, and ii) a ResNet trained on CIFAR10 with ImageNet with a CNN. In all cases, we observe that the models extracted by \marich{} achieve $\sim 2-4\times$ lower KL-divergence than the models extracted by all other active sampling methods. \textit{These results show that \marich{} is extracts high-fidelity distributionally equivalent models than competing algorithms.}

\begin{table*}[t!]
\centering
\caption{Statistics of accuracy \& membership inference (MI) for different target models, datasets \& attacks. ``-” means member dataset and target model is used. *BoC means Best of Competitors.}\label{tab:table1}\vspace*{-.7em}
\resizebox{\textwidth}{!}{%
\begin{tabular}{|c|c|c|c|c|c|c|c|c|c|}
    \hline
        \textbf{Member dataset} & \textbf{Target model} & \textbf{Query Dataset} & \textbf{Algorithm} & \textbf{Non-member dataset} & \textbf{$\#$Queries} & \textbf{MI acc.} & \textbf{MI agreement} & \textbf{MI agreement AUC} & \textbf{Accuracy } \\ \hline
        \rowcolor{lightgray}
        MNIST & LR & - & - & EMNIST & 50,000 (100\%) & 87.99\% & - & - & 90.82\%  \\ \hline
        \rowcolor{lightgray}
        MNIST & LR & EMNIST & MARICH & EMNIST & 1863 (3.73\%) & \textbf{84.47\%} & \textbf{90.34\%} & \textbf{90.89\%} & \textbf{73.98}\%  \\ \hline
        \rowcolor{lightgray}
        MNIST & LR & EMNIST & BoC* & EMNIST & 1863 (3.73\%) & 78.00\% & 80.11\% & 83.07\% & 52.60\%  \\ \hline
        MNIST & LR & - & - & CIFAR10 & 50,000 (100\%) & 98.02\% & - & - & 90.82\%  \\ \hline
        MNIST & LR & CIFAR10 & MARICH & CIFAR10 & 959 (1.92\%) & \textbf{96.32}\% & \textbf{96.89}\% & \textbf{94.32}\% & \textbf{81.06}\%  \\ \hline
        MNIST & LR & CIFAR10 & BoC* & CIFAR10 & 959 (1.92\%) & 93.70\% & 93.67\% & 91.53\% & 77.93\%  \\ \hline
        \rowcolor{lightgray}
        MNIST & CNN & - & - & EMNIST & 50,000 (100\%) & 89.97\% & - & - & 94.83\%  \\ \hline
        \rowcolor{lightgray}
        MNIST & CNN & EMNIST & MARICH & EMNIST & 6317 (12.63\%) & 90.62\% & 87.27\% & 86.71\% & \textbf{86.83}\%  \\ \hline
        \rowcolor{lightgray}
        MNIST & CNN & EMNIST & BoC* & EMNIST & 6317 (12.63\%) & \textbf{90.73}\% & \textbf{87.53}\% & \textbf{86.97}\% & 82.51\%  \\ \hline
        CIFAR10 & ResNet & - & - & EMNIST & 50,000 (100\%) & 93.61\% & - & - & 91.82\%  \\ \hline
        CIFAR10 & ResNet & ImageNet & MARICH & EMNIST & 8429 (16.58\%) & \textbf{90.40}\% & 93.84\% & \textbf{76.51}\% & \textbf{56.11}\%  \\ \hline
        CIFAR10 & ResNet & ImageNet & BoC* & EMNIST & 8429 (16.58\%) & 90.08\% & \textbf{95.41}\% & 72.94\% & 40.66\%  \\ \hline
        \rowcolor{lightgray}
        BBCNews & BERT & - & ~ & AGNews & 1,490 (100\%) & 98.61\% & - & - & 98.65\%  \\ \hline
        \rowcolor{lightgray}
        BBCNews & BERT & AGNews & MARICH & AGNews & 1,070 (0.83\%) & \textbf{94.42}\% & \textbf{91.02}\% & \textbf{82.62}\% & \textbf{87.01}\%  \\ \hline
        \rowcolor{lightgray}
        BBCNews & BERT & AGNews & BoC* & AGNews & 1,070 (0.83\%) & 89.17\% & 86.93\% & 58.64\% & 76.41\% \\ \hline
\end{tabular}%
}\vspace*{.5em}
\end{table*}

\noindent\textbf{Membership inference with extracted models.} In Table~\ref{tab:table1}, we report \textit{accuracy}, \textit{agreement} in inference with target model, and \textit{agreement AUC} of membership attacks performed on different target models and extracted models with different query datasets. The models extracted using \marich{} demonstrate higher MI agreement with the target models than the models extracted using its competitors in most of the cases. They also achieve MI accuracy close to the target model. \textit{These results indicate that the models extracted by \marich{} act as informative replicas of the target models.}
\noindent\textbf{Performance against privacy defenses.} We test the impact of DP-based defenses deployed in the target model on the performance of \marich{}. 
First, we train four target models on MNIST using \textit{DP-SGD}~\citep{dpsgd} with privacy budgets $\varepsilon = \{ 0.2, 0.5, 1, \infty\}$ and $\delta = 10^{-5}$.
As illustrated in Figure~\ref{fig:dp_main_sgd}, accuracy of the models extracted by querying DP target models are $\sim 2.3-7.4\% $ lower than the model extracted from non-private target models. 
Second, we apply an \textit{output perturbation} method~\citep{dwork2006calibrating}, where a calibrated Laplace noise is added to the responses of the target model against \marich's queries. This ensures $\varepsilon$-DP for the target model.
Figure~\ref{fig:dp_main_query} shows that performance of the extracted models degrade slightly for $\varepsilon = 2, 8$, but significantly for $\varepsilon = 0.25$.
Thus, \textit{performance of \marich{} decreases while operating against DP defenses but the degradation varies depending on the defense mechanism.}

\noindent\textbf{Summary of results.} From the experimental results, we deduce the following conclusions.\\
\noindent\textit{1. Accuracy.} Test accuracy (on the subsets of private datasets) of the models $\extracted_{\marich}$ are higher than the models extracted with the competing algorithms, and are $\sim60-95\%$ of the target models (Fig.~\ref{fig: accs}). This shows effectiveness of \marich{} as a task accuracy extraction attack, while solving distributional equivalence and max-info extractions.

\noindent\textit{2. Distributional equivalence.} We observe that the KL-divergence between the prediction distributions of the target model and $\extracted_{\marich}$ are $\sim2-4\times$ lower than the models extracted by other active sampling algorithms. This confirms that \marich{} conducts more accurate distributionally equivalent extraction than existing active sampling attacks.

\noindent\textit{3. Informative replicas: Effective membership inference.} The agreement in MI achieved by attacking $\extracted_{\marich}$ and the target model in most of the cases is higher than that of the BoC* (Table~\ref{tab:table1}). Also, MI accuracy for $\extracted_{\marich}$'s are $84.74\%-96.32\%$ (Table~\ref{tab:table1}). This shows that the models extracted by \marich{} act as informative replicas of the target model.

\noindent\textit{4. Query-efficiency.} Table \ref{tab:table1} shows that \marich{} uses only $959 - 8,429$ queries from the public datasets, i.e. a small fraction of data used to train the target models. Thus, \marich{} is significantly query efficient, as existing active learning attacks use $10$k queries to commence~\citep[Table 2]{pal2020activethief}.

\noindent\textit{5. Performance against defenses.} Performance of \marich{} decreases with the increasing level of DP applied on the target model, which is expected. But when DP-SGD is applied to train the target, the degradation is little ($\sim 7\%$) even for $\varepsilon=0.2$. In contrast, the degradation is higher when the output perturbation is applied with similar $\varepsilon$ ($0.25$).

\noindent\textit{6. Model-obliviousness and out-of-class data.} By construction, \marich{} is model-oblivious and can use out-of-class public data to extract a target model. To test this flexibility of \marich{}, we try and extract a ResNet trained on CIFAR10 using a different model, i.e. CNN, and out-of-class data, i.e. ImageNet. We show CNNs extracted by \marich{} are more accurate, distributionally close, and also lead to higher MI accuracy that the competitors, validating flexibility of \marich.

\noindent\textit{7. Resilience to mismatch between \traindata and \querydata.}
    For \marich, the datasets \traindata and \querydata can be significantly different. For example, we attack an MNIST-trained model with EMNIST and CIFAR10 as query datasets. MNIST contains handwritten digits, CIFAR10 contains images of aeroplanes, cats, dogs etc., and EMNIST contains handwritten letters. 
    Thus, the data generating distributions and labels are significantly different between the private and query datasets.
    We also attack a CIFAR10-trained ResNet with ImageNet as \querydata. CIFAR10 and ImageNet are also known to have very different labels and images.
    Our experiments demonstrate that Marich can handle data mismatch as well as model mismatch, which is an addendum to the existing model extraction attacks.

\vspace*{-.5em}
\section{Conclusion and future directions}\vspace*{-.5em}
We investigate the design of a model extraction attack against a target ML model (classifier) trained on a private dataset and accessible through a public API. The API returns only a predicted label for a given query. We propose the notions of distributional equivalence extraction, which extends the existing notions of task accuracy and functionally equivalent model extractions. We also propose an information-theoretic notion, i.e. Max-Info model extraction. We further propose a variational relaxation of these two types of extraction attacks, and solve it using an online and adaptive query selection algorithm, \marich{}. \marich{} uses a publicly available query dataset different from the private dataset. We experimentally show that the models extracted by \marich{} achieve $56 - 86\%$ accuracy on the private dataset while using 959 - 8,429 queries. For both text and image data, we demonstrate that the models extracted by \marich{} act as informative replicas of the target models and also yield high-fidelity replicas of the targets' prediction distributions. 
Typically, the functional equivalence attacks require model-specific techniques, while \marich{} is model-oblivious while performing distributional equivalence attack. This poses an open question: is distributional equivalence extraction `easier' than functional equivalence extraction, which is NP-hard~\citep{jagielski2020high}?

\subsubsection*{Acknowledgments}
P. Karmakar acknowledges supports of the programme DesCartes and the National Research Foundation, Prime Minister’s Office, Singapore under its Campus for Research Excellence and Technological Enterprise (CREATE) programme. D. Basu acknowledges the Inria-Kyoto University Associate Team ``RELIANT'', and the ANR JCJC for the REPUBLIC project (ANR-22-CE23-0003-01) for support. We also thank Cyriaque Rousselot for the interesting initial discussions. 

\bibliographystyle{apalike}

\bibliography{main}

\begin{thebibliography}{}

\bibitem[Abadi et~al., 2016]{dpsgd}
Abadi, M., Chu, A., Goodfellow, I., McMahan, H.~B., Mironov, I., Talwar, K.,
  and Zhang, L. (2016).
\newblock Deep learning with differential privacy.
\newblock In {\em Proceedings of the 2016 ACM SIGSAC conference on computer and
  communications security}, pages 308--318.

\bibitem[Balcan et~al., 2007]{balcan2007margin}
Balcan, M.-F., Broder, A., and Zhang, T. (2007).
\newblock Margin based active learning.
\newblock In {\em Learning Theory: 20th Annual Conference on Learning Theory,
  COLT 2007, San Diego, CA, USA; June 13-15, 2007. Proceedings 20}, pages
  35--50. Springer.

\bibitem[Batina et~al., 2018]{csinn2018}
Batina, L., Bhasin, S., Jap, D., and Picek, S. (2018).
\newblock {CSI} neural network: Using side-channels to recover your artificial
  neural network information.
\newblock {\em CoRR}, abs/1810.09076.

\bibitem[Chandrasekaran et~al., 2020]{chandrasekaran2020exploring}
Chandrasekaran, V., Chaudhuri, K., Giacomelli, I., Jha, S., and Yan, S. (2020).
\newblock Exploring connections between active learning and model extraction.
\newblock In {\em 29th USENIX Security Symposium (USENIX Security 20)}, pages
  1309--1326.

\bibitem[Chaudhuri et~al., 2011]{chaudhuri2011differentially}
Chaudhuri, K., Monteleoni, C., and Sarwate, A.~D. (2011).
\newblock Differentially private empirical risk minimization.
\newblock {\em Journal of Machine Learning Research}, 12(3).

\bibitem[Cohen et~al., 2017]{emnist}
Cohen, G., Afshar, S., Tapson, J., and van Schaik, A. (2017).
\newblock Emnist: Extending mnist to handwritten letters.
\newblock In {\em 2017 International Joint Conference on Neural Networks
  (IJCNN)}, pages 2921--2926.

\bibitem[Correia-Silva et~al., 2018]{correia2018copycat}
Correia-Silva, J.~R., Berriel, R.~F., Badue, C., de~Souza, A.~F., and
  Oliveira-Santos, T. (2018).
\newblock Copycat cnn: Stealing knowledge by persuading confession with random
  non-labeled data.
\newblock In {\em 2018 International Joint Conference on Neural Networks
  (IJCNN)}, pages 1--8. IEEE.

\bibitem[Dandekar et~al., 2018]{dandekar2018differential}
Dandekar, A., Basu, D., and Bressan, S. (2018).
\newblock Differential privacy for regularised linear regression.
\newblock In {\em Database and Expert Systems Applications: 29th International
  Conference, DEXA 2018, Regensburg, Germany, September 3--6, 2018,
  Proceedings, Part II}, pages 483--491. Springer.

\bibitem[Dandekar et~al., 2021]{dandekar2021differential}
Dandekar, A., Basu, D., and Bressan, S. (2021).
\newblock Differential privacy at risk: Bridging randomness and privacy budget.
\newblock {\em Proceedings on Privacy Enhancing Technologies}, 1:64--84.

\bibitem[Deng et~al., 2009]{deng2009imagenet}
Deng, J., Dong, W., Socher, R., Li, L.-J., Li, K., and Fei-Fei, L. (2009).
\newblock Imagenet: A large-scale hierarchical image database.
\newblock In {\em 2009 IEEE conference on computer vision and pattern
  recognition}, pages 248--255. Ieee.

\bibitem[Deng, 2012]{deng2012mnist}
Deng, L. (2012).
\newblock The mnist database of handwritten digit images for machine learning
  research.
\newblock {\em IEEE Signal Processing Magazine}, 29(6):141--142.

\bibitem[Devlin et~al., 2018]{devlin2018bert}
Devlin, J., Chang, M.-W., Lee, K., and Toutanova, K. (2018).
\newblock Bert: Pre-training of deep bidirectional transformers for language
  understanding.
\newblock {\em arXiv preprint arXiv:1810.04805}.

\bibitem[Dwork et~al., 2006]{dwork2006calibrating}
Dwork, C., McSherry, F., Nissim, K., and Smith, A. (2006).
\newblock Calibrating noise to sensitivity in private data analysis.
\newblock In {\em Theory of cryptography conference}, pages 265--284. Springer.

\bibitem[Goodfellow et~al., 2016]{goodfellow2016deep}
Goodfellow, I., Bengio, Y., Courville, A., and Bengio, Y. (2016).
\newblock {\em Deep learning}, volume~1.
\newblock MIT Press.

\bibitem[He et~al., 2016]{he2016deep}
He, K., Zhang, X., Ren, S., and Sun, J. (2016).
\newblock Deep residual learning for image recognition.
\newblock In {\em Proceedings of the IEEE conference on computer vision and
  pattern recognition}, pages 770--778.

\bibitem[Huang, 2021]{Huang2021deepal}
Huang, K.-H. (2021).
\newblock Deepal: Deep active learning in python.
\newblock {\em arXiv preprint arXiv:2111.15258}.

\bibitem[Jagielski et~al., 2020]{jagielski2020high}
Jagielski, M., Carlini, N., Berthelot, D., Kurakin, A., and Papernot, N.
  (2020).
\newblock High accuracy and high fidelity extraction of neural networks.
\newblock In {\em 29th USENIX security symposium (USENIX Security 20)}, pages
  1345--1362.

\bibitem[Jiang and Gupta, 2019]{jiang2019minimum}
Jiang, H. and Gupta, M. (2019).
\newblock Minimum-margin active learning.
\newblock {\em arXiv preprint arXiv:1906.00025}.

\bibitem[Juuti et~al., 2019]{juuti2019prada}
Juuti, M., Szyller, S., Marchal, S., and Asokan, N. (2019).
\newblock Prada: protecting against dnn model stealing attacks.
\newblock In {\em 2019 IEEE European Symposium on Security and Privacy
  (EuroS\&P)}, pages 512--527. IEEE.

\bibitem[Killamsetty et~al., 2021a]{killamsetty2021grad}
Killamsetty, K., Durga, S., Ramakrishnan, G., De, A., and Iyer, R. (2021a).
\newblock Grad-match: Gradient matching based data subset selection for
  efficient deep model training.
\newblock In {\em International Conference on Machine Learning}, pages
  5464--5474. PMLR.

\bibitem[Killamsetty et~al., 2021b]{killamsetty2021retrieve}
Killamsetty, K., Zhao, X., Chen, F., and Iyer, R. (2021b).
\newblock Retrieve: Coreset selection for efficient and robust semi-supervised
  learning.
\newblock {\em Advances in Neural Information Processing Systems},
  34:14488--14501.

\bibitem[Kim and Shin, 2022]{kim2022defense}
Kim, Y. and Shin, B. (2022).
\newblock In defense of core-set: A density-aware core-set selection for active
  learning.
\newblock In {\em Proceedings of the 28th ACM SIGKDD Conference on Knowledge
  Discovery and Data Mining}, pages 804--812.

\bibitem[Krizhevsky et~al., 2009]{cifar10}
Krizhevsky, A., Hinton, G., et~al. (2009).
\newblock Learning multiple layers of features from tiny images.

\bibitem[LeCun et~al., 2015]{cnn}
LeCun, Y., Bengio, Y., and Hinton, G. (2015).
\newblock Deep learning.
\newblock {\em nature}, 521(7553):436--444.

\bibitem[Lewis and Gale, 1994]{lewis1994sequential}
Lewis, D. and Gale, W. (1994).
\newblock A sequential algorithmfor training text classifiers.
\newblock In {\em SIGIR'94: Proceedings of the Seventeenth Annual International
  ACM-SIGIR Conference on Research and Development in Information Retrieval,
  organised by Dublin City University}, pages 3--12.

\bibitem[Li and Sethi, 2006]{li2006confidence}
Li, M. and Sethi, I.~K. (2006).
\newblock Confidence-based active learning.
\newblock {\em IEEE transactions on pattern analysis and machine intelligence},
  28(8):1251--1261.

\bibitem[Lowd and Meek, 2005]{lowd2005good}
Lowd, D. and Meek, C. (2005).
\newblock Good word attacks on statistical spam filters.
\newblock In {\em CEAS}, volume 2005.

\bibitem[Milli et~al., 2019]{milli2019model}
Milli, S., Schmidt, L., Dragan, A.~D., and Hardt, M. (2019).
\newblock Model reconstruction from model explanations.
\newblock In {\em Proceedings of the Conference on Fairness, Accountability,
  and Transparency}, pages 1--9.

\bibitem[Mirzasoleiman et~al., 2020]{mirzasoleiman2020coresets}
Mirzasoleiman, B., Bilmes, J., and Leskovec, J. (2020).
\newblock Coresets for data-efficient training of machine learning models.
\newblock In {\em International Conference on Machine Learning}, pages
  6950--6960. PMLR.

\bibitem[Miura et~al., 2021]{miura2021megex}
Miura, T., Hasegawa, S., and Shibahara, T. (2021).
\newblock Megex: Data-free model extraction attack against gradient-based
  explainable ai.
\newblock {\em arXiv preprint arXiv:2107.08909}.

\bibitem[Nasr et~al., 2019]{nasr2019comprehensive}
Nasr, M., Shokri, R., and Houmansadr, A. (2019).
\newblock Comprehensive privacy analysis of deep learning: Passive and active
  white-box inference attacks against centralized and federated learning.
\newblock In {\em 2019 IEEE symposium on security and privacy (SP)}, pages
  739--753. IEEE.

\bibitem[Nicolae et~al., 2018]{art2018}
Nicolae, M.-I., Sinn, M., Tran, M.~N., Buesser, B., Rawat, A., Wistuba, M.,
  Zantedeschi, V., Baracaldo, N., Chen, B., Ludwig, H., Molloy, I., and
  Edwards, B. (2018).
\newblock Adversarial robustness toolbox v1.2.0.
\newblock {\em CoRR}, 1807.01069.

\bibitem[Orekondy et~al., 2019]{orekondy2019knockoff}
Orekondy, T., Schiele, B., and Fritz, M. (2019).
\newblock Knockoff nets: Stealing functionality of black-box models.
\newblock In {\em Proceedings of the IEEE/CVF conference on computer vision and
  pattern recognition}, pages 4954--4963.

\bibitem[Pal et~al., 2021]{pal2021stateful}
Pal, S., Gupta, Y., Kanade, A., and Shevade, S. (2021).
\newblock Stateful detection of model extraction attacks.
\newblock {\em arXiv preprint arXiv:2107.05166}.

\bibitem[Pal et~al., 2020]{pal2020activethief}
Pal, S., Gupta, Y., Shukla, A., Kanade, A., Shevade, S., and Ganapathy, V.
  (2020).
\newblock Activethief: Model extraction using active learning and unannotated
  public data.
\newblock In {\em Proceedings of the AAAI Conference on Artificial
  Intelligence}, volume~34, pages 865--872.

\bibitem[Papernot et~al., 2017]{papernot2017practical}
Papernot, N., McDaniel, P., Goodfellow, I., Jha, S., Celik, Z.~B., and Swami,
  A. (2017).
\newblock Practical black-box attacks against machine learning.
\newblock In {\em Proceedings of the 2017 ACM on Asia conference on computer
  and communications security}, pages 506--519.

\bibitem[Ren et~al., 2021]{ren2021survey}
Ren, P., Xiao, Y., Chang, X., Huang, P.-Y., Li, Z., Gupta, B.~B., Chen, X., and
  Wang, X. (2021).
\newblock A survey of deep active learning.
\newblock {\em ACM computing surveys (CSUR)}, 54(9):1--40.

\bibitem[Ribeiro et~al., 2015]{ribeiro2015mlaas}
Ribeiro, M., Grolinger, K., and Capretz, M.~A. (2015).
\newblock Mlaas: Machine learning as a service.
\newblock In {\em 2015 IEEE 14th International Conference on Machine Learning
  and Applications (ICMLA)}, pages 896--902. IEEE.

\bibitem[Sanyal et~al., 2022]{sanyal2022towards}
Sanyal, S., Addepalli, S., and Babu, R.~V. (2022).
\newblock Towards data-free model stealing in a hard label setting.
\newblock In {\em Proceedings of the IEEE/CVF Conference on Computer Vision and
  Pattern Recognition}, pages 15284--15293.

\bibitem[Sener and Savarese, 2018]{seneractive}
Sener, O. and Savarese, S. (2018).
\newblock Active learning for convolutional neural networks: A core-set
  approach.
\newblock In {\em International Conference on Learning Representations}.

\bibitem[Settles, 2009]{settles2009active}
Settles, B. (2009).
\newblock Active learning literature survey.

\bibitem[Shokri et~al., 2017]{shokri2017membership}
Shokri, R., Stronati, M., Song, C., and Shmatikov, V. (2017).
\newblock Membership inference attacks against machine learning models.
\newblock In {\em 2017 IEEE symposium on security and privacy (SP)}, pages
  3--18. IEEE.

\bibitem[Song and Shmatikov, 2020]{song2020overlearning}
Song, C. and Shmatikov, V. (2020).
\newblock Overlearning reveals sensitive attributes.
\newblock In {\em 8th International Conference on Learning Representations,
  ICLR 2020}.

\bibitem[Staines and Barber, 2012]{staines2012variational}
Staines, J. and Barber, D. (2012).
\newblock Variational optimization.
\newblock {\em arXiv preprint arXiv:1212.4507}.

\bibitem[Tram{\`e}r et~al., 2016]{tramer2016stealing}
Tram{\`e}r, F., Zhang, F., Juels, A., Reiter, M.~K., and Ristenpart, T. (2016).
\newblock Stealing machine learning models via prediction $\{$APIs$\}$.
\newblock In {\em 25th USENIX security symposium (USENIX Security 16)}, pages
  601--618.

\bibitem[Truong et~al., 2021]{dfme}
Truong, J.-B., Maini, P., Walls, R.~J., and Papernot, N. (2021).
\newblock Data-free model extraction.
\newblock In {\em Proceedings of the IEEE/CVF conference on computer vision and
  pattern recognition}, pages 4771--4780.

\bibitem[Yang et~al., 2019]{yang2019xlnet}
Yang, Z., Dai, Z., Yang, Y., Carbonell, J., Salakhutdinov, R.~R., and Le, Q.~V.
  (2019).
\newblock Xlnet: Generalized autoregressive pretraining for language
  understanding.
\newblock {\em Advances in neural information processing systems}, 32.

\bibitem[Yeom et~al., 2018]{yeom2018privacy}
Yeom, S., Giacomelli, I., Fredrikson, M., and Jha, S. (2018).
\newblock Privacy risk in machine learning: Analyzing the connection to
  overfitting.
\newblock In {\em 2018 IEEE 31st computer security foundations symposium
  (CSF)}, pages 268--282. IEEE.

\bibitem[Yousefpour et~al., 2021]{opacus}
Yousefpour, A., Shilov, I., Sablayrolles, A., Testuggine, D., Prasad, K.,
  Malek, M., Nguyen, J., Ghosh, S., Bharadwaj, A., Zhao, J., Cormode, G., and
  Mironov, I. (2021).
\newblock Opacus: {U}ser-friendly differential privacy library in {PyTorch}.
\newblock {\em arXiv preprint arXiv:2109.12298}.

\bibitem[Zhang et~al., 2022]{zhang2022towards}
Zhang, J., Li, B., Xu, J., Wu, S., Ding, S., Zhang, L., and Wu, C. (2022).
\newblock Towards efficient data free black-box adversarial attack.
\newblock In {\em Proceedings of the IEEE/CVF Conference on Computer Vision and
  Pattern Recognition}, pages 15115--15125.

\bibitem[Zhang et~al., 2015]{Zhang2015CharacterlevelCN}
Zhang, X., Zhao, J.~J., and LeCun, Y. (2015).
\newblock Character-level convolutional networks for text classification.
\newblock In {\em NIPS}.

\bibitem[Zhou et~al., 2020]{zhou2020dast}
Zhou, M., Wu, J., Liu, Y., Liu, S., and Zhu, C. (2020).
\newblock Dast: Data-free substitute training for adversarial attacks.
\newblock In {\em Proceedings of the IEEE/CVF Conference on Computer Vision and
  Pattern Recognition}, pages 234--243.

\end{thebibliography}

\newpage
\appendix
\part{Appendix}
\parttoc

\section*{Broader impact}
In this paper, we design a model extraction attack algorithm, \marich{}, that aims to construct a model that has similar predictive distribution as that of a target model. In this direction, we show that popular deep Neural Network (NN) models can be replicated with a few number of queries and only outputs from their predictive API. We also show that this can be further used to conduct membership inference about the private training data that the adversary has no access to. Thus, \marich{} points our attention to the vulnerabilities of the popular deep NN models to preserve the privacy of the users, whose data is used to train the deep NN models. Though every attack algorithm can be used adversarially, our goal is not to promote any such adversarial use. 

Rather, in the similar spirit as that of the attacks developed in cryptography to help us to design better defenses and to understand vulnerabilities of the computing systems better, we conduct this research to understand the extent of information leakage done by an ML model under modest assumptions. We recommend it to be further used and studied for developing better privacy defenses and adversarial attack detection algorithms.


\newpage
\section{Complete pseudocode of \marich}\label{app:pseudocode}
\begin{algorithm}[h!]
\caption{\marich{}}
\textbf{Input}: Target model: $\target$, Query dataset: $\querydata$, $\#$Classes: $k$\\
\textbf{Parameter}: $\#$initial samples: $n_0$, Training epochs: $E_{max}$, $\#$Batches of queries: $T$, Query budget: $\budget$, Subsampling ratios: $\gamma_1, \gamma_2 \in (0,1]$\\
\textbf{Output}: Extracted model $\extracted$ 
\begin{algorithmic}[1] 
\STATE //* Initialisation of the extracted model*// \newcomment{Phase 1}
\STATE $Q^{train}_0$ $\gets$ $n_0$ datapoints randomly chosen from $D^Q$
\STATE $Y^{train}_0 \gets \target(Q^{train}_0)$ \newcomment{Query the target model $\target$ with $Q^{train}_0$}\label{line:query}
\FOR{epoch $\gets$ 1 to $E_{max}$}
\STATE $\extracted_0 \gets $ Train $f^E$ with $(Q^{train}_0,Y^{train}_0)$
\ENDFOR
\STATE //* Adaptive query selection to build the extracted model*// \newcomment{Phase 2}
\FOR{$t$ $\gets$ 1 to $\horizon$}
\STATE $Q_t^{entropy} \gets \textsc{{EntropySampling}}(\extracted_{t-1}, \querydata\setminus Q^{train}_{t-1}, \budget)$
\STATE $Q^{grad}_t \gets \textsc{GradientSampling}(\extracted_{t-1},Q_t^{entropy}, \gamma_1 \budget{})$
\STATE $Q^{loss}_t \gets \textsc{LossSampling}(\extracted_{t-1},Q^{grad}_t, Q^{train}_{t-1}, Y^{train}_{t-1}, \gamma_1 \gamma_2  \budget)$
\STATE $Y^{new}_t \gets \target(Q^{loss}_t)$ \newcomment{Query the target model $\target$ with $Q^{loss}_t$}
\STATE $Q^{train}_t \gets Q^{train}_{t-1} \cup Q^{loss}_t$
\STATE $Y^{train}_t \gets Y^{train}_{t-1} \cup Y^{new}_t$
\FOR{epoch $\gets$ 1 to $E_{max}$}
\STATE $\extracted_{t} \gets$ Train $\extracted_{t-1}$ with $(Q^{train}_t,Y^{train}_t)$
\ENDFOR
\ENDFOR
\STATE \textbf{return}  Extracted model $\extracted \gets \extracted_{\horizon}$
\STATE
\STATE \textbf{EntropySampling}~(extracted model: $\extracted$, input data points: $X_{in}$, budget: $\budget$)
\STATE $Q_{entropy} \gets \argmax_{X \subset X_{in}, |X| = \budget} H(\extracted(X_{in}))$ \label{line:maxent}

\newcomment{Select $B$ points with maximum entropy}
\STATE \textbf{return}  $Q_{entropy}$
\STATE
\STATE \textbf{GradientSampling}~(extracted model: $\extracted$, input data points: $X_{in}$, budget: $\gamma_1 \budget$)
\STATE $E \gets H(\extracted(X_{in}))$
\STATE $G \gets \lbrace\nabla_x E \mid x \in X_{in}\rbrace$
\STATE $C_{in} \gets$ $k$ centres of $G$ computed using K-means
\STATE $Q_{grad} \gets \argmin_{X \subset X_{in}, |X| = \gamma_1\budget} \sum_{x_i\in X} \sum_{x_j\in C_{in}} \|\nabla_{x_i} E-\nabla_{x_j} E\|_2^2$

\newcomment{Select $\gamma_1 \budget$ points from $X_{in}$ whose $\frac{\partial E}{\partial x}$ are closest to that of $C_{in}$}\label{line:grad}
\STATE \textbf{return}  $Q_{grad}$
\STATE
\STATE \textbf{LossSampling}~(extracted model: $\extracted$, input data points: $X_{in}$, previous queries: $Q_{train}$, previous predictions: $Y_{train}$, budget: $\gamma_1 \gamma_2 \budget$)
\STATE $L \gets l(Y_{train},\extracted(Q_{train}))$ \newcomment{Compute the mismatch vector}
\STATE $Q_{mis} \gets \textsc{ArgMaxSort}(L, k)$ \newcomment{Select top-k mismatching points}\label{line:mis2}
\STATE $Q_{loss} \gets \argmin_{X \subset X_{in}, |X| = \gamma_1\gamma_2\budget} \sum_{x_i\in X} \sum_{x_j\in Q_{mis}} \|x_i-x_j\|_2^2$ \label{line:mis}

\newcomment{Select $\gamma_1\gamma_2\budget$ points closest to $Q_{mis}$}
\STATE \textbf{return}  $Q_{loss}$
\end{algorithmic}
\end{algorithm}

\clearpage
\section{Theoretical analysis: Proofs of Section~\ref{sec:formulation}}
In this section, we elaborate the proofs for the Theorems~\ref{thm:ub_distequiv} and~\ref{thm:lb_maxinf}.\footnote{Throughout the proofs, we slightly abuse the notation to write $l(\Pr(X),\Pr(Y))$ as $l(X,Y)$ for avoiding cumbersome equations.}

\begin{reptheorem}{thm:ub_distequiv}[Upper Bounding Distributional Closeness]
If we choose KL-divergence as the divergence function $D$, we can show that 
\begin{align*}
\hspace*{-1.5em}     \KL(\Pr(\target_{\theta^*}(Q),Q)\|\Pr(\extracted_{\omega^*_{\mathrm{DEq}}}(Q),Q)) \leq \min_{\omega}  \E_Q[l(\target_{\theta^*}(Q),\extracted_{\omega}(Q))]-H(\extracted_{\omega}(Q)).
\end{align*}
\end{reptheorem}
\begin{proof}
Let us consider a query generating distribution $\mathcal{D}^Q$ on $\real^d$. A target model $\target_{\theta^*}: \real^d \rightarrow \CY$ induces a joint distribution over the query and the output (or label) space, denoted by $\Pr(\target_{\theta^*}, Q)$. Similarly, the extracted model $\target_{\theta^*}: \real^d \rightarrow \CY$ also induces a joint distribution over the query and the output (or label) space, denoted by $\Pr(\extracted_{\omega}, Q)$.
\begin{align}
&~~~~\KL(\Pr(\target_{\theta^*}(Q),Q)\|\Pr(\extracted_{\omega}(Q),Q))\notag \\ 
&= \int_{Q \in \real^d} \dd\Pr(\target_{\theta^*}(Q),Q) \log \frac{\Pr(\target_{\theta^*}(Q),Q)}{\Pr(\extracted_{\omega}(Q),Q)} \notag \\
&= \int_{Q \in \real^d} \Pr(\target_{\theta^*}(Q)|Q=q)\Pr(Q=q) \log \frac{\Pr(\target_{\theta^*}(Q)|Q=q)}{\Pr(\extracted_{\omega}(Q)|Q=q)} \dd q \notag \\
&= \int_{Q \in \real^d} \Pr(\target_{\theta^*}(Q)|Q=q)\Pr(Q=q) \log {\Pr(\target_{\theta^*}(Q)|Q=q)} \dd q \notag \\
&- \int_{Q \in \real^d} \Pr(\target_{\theta^*}(Q)|Q=q)\Pr(Q=q) \log {\Pr(\extracted_{\omega}(Q)|Q=q)} \dd q \notag \\
&= \int_{Q \in \real^d} \Pr(\target_{\theta^*}(Q)|Q=q)\Pr(Q=q) \log {\Pr(\target_{\theta^*}(Q)|Q=q)} \dd q + \E_{q\sim \mathcal{D^Q}} \left[l(\target_{\theta^*}(q)),\extracted_{\omega}(q))\right]\notag \\
&\leq - H(\target_{\theta^*}(Q) \dd q + \E_{q\sim \mathcal{D^Q}} \left[l(\target_{\theta^*}(q)),\extracted_{\omega}(q))\right] \notag\\
&\leq - H(\extracted_{\omega}(Q) \dd q + \E_{q\sim \mathcal{D^Q}} \left[l(\target_{\theta^*}(q)),\extracted_{\omega}(q))\right] \label{eqn:dataprocessing1}
\end{align}
The last inequality holds true as the extracted model $\extracted_{\omega}$ is trained using the outputs of the target model $\target_{\theta^*}$. Thus, by data-processing inequality, its output distribution possesses less information than that of the target model. Specifically, we know that if $Y=f(X)$, $H(Y) \leq H(X)$.

Now, by taking $\min_{\omega}$ on both sides, we obtain
\begin{align*}
    \KL(\Pr(\target_{\theta^*}(Q),Q)\|\Pr(\extracted_{\omega^*_{\mathrm{DEq}}}(Q),Q)) &\leq \min_{\omega}  \E_Q[l(\target_{\theta^*}(Q),\extracted_{\omega}(Q))]-H(\extracted_{\omega}(Q)).
\end{align*}
Here, $\omega^*_{\mathrm{DEq}} \triangleq \argmin_{\omega} \KL(\Pr(\target_{\theta^*}(Q),Q)\|\Pr(\extracted_{\omega}(Q),Q))$. The equality exists if minima of LHS and RHS coincide.
\end{proof}

\begin{reptheorem}{thm:lb_maxinf}[Lower Bounding Information Leakage]
The information leaked by any Max-Information attack (\Eqref{eqn:maxinf}) is lower bounded as follows:
\begin{align*}
I(\Pr(\target_{\theta^*}(Q),Q)\|\Pr(\extracted_{\omega^*_{\mathrm{MaxInf}} }(Q),Q)) \geq \max_{\omega} -\E_Q[l(\target_{\theta^*}(Q),\extracted_{\omega}(Q))]+H(\extracted_{\omega}(Q)).
\end{align*}
\end{reptheorem}
\begin{proof}
Let us consider the same terminology as the previous proof. Then,
\begin{align}
    &~~~~I(\Pr(\target_{\theta^*}(Q),Q)\|\Pr(\extracted_{\omega}(Q),Q)) \notag \\
    &= H(\target_{\theta^*}(Q),Q) + H(\extracted_{\omega}(Q),Q) - H(\target_{\theta^*}(Q),\extracted_{\omega}(Q),Q)\notag \\
    &= H(\target_{\theta^*}(Q),Q) + H(\extracted_{\omega}(Q),Q) - H(\extracted_{\omega}(Q),Q|\target_{\theta^*}(Q))+H(\target_{\theta^*}(Q))\notag \\
    &\geq H(\extracted_{\omega}(Q),Q)  - H(\extracted_{\omega}(Q),Q|\target_{\theta^*}(Q))\label{eq:thms_in1} \\
    &\geq H(\extracted_{\omega}(Q))  - H(\extracted_{\omega}(Q),Q|\target_{\theta^*}(Q))\label{eq:thms_in2} \\
    &\geq H(\extracted_{\omega}(Q))  - \E_Q[l(\extracted_{\omega}(Q),\target_{\theta^*}(Q))]\label{eq:thms_in3}
\end{align}
The inequality of \Eqref{eq:thms_in1} is due to the fact that entropy is always non-negative. \Eqref{eq:thms_in2} holds true as $H(X,Y) \geq \max\lbrace H(X), H(Y)\rbrace$ for two random variables $X$ and $Y$. The last inequality is due to the fact that conditional entropy of two random variables $X$ and $Y$, i.e. $H(X|Y)$, is smaller than or equal to their cross entropy, i.e. $l(X,Y)$ (Lemma~\ref{lemm:cross_conditional}).

Now, by taking $\max_{\omega}$ on both sides, we obtain
\begin{align*}
    I(\Pr(\target_{\theta^*}(Q),Q)\|\Pr(\extracted_{\omega^*_{\mathrm{MaxInf}}}(Q),Q)) &\leq \max_{\omega} -\E_Q[l(\target_{\theta^*}(Q),\extracted_{\omega}(Q))]+H(\extracted_{\omega}(Q)).
\end{align*}
Here, $\omega^*_{\mathrm{MaxInf}} \triangleq \argmax_{\omega} I(\Pr(\target_{\theta^*}(Q),Q)\|\Pr(\extracted_{\omega^*_{\mathrm{MaxInf}}}(Q),Q))$. The equality exists if maxima of LHS and RHS coincide.
\end{proof}
\begin{lemma}[Relating Cross Entropy and Conditional Entropy]\label{lemm:cross_conditional}
Given two random variables $X$ and $Y$, conditional entropy 
\begin{align}
    H(X|Y) \leq l(X,Y).
\end{align} 
\end{lemma}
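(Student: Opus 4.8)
The plan is to prove that $H(X|Y) \leq l(X,Y)$, where $l(X,Y)$ denotes the cross entropy $-\E_{x,y}[\log \Pr(X=x \mid Y=y)]$ under the true joint distribution, or more precisely the cross entropy between the conditional distribution of $X$ given $Y$ and whatever reference conditional is being used in the loss. The cleanest route is to recognize that the gap between cross entropy and conditional entropy is a (conditional) KL divergence, which is nonnegative by Gibbs' inequality.

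First I would fix notation: write $p(x|y)$ for the true conditional distribution and $q(x|y)$ for the distribution appearing inside the loss/cross-entropy term (in the application $q$ is the extracted model's predictive distribution and $p$ is induced by the target). Then by definition,
\begin{align*}
l(X,Y) - H(X|Y) &= -\E_{y}\E_{x \sim p(\cdot|y)}[\log q(x|y)] + \E_{y}\E_{x \sim p(\cdot|y)}[\log p(x|y)] \\
&= \E_{y}\left[\sum_{x} p(x|y)\log \frac{p(x|y)}{q(x|y)}\right] = \E_{y}\left[\KL(p(\cdot|y)\,\|\,q(\cdot|y))\right].
\end{align*}
Since KL divergence is nonnegative (Gibbs' inequality / Jensen applied to $-\log$), every summand is $\geq 0$, so the expectation is $\geq 0$, giving $H(X|Y) \leq l(X,Y)$. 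In the special case where the loss is the ``self'' cross entropy $l(X,Y) = -\E[\log p(X|Y)]$, this reduces to the identity $H(X|Y) = l(X,Y)$, so the inequality is tight; I would mention this to reassure the reader that the bound is not lossy in the degenerate case.

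The main obstacle — really the only subtlety — is making sure the quantity $l(X,Y)$ in the statement is interpreted consistently with how it is used in the proof of Theorem~\ref{thm:lb_maxinf}, since the paper abuses notation by writing $l(\Pr(X),\Pr(Y))$ as $l(X,Y)$. Concretely, in \Eqref{eq:thms_in3} the term is $\E_Q[l(\extracted_\omega(Q),\target_{\theta^*}(Q))]$, which should be read as the cross entropy of the target's conditional label distribution relative to the extracted model's, i.e. $-\E_Q\E_{\hat y \sim \target}[\log \Pr(\extracted_\omega(Q) = \hat y)]$; then $X$ plays the role of $\target_{\theta^*}(Q)$, $Y$ of $Q$, and the ``other'' distribution is the extracted model's. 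I would state the lemma with this reading explicit, apply the one-line KL-nonnegativity argument above, and note that if one instead wants the purely information-theoretic statement $H(X|Y) \le H(X)$ combined with cross-entropy $\geq$ entropy, it follows from the same Gibbs inequality plus the chain rule. No heavy machinery is needed; the whole proof is two or three lines once the cross-entropy term is unpacked.
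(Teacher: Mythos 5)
Your proof is correct, but it runs along a different decomposition than the paper's. The paper takes its own literal definition of $l(X,Y)$ as the cross entropy between the \emph{marginal} laws $P_X$ and $P_Y$ and writes $l(X,Y) = H(X) + \KL(P_X\|P_Y) = H(X|Y) + I(X;Y) + \KL(P_X\|P_Y)$, then discards the two nonnegative terms $I(X;Y)$ and $\KL(P_X\|P_Y)$; the key ingredients are the chain rule $H(X)=H(X|Y)+I(X;Y)$ together with Gibbs' inequality. You instead read $l(X,Y)$ as a \emph{conditional} cross entropy against a reference conditional $q(\cdot\,|\,y)$ and identify the gap as a single averaged divergence, $l(X,Y) - H(X|Y) = \E_y\left[\KL\left(p(\cdot|y)\,\|\,q(\cdot|y)\right)\right] \geq 0$. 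These are bounds on genuinely different quantities: under the paper's definition there is no second distribution $q$ in play, whereas under yours there is no $I(X;Y)$ term in the slack. What your reading buys is a tighter match to how the lemma is actually invoked in \Eqref{eq:thms_in3}, where $l$ is the per-query cross-entropy loss between the target's and extracted model's prediction distributions — you correctly flag this notational ambiguity, which the paper glosses over — and it exhibits the slack as one interpretable KL term that vanishes when $q=p$. What the paper's version buys is consistency with its own stated definition of $l(X,Y)$ and a proof that needs only marginal quantities. Your closing remark (entropy plus chain rule via the same Gibbs inequality) is essentially the paper's argument, so you have in effect supplied both routes; either one is a sound two-line proof.
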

\begin{proof}
Here, $H(X|Y) \triangleq - \int \Pr(x,y) \log \frac{\Pr(x,y)}{\Pr(y)} \mathrm{d}\nu_1(X)\mathrm{d}\nu_2(Y)$ and $l(X,Y) \triangleq l(\Pr(X),\Pr(Y))= - \int \Pr(x) \ln \Pr(y) \mathrm{d}\nu_1(X)\mathrm{d}\nu_2(Y)$ denotes the cross-entropy, given reference measures $\nu_1$ and $\nu_2$.
\begin{align*}
    l(X,Y) &= H(X) + \KL(\Pr(X)\|\Pr(Y))\\
    &= H(X|Y) + I(X;Y) + \KL(P_X\|P_Y) \\
    &\geq H(X|Y)
\end{align*}
The last inequality holds as both mutual information $I$ and KL-divergence $\KL$ are non-negative functions for any $X$ and $Y$.
\end{proof}

\newpage
\section{A review of active sampling strategies}\label{app:activel}
\paragraph{Least Confidence sampling (LC).}
Least confidence sampling method~\citep{settles2009active, li2006confidence} iteratively selects the subset of $k$ data points from a data pool, which are most uncertain at that particular instant. The uncertainty function ($u(\cdot|f_{\omega}): \mathcal{X} \rightarrow [0,1]$) is defined as
\[u(x|f_{\omega}) \triangleq 1 - \Pr(\hat{y}|x) , \] 
where $\hat{y}$ is the predicted class by a model $f_{\omega}$ for input $x$.

\paragraph{Margin Sampling (MS).}
In margin sampling~\citep{jiang2019minimum}, a subset of $k$ points is selected from a data pool, such that the subset demonstrates the minimum margin, where $\mathrm{margin}(\cdot|f_{\omega}) : \mathcal{X} \rightarrow [0,1]$ is defined as
\[
\mathrm{margin}(x|f_{\omega}) \triangleq \Pr(\widehat{y}_1(x)|x, f_{\omega}) - \Pr(\widehat{y}_2(x)|x, f_{\omega}),
\]
where $f_{\omega}$ is the model, and $\hat{y}_1(x)$ and $\hat{y}_2(x)$ are respectively the highest and the second highest scoring classes returned by $f_{\omega}$.

\paragraph{Entropy Sampling (ES).}
Entropy sampling, also known as uncertainty sampling~\citep{lewis1994sequential}, iteratively selects a subset of $k$ datapoints with the highest uncertainty from a data pool. The uncertainty is defined by the entropy function of the prediction vector, and is computed using all the probabilities returned by the model $f_{\omega}$ for a datapoint $x$. For a given point $x$ and a model $f_{\omega}$, entropy is defined as
\[
\mathrm{entropy}(x|f_{\omega}) \triangleq -\sum_{a=1}^{|\mathcal{Y}|} p_a \log(p_a),
\]
where $p_a = \Pr[f_{\omega}(x) = a]$ for any output class $a \in \{1, \ldots, |\mathcal{Y}|\}$.
\cite{lewis1994sequential} mention that while using this strategy, ``\textit{the initial classifier plays an important role, since without it there may be a long period of random sampling before examples of a low frequency class are stumbled upon}". This is similar to our experimental observation that ES often demonstrate high variance in its outcomes.

\paragraph{Core-set Algorithms.}
Here, we discuss some other interesting works in active learning using core-sets, and the issues to directly implement them in our problem.

\cite{killamsetty2021retrieve} aims to identify a subset of training dataset for training a specific model. The algorithm needs \textit{white-box access to the model}, and also needs to do a forward pass over the whole training dataset. 
A white-box sampling algorithm and the assumption of being able to retrieve predictions over the whole training dataset are not feasible in our problem setup.
Relaxing the white-box access, \cite{killamsetty2021grad} proposes to us the average loss over the training dataset or a significantly diverse validation set. Then, the gradient of loss on this training or validation set is compared with that of the selected mini-batch of data points. In a black-box attack, we do not have access to average loss over a whole training or validation dataset. Thus, it is not feasible to deploy the proposed algorithm.

Following another approach, \cite{mirzasoleiman2020coresets} proposes an elegant pre-processing algorithm to select a core-set of a training dataset. This selection further leads to an efficient incremental gradient based training methodology. But in our case, we sequentially query the black-box model to obtain a label for a query point and use them further for training the extracted model. Thus, creating a dataset beforehand and using them further to pre-process will not lead to an adaptive attack and also will not reduce the query budget. Thus, it is out of scope of our work.
\cite{kim2022defense} deploys an auxiliary classifier to first use a subset of labelled datapoints to create low dimensional embeddings. Then according to the query budget, it chooses the points from the sparse regions from each cluster found from the low dimensional embeddings. This is another variant of uncertainty sampling that hypothesises the points from the sparse region are more informative in terms of loss and prediction entropy. This design technique is at the same time model specific, and thus incompatible to our interest. Additionally, this approach also increases the requirement of queries to the target model.

\paragraph{K-Center sampling (KC).}
Thus, we focus on a version of core-set algorithm, namely K-Center sampling, that is applicable in our context. K-Center sampling as an active learning algorithm that has been originally proposed to train CNNs sample-efiiciently~\citep{seneractive}. For a given data pool, K-Center sampling iteratively selects the $k$ datapoints that minimise the core set loss (see Equation 3,~\citep{seneractive}) the most for a given model $f_{\omega}$. In this method, the embeddings (from the model under training) of the data points are used as the representative vectors, and K-Center algorithm is applied on these representative vectors.

\paragraph{Random Sampling (RS).} In random sampling, a subset of $k$ datapoints are selected from a data pool uniformly at random.

In our experiments, for query selection at time $t$, the extracted model at time $t-1$, i.e. $\extracted_{t-1}$, is used as $f_{\omega}$, and data pool at time $t$ is the corresponding query dataset, except the datapoints that has been selected before step $t$. Hereafter, we deploy a modified version of the framework developed by~\cite{Huang2021deepal} to run our experiments with the aforementioned active learning algorithms.

\newpage
\section{Extended experimental analysis}
In this section, we step-wise elaborate further experimental setups and results that we skipped for the brevity of space in the main draft. 
Specifically, we conduct our experiments in six experimental setups. Each experimental setup corresponds to a triplet (target model architecture trained on a private dataset, extracted model architecture, query dataset). Here, we list these six experimental setups in detail
\begin{enumerate}[leftmargin=*]
    \item A Logistic Regression (LR) model trained on MNIST, a LR model for extraction, EMNIST dataset for querying
    \item A Logistic Regression (LR) model trained on MNIST, a LR model for extraction, CIFAR10 dataset for querying
    \item A CNN model trained on MNIST, a CNN model for extraction, EMNIST dataset for querying
    \item A ResNet model trained on CIFAR10, a CNN model for extraction, ImageNet dataset for querying
    \item A ResNet model trained on CIFAR10, a ResNet18\footnote{We begin with a pre-trained ResNet18 model from 
\url{https://pytorch.org/vision/main/models/generated/torchvision.models.resnet18.html}} model for extraction, ImageNet dataset for querying
    \item A BERT model\footnote{We use the pre-trained BERT model from \url{https://huggingface.co/bert-base-cased}} trained on BBCNews, a BERT model for extraction, AGNews dataset for querying
\end{enumerate}
For each of the experimental setups, we evaluate five types of performance evaluations, which are elaborated in Section~\ref{sec:app_acc},~\ref{sec:app_kl},~\ref{sec:app_agreement},~\ref{sec:app_fidel}, and~\ref{sec:app_mi}.
While each of the following sections contain illustrations of the different performance metrics evaluating efficacy of the attack and corresponding discussions, Table~\ref{tab:table2}-~\ref{tab:table4} contain summary of all queries used, accuracy, and membership inference statistics for all the experiments.

\vspace*{-.5em}
\subsection{Test accuracy of extracted models}\label{sec:app_acc}\vspace*{-.5em}
Test accuracy of the extracted model and its comparison with the test accuracy of the target model on a subset of the private training dataset, which was used by neither of these models, is the most common performance metric used to evaluate the goodness of the attack algorithm. The attacks designed solely to optimise this performance metric are called the task accuracy model extraction attacks~\citep{jagielski2020high}.

With \marich{}, we aim to extract models that have prediction distributions closest to that of the target model. Our hypothesis is constructing such a prediction distribution lead to a model that also has high accuracy on the private test dataset, since accuracy is a functional property of the prediction distribution induced by a classifier.
In order to validate this hypothesis, we compute test accuracies of the target models, and models extracted by \marich{} and other active sampling algorithms in six experimental setups. We illustrate the evolution curves of accuracies over increasing number of queries in Figure~\ref{fig: all_acc}.

To compare \marich{} with other active learning algorithms, we attack the same target models using K-centre sampling, Least Confidence sampling, Margin Sampling, Entropy Sampling, and Random Sampling algorithms (ref. Appendix~\ref{app:activel}) using the same number of queries as used for \marich{} in each setup. 

From Figure~\ref{fig: all_acc}, we observe that \textbf{in most of the cases \marich{} outperforms all other competing algorithms}.

In this process, \marich{} uses $\sim 500 - 8000$ queries, which is a small fraction of the corresponding query datasets. This also indicates towards the query-efficiency of \marich{}.



\begin{figure}[h!]
\centering
\includegraphics[width=0.9\textwidth,trim={0cm 7cm 0cm 0.5cm},clip]{figures_new/legend.pdf}\\\vspace*{-1em}
\begin{subfigure}{.44\textwidth}
  \centering
  \includegraphics[width=\textwidth]{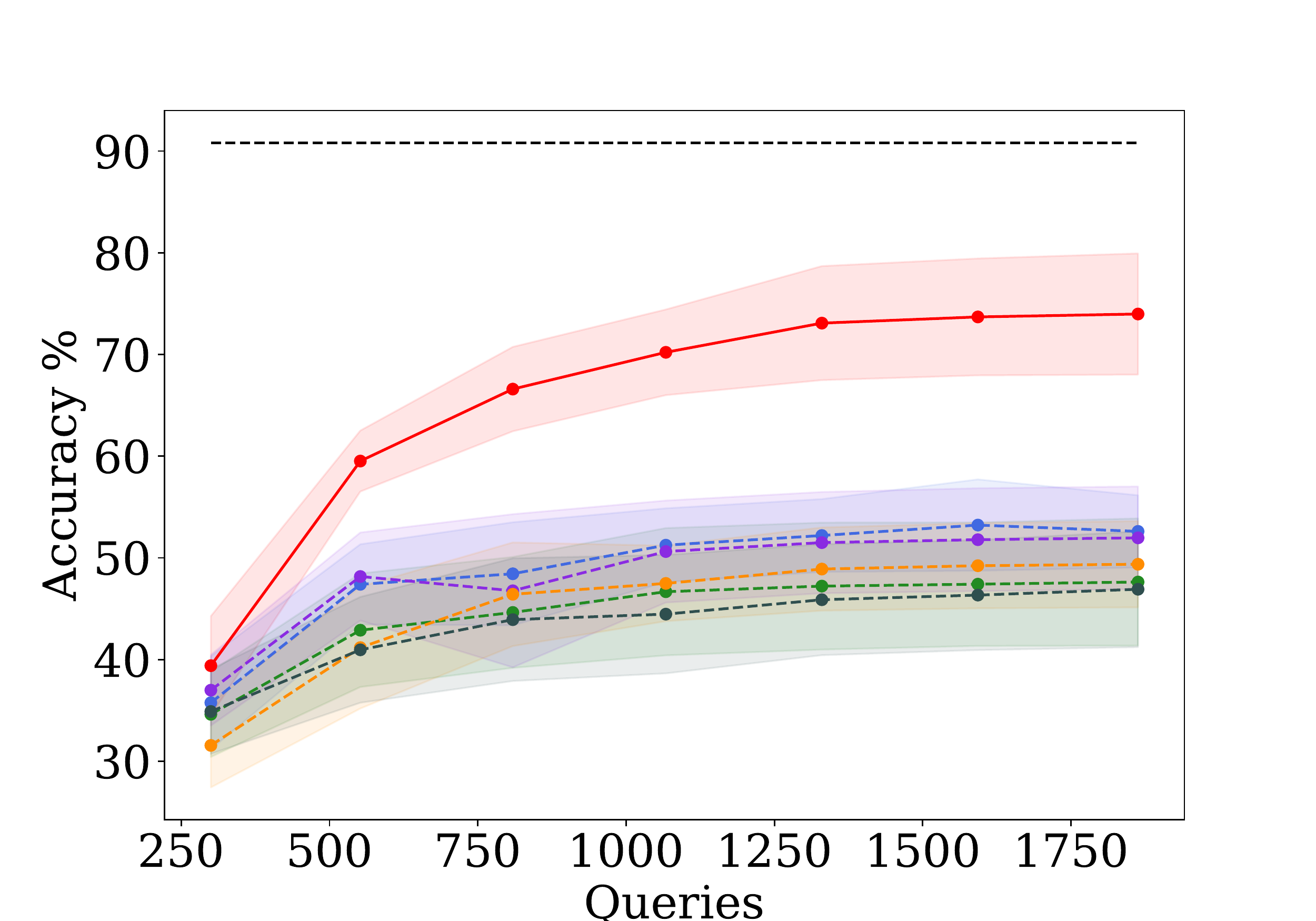}
  \caption{Logistic regression with EMNIST queries}
  \label{fig:utility_landscape_Shapley}
\end{subfigure}
\begin{subfigure}{.44\textwidth}
  \centering
  \includegraphics[width=\textwidth]{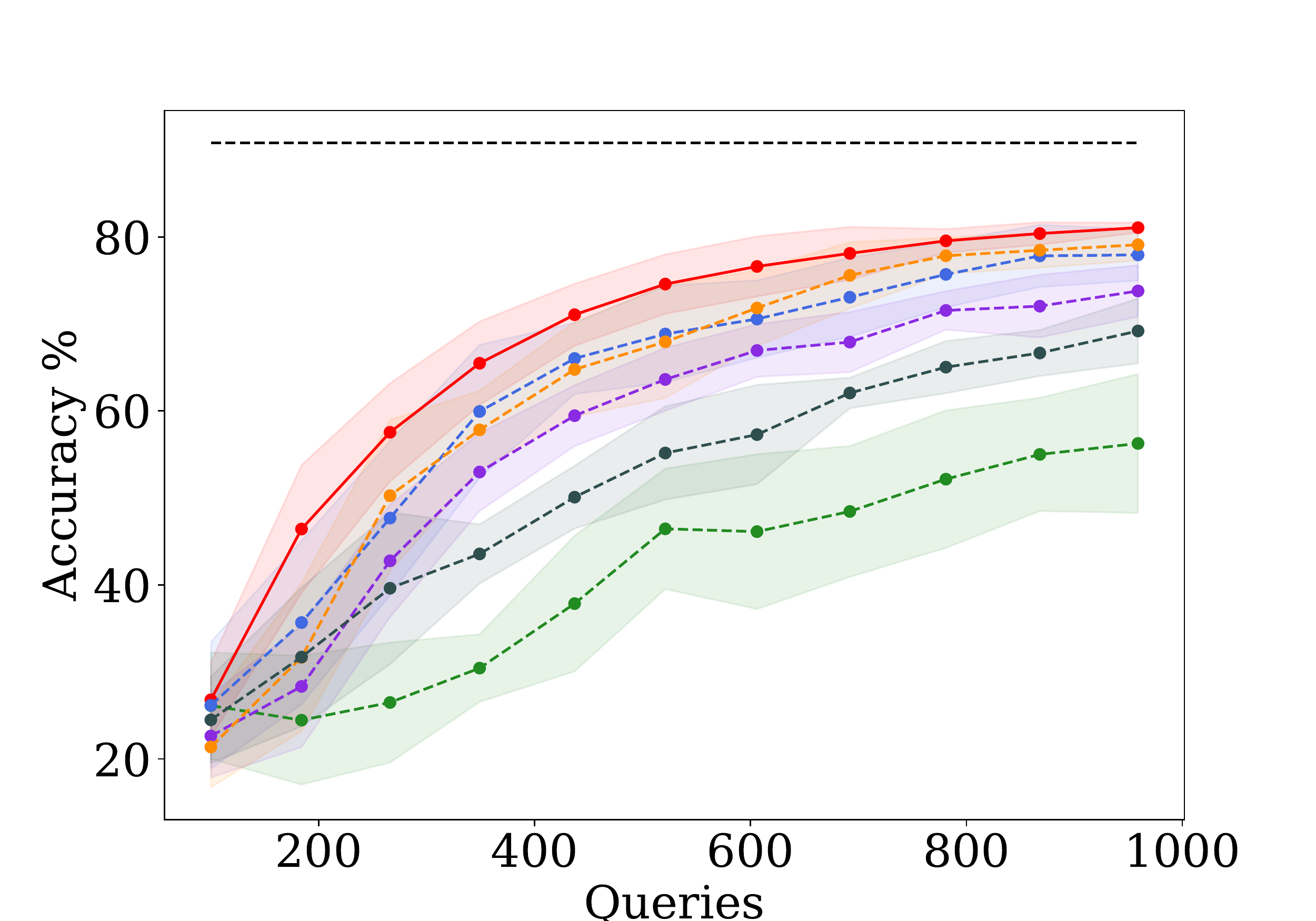}
  \caption{Logistic regression with CIFAR10 queries}
  \label{fig:utility_landscape_EMC}
\end{subfigure}
\begin{subfigure}{.44\textwidth}
  \centering
  \includegraphics[width=\textwidth]{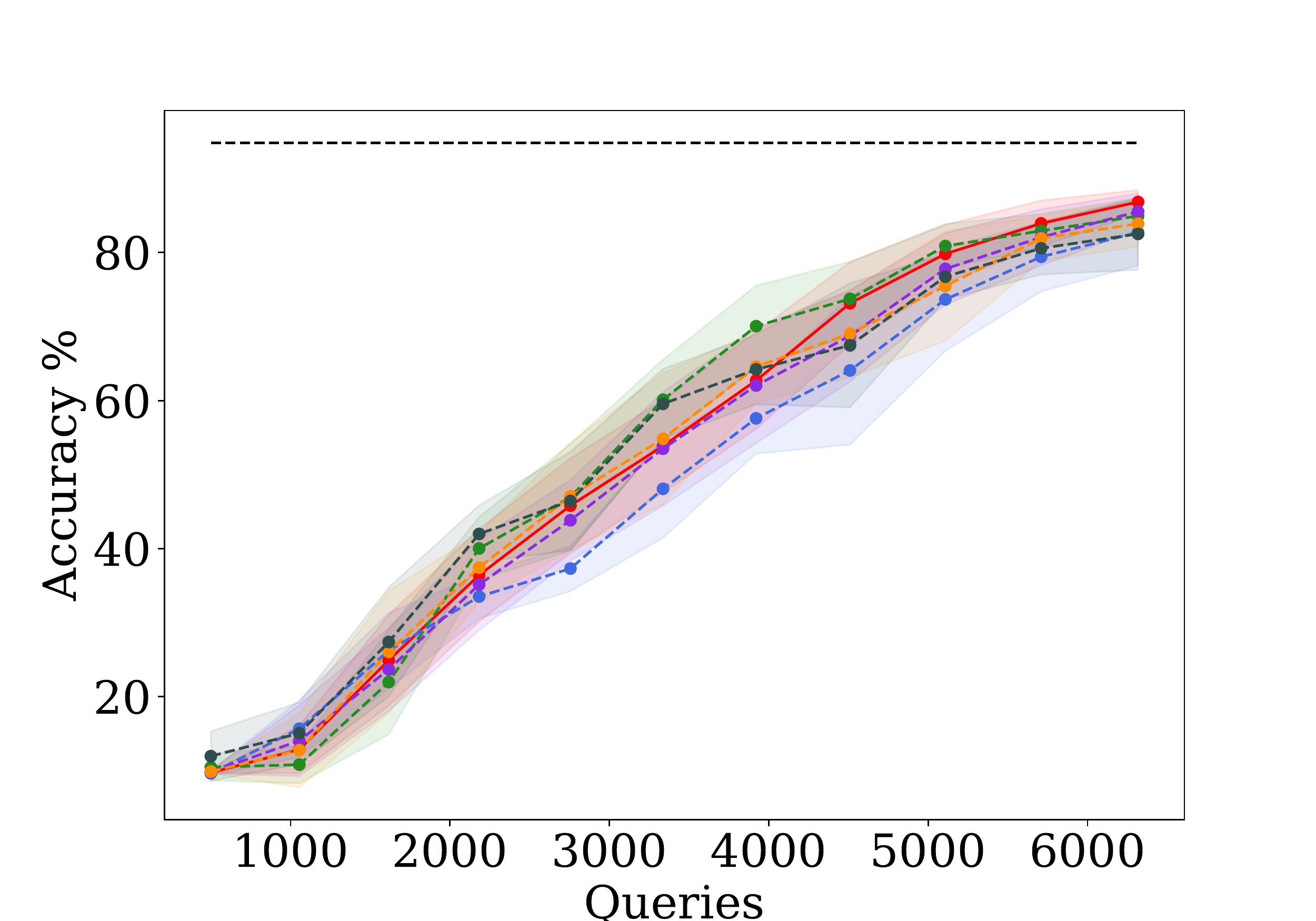}
  \caption{CNN with EMNIST queries}
  \label{fig:utility_landscape_EMC}
\end{subfigure}
\begin{subfigure}{.44\textwidth}
  \centering
  \includegraphics[width=\textwidth]{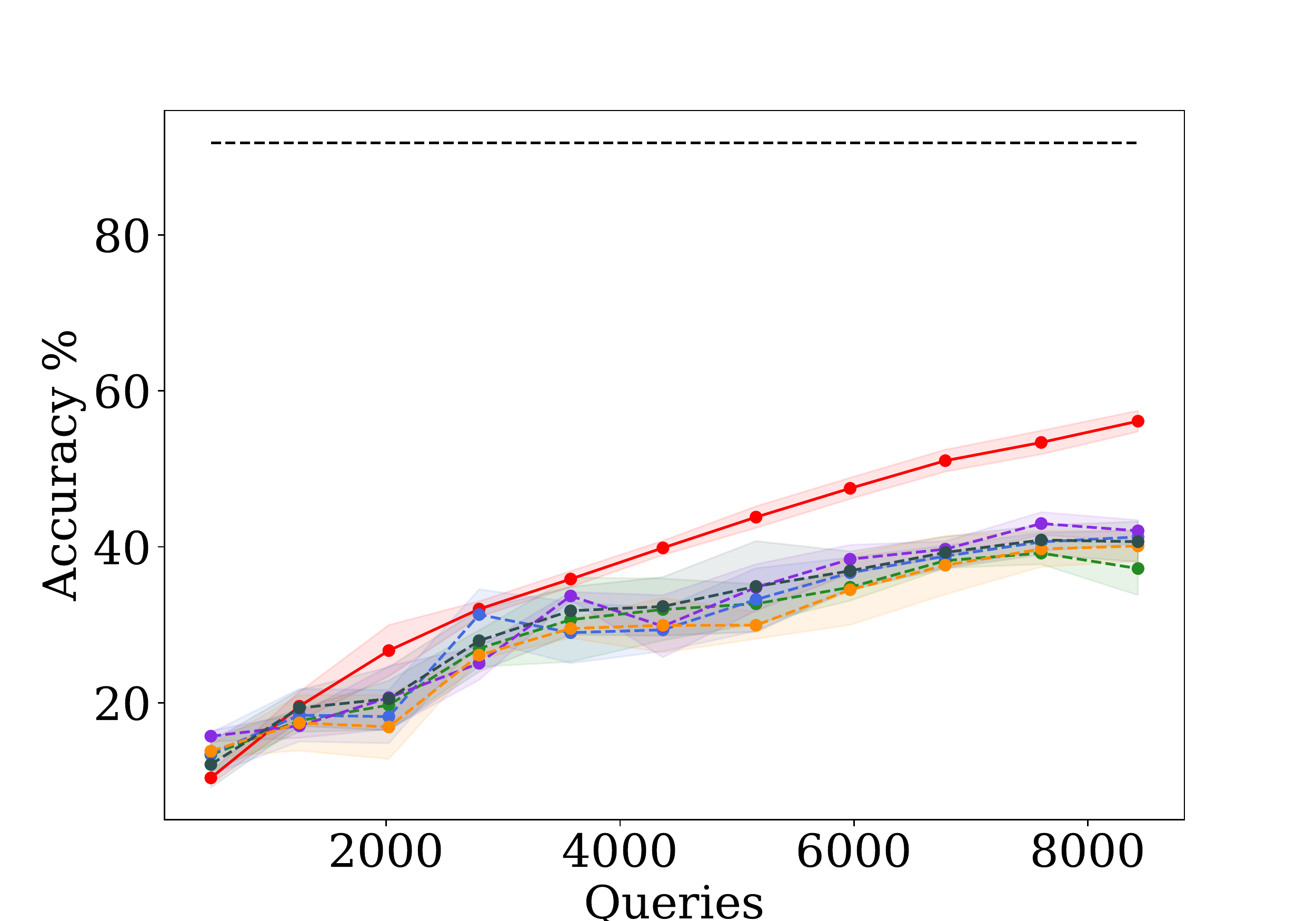}
  \caption{ResNet with CNN and ImageNet queries}
  \label{fig:utility_landscape_Shapley}
\end{subfigure}
\begin{subfigure}{.44\textwidth}
  \centering
  \includegraphics[width=\textwidth]{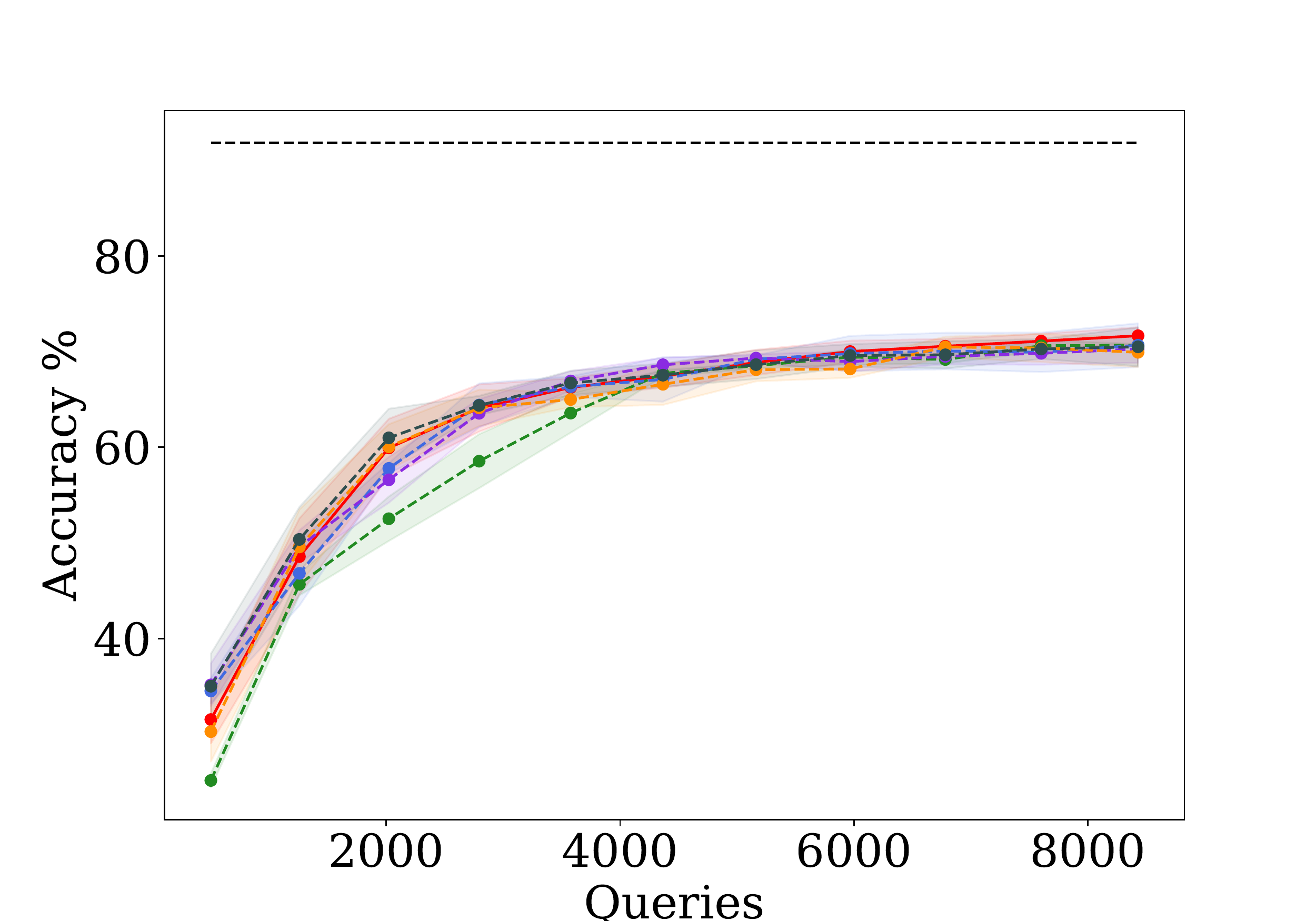}
  \caption{ResNet with ResNet18 and ImageNet queries}
  \label{fig:utility_landscape_Shapley}
\end{subfigure}
\begin{subfigure}{.44\textwidth}
  \centering
  \includegraphics[width=\textwidth]{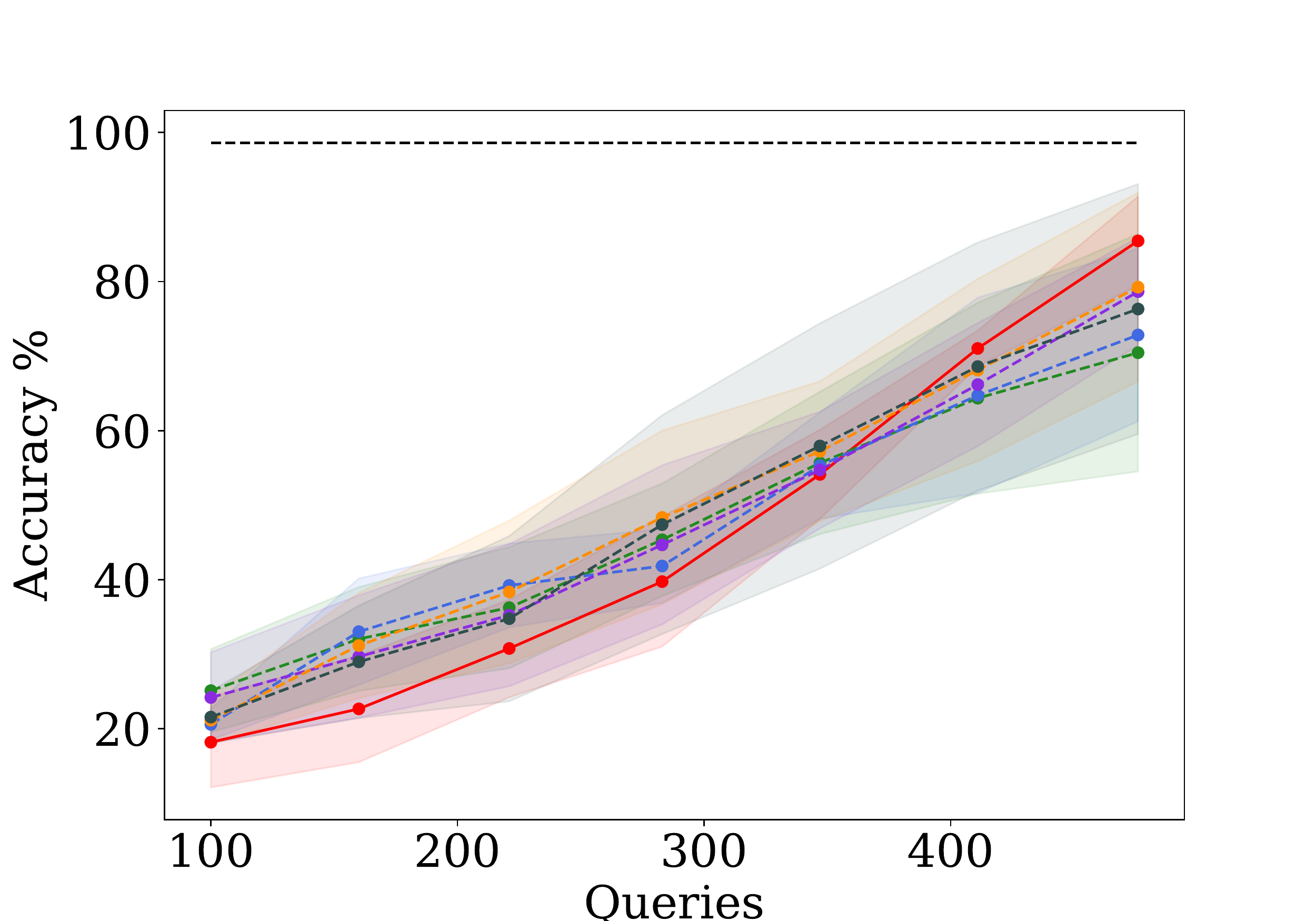}
  \caption{BERT with AGNews queries}
  \label{fig:utility_landscape_Shapley}
\end{subfigure}
\caption{Comparison of test accuracies achieved by models extracted by different active sampling algorithms.}\label{fig: all_acc}
\end{figure}\vspace{-1em}

\paragraph{Extraction of a ResNet trained on CIFAR10 with a ResNet18.} Along with the five experimental setups mentioned in the paper, we trained a ResNet with CIFAR10 dataset (\traindata{} here), that shows a test accuracy of $91.82\%$ on a disjoint test set. We use ImageNet as $\querydata{}$ here, to extract a ResNet18 model from the target model. We have restrained from discussing this setup in the main paper due to brevity of space.

\marich{} extracts a ResNet18 using $8429$ queries, and the extracted ResNet18 shows test accuracy of $71.65 \pm 0.88\%$. On the other hand, models extracted using Best of Competitors (BoC) using ImageNet queries shows accuracy of $70.67 \pm 0.12\%$.

\clearpage
\subsection{Fidelity of the prediction distributions of The extracted models} 
Driven by the distributional equivalence extraction principle, the central goal of \marich{} is to construct extracted models whose prediction distributions are closest to the prediction distributions of corresponding target models.
From this perspective, in this section, we study the fidelity of the prediction distributions of models extracted by \marich{} and other active sampling algorithms, namely K-centre sampling, Least Confidence sampling, Margin Sampling, Entropy Sampling, and Random Sampling.

\subsubsection{KL-divergence between prediction distributions}\label{sec:app_kl}
First, as the metric of distributional equivalence, we evaluate the KL-divergence between the prediction distributions of the models extracted by \marich{} and other active sampling algorithms. In Figure~\ref{fig: pred_kl}, we report the box-plot (mean, median $\pm$ 25 percentiles) of KL-divergences (in log-scale) calculated from $5$ runs for each of $10$ models extracted by each of the algorithms.

\begin{figure}[h!]
\centering
\begin{subfigure}{.44\textwidth}
  \centering
  \includegraphics[width=\textwidth]{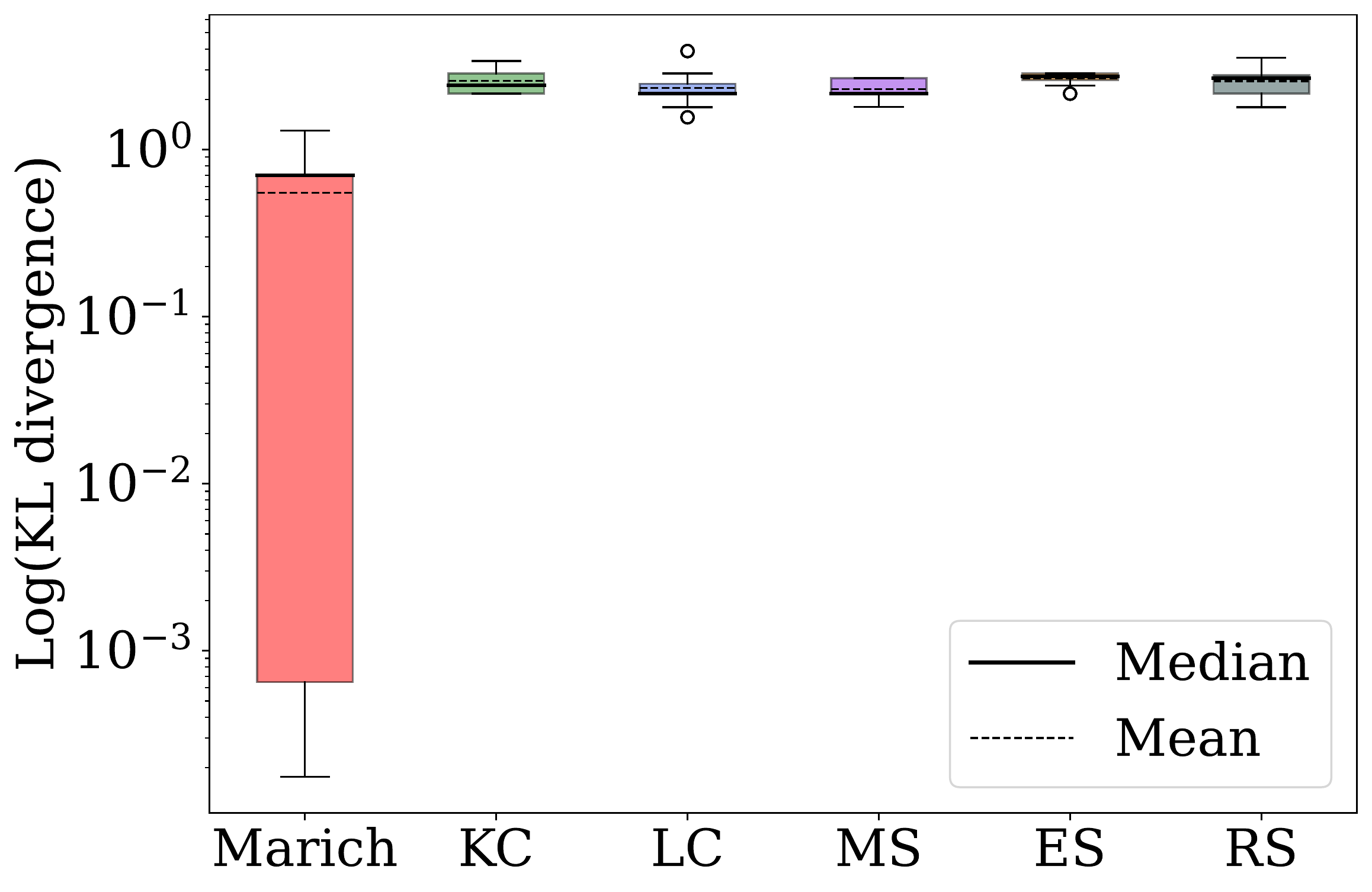}
  \caption{Logistic regression with EMNIST queries}
  \label{fig:utility_landscape_Shapley}
\end{subfigure}
\begin{subfigure}{.44\textwidth}
  \centering
  \includegraphics[width=\textwidth]{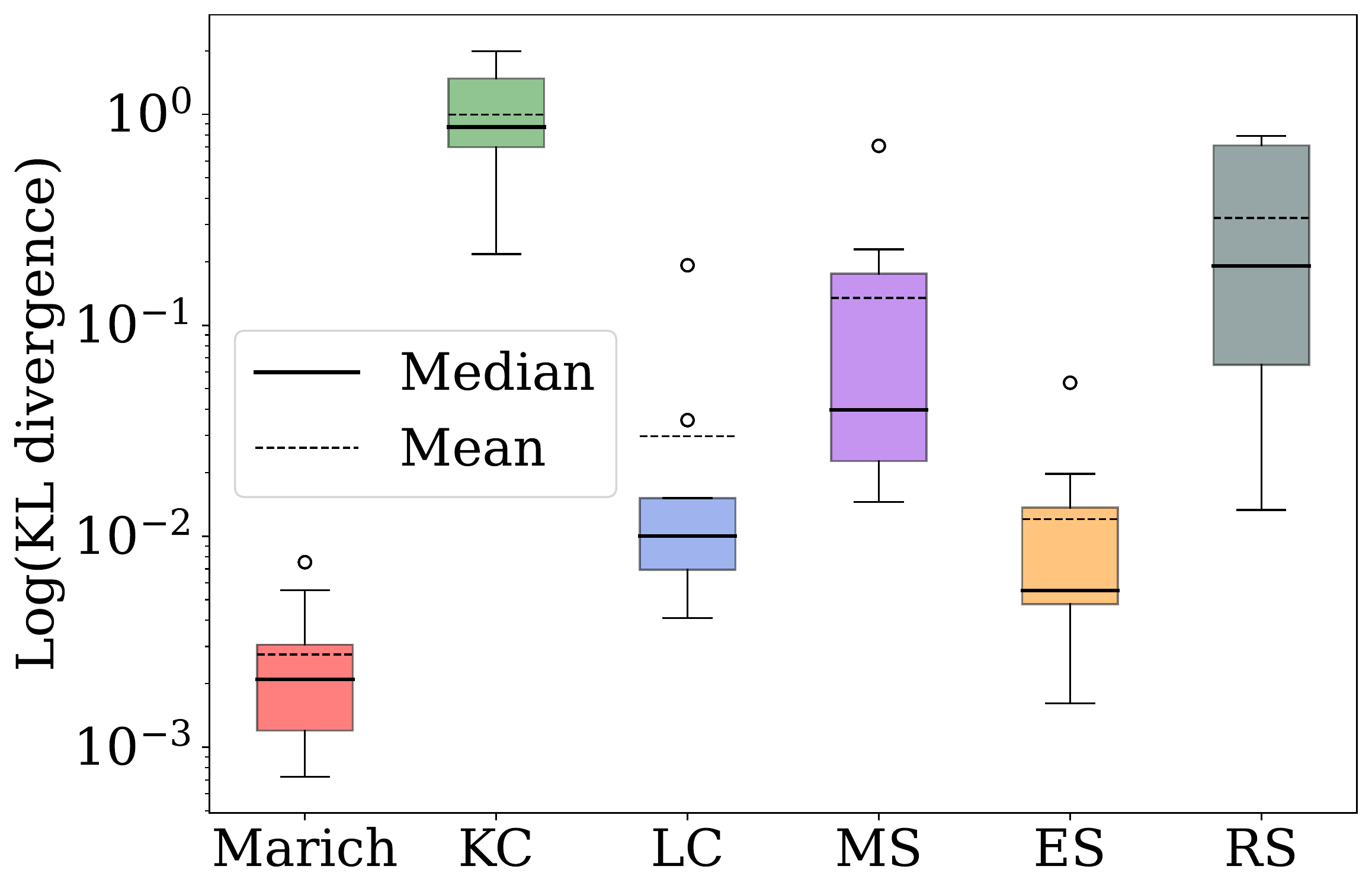}
  \caption{Logistic regression with CIFAR10 queries}
  \label{fig:utility_landscape_EMC}
\end{subfigure}
\begin{subfigure}{.44\textwidth}
  \centering
  \includegraphics[width=\textwidth]{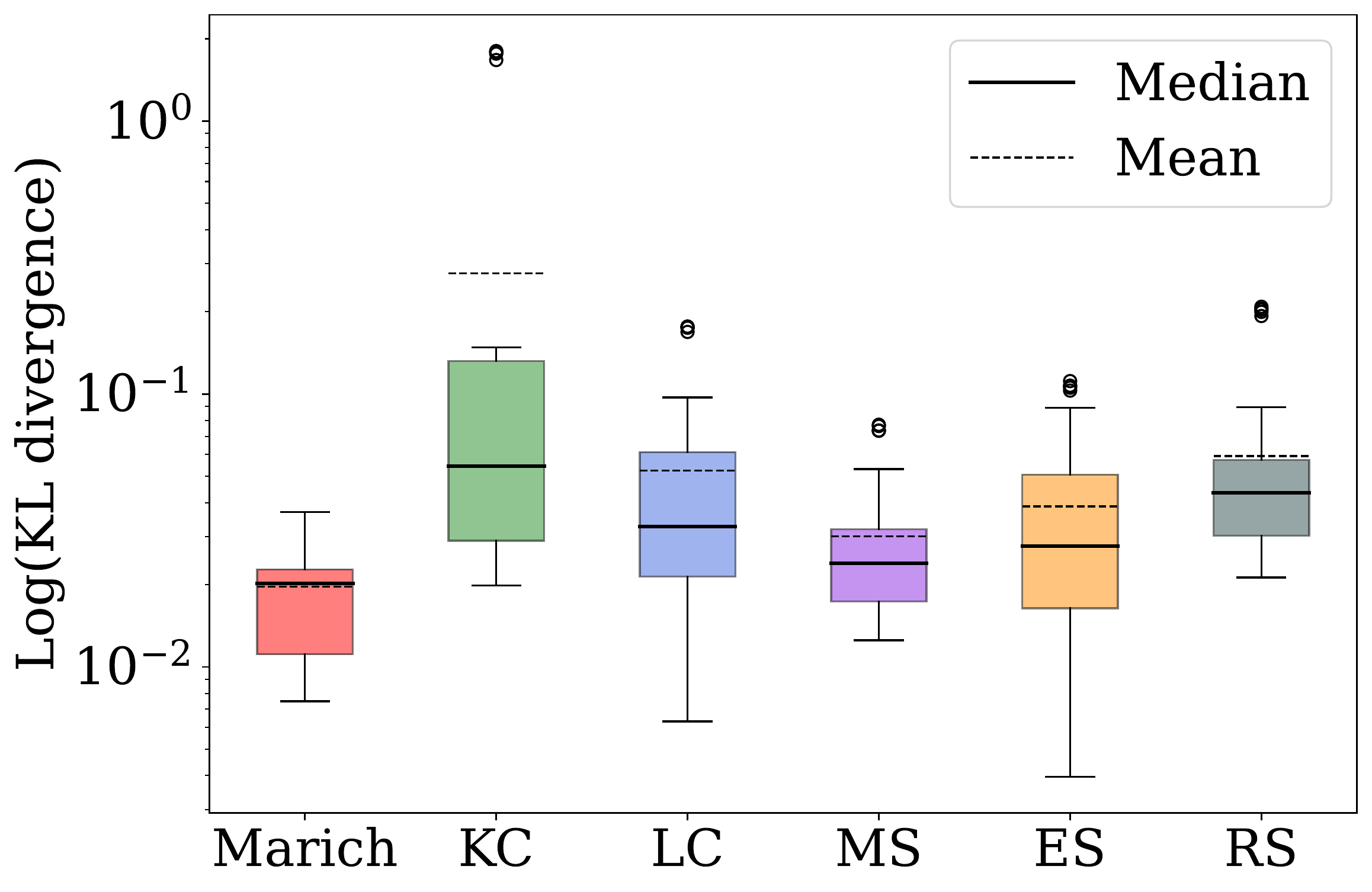}
  \caption{CNN with EMNIST queries}
  \label{fig:utility_landscape_EMC}
\end{subfigure}
\begin{subfigure}{.44\textwidth}
  \centering
  \includegraphics[width=\textwidth]{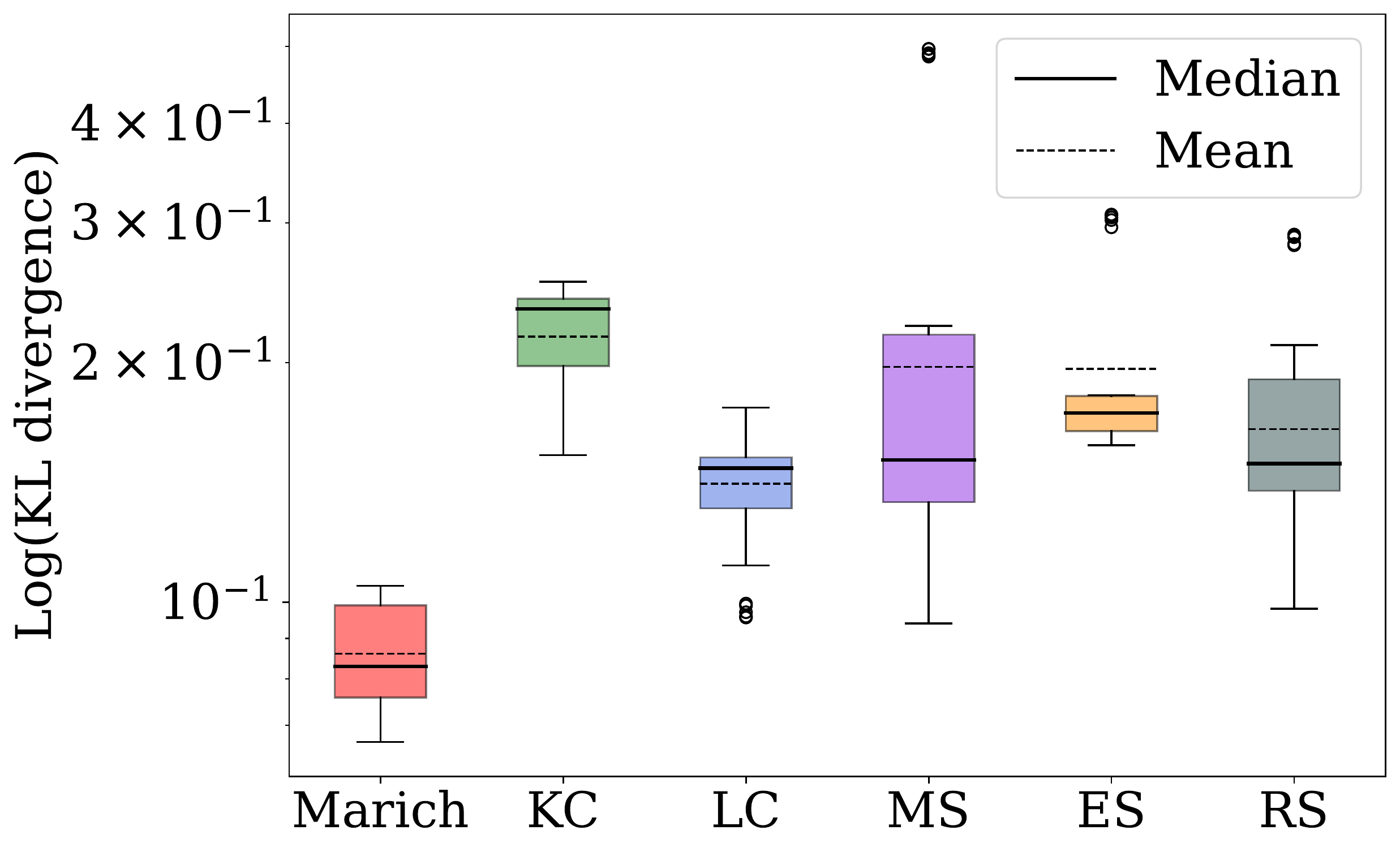}
  \caption{ResNet with CNN and ImageNet queries}
  \label{fig:utility_landscape_Shapley}
\end{subfigure}
\begin{subfigure}{.44\textwidth}
  \centering
  \includegraphics[width=\textwidth]{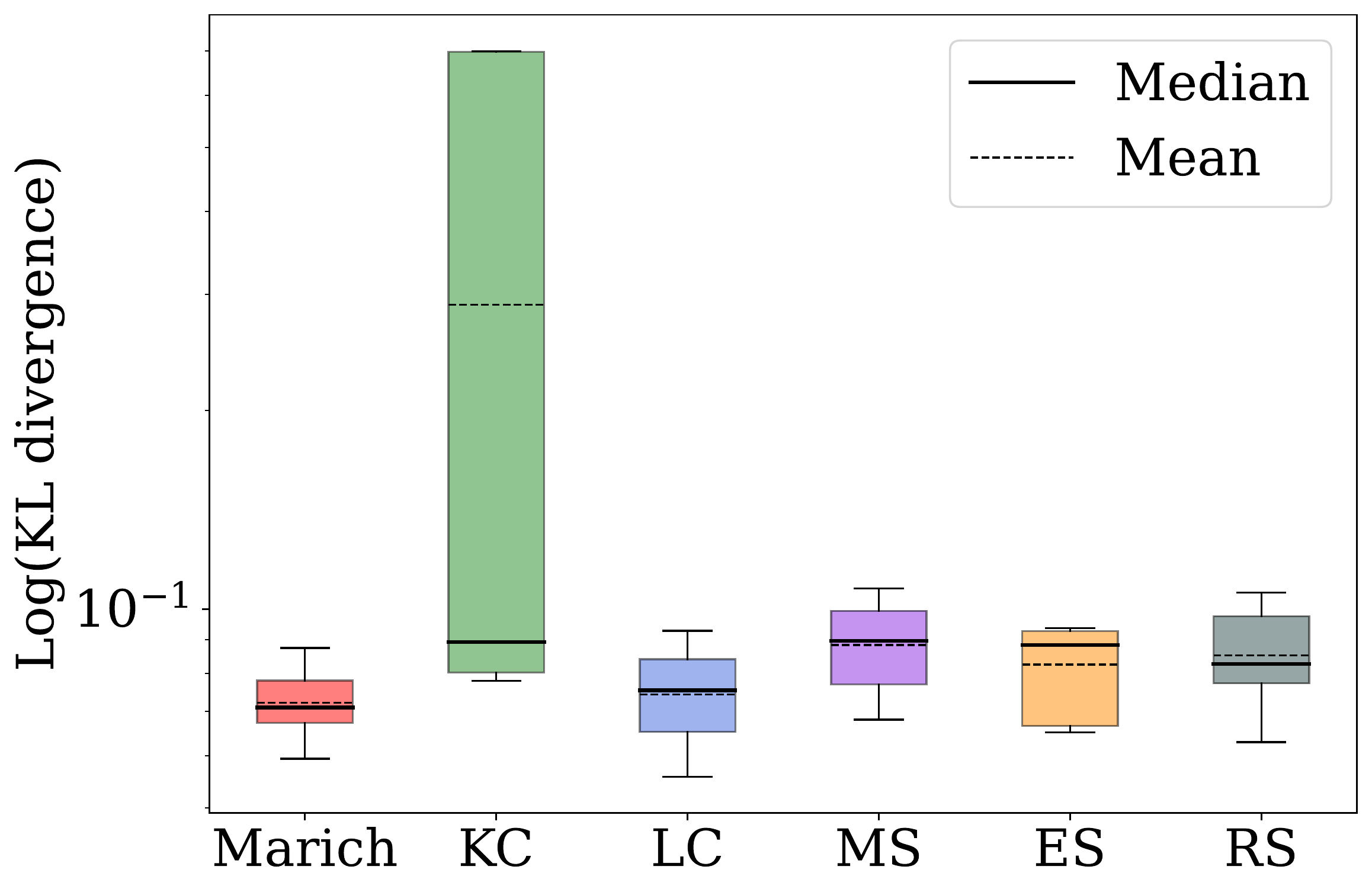}
  \caption{ResNet with ResNet18 and ImageNet queries}
  \label{fig:utility_landscape_Shapley}
\end{subfigure}
\begin{subfigure}{.44\textwidth}
  \centering
  \includegraphics[width=\textwidth]{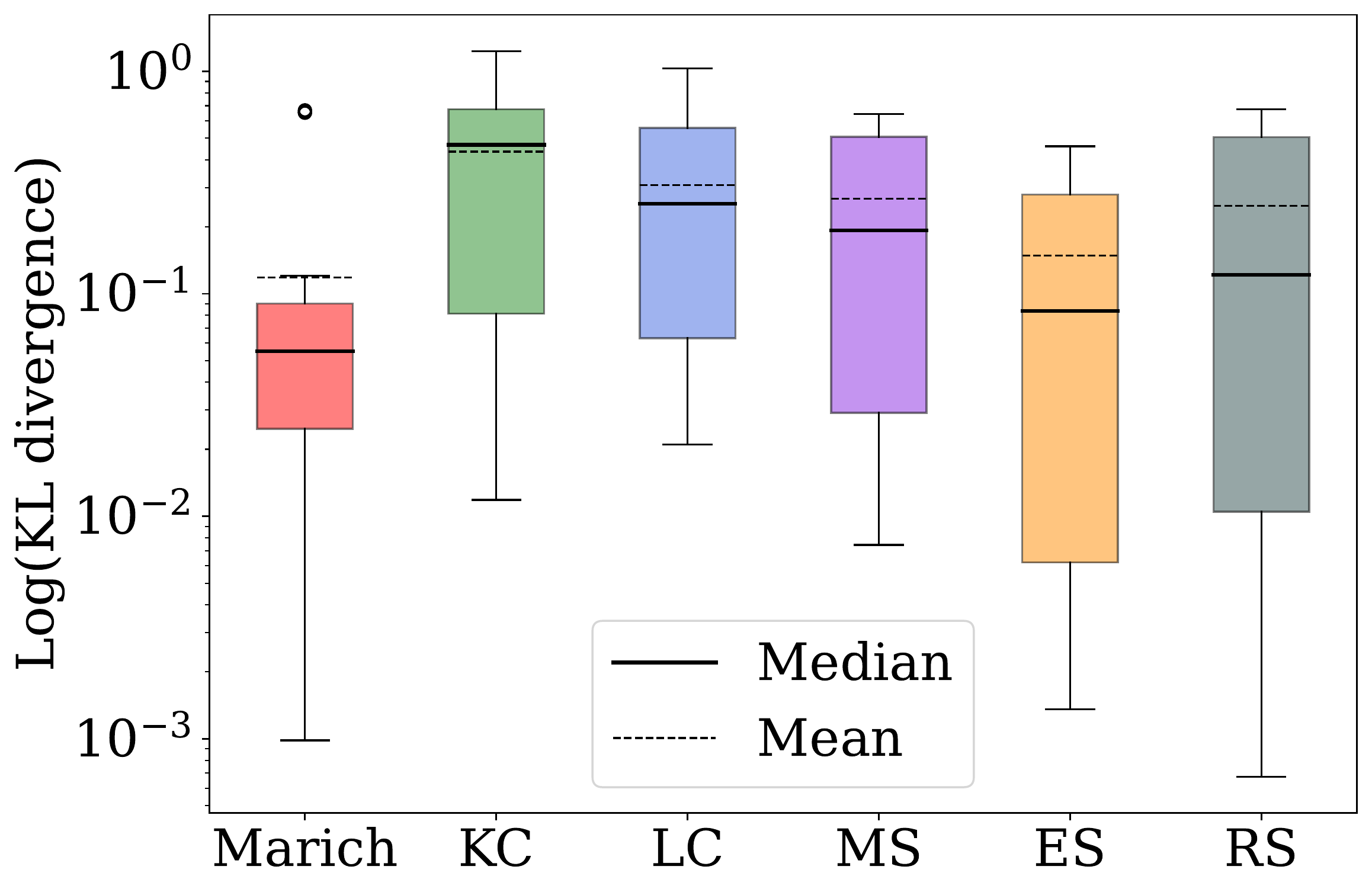}
  \caption{BERT with AGNews queries}
  \label{fig:utility_landscape_Shapley}
\end{subfigure}
\caption{Comparison of KL-divergences (in log-scale) between the target prediction distributions and prediction distributions of the models extracted by different active learning algorithms.}\label{fig: pred_kl}
\end{figure}

\textbf{Results.} Figure~\ref{fig: pred_kl} shows that the KL-divergence achieved by the prediction distributions of models extracted using \marich{} are at least $\sim 2-10$ times less than that of the other competing algorithms.
This validates our claim that \textit{\marich{} yields distributionally closer extracted model \extracted{} from the target model \target{} than existing active sampling algorithms}. 

\clearpage
\subsubsection{Prediction agreement}\label{sec:app_agreement}
\begin{figure}[h!]
\centering
\begin{subfigure}{.44\textwidth}
  \centering
  \includegraphics[width=\textwidth]{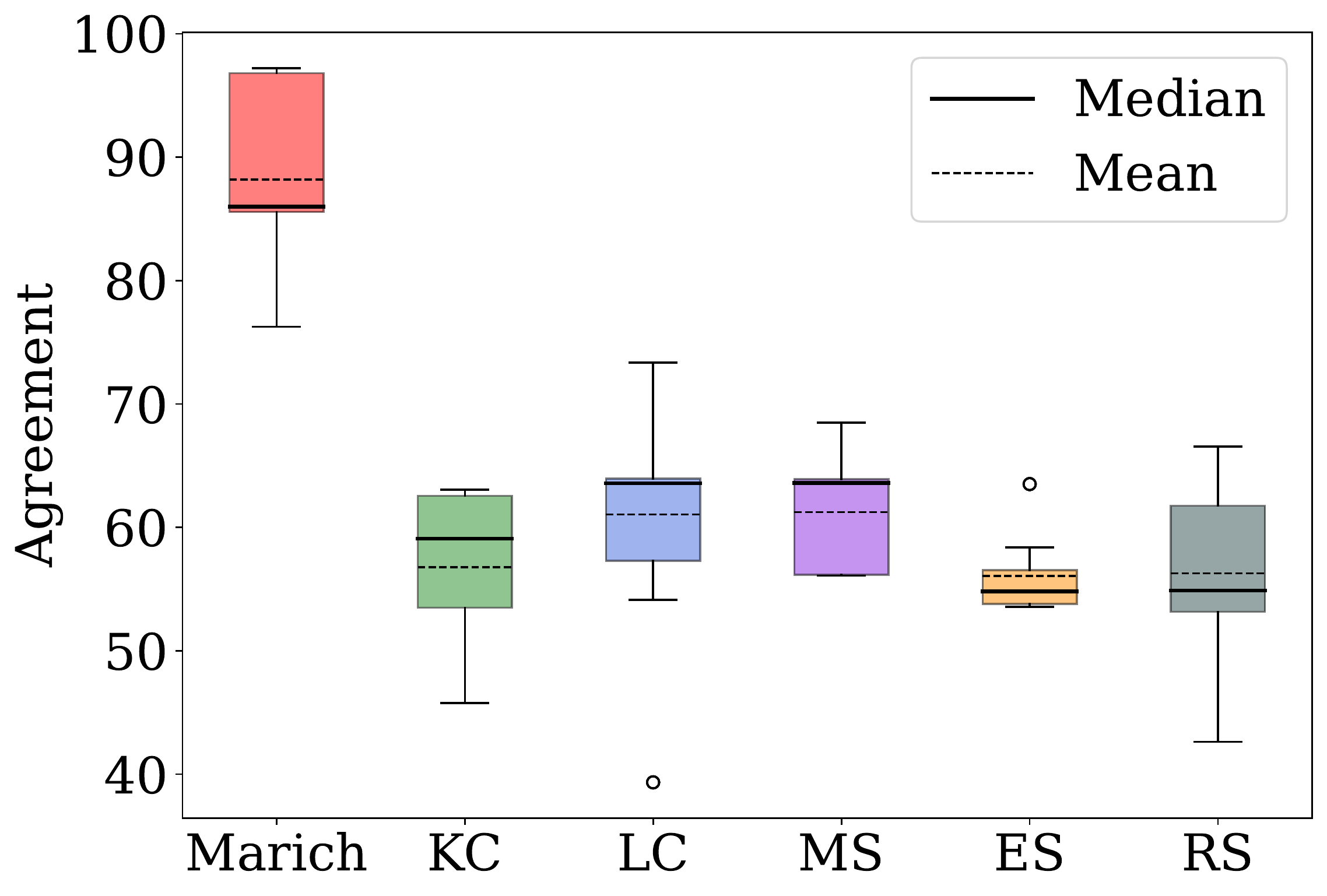}
  \caption{Logistic regression with EMNIST queries}
  \label{fig:utility_landscape_Shapley}
\end{subfigure}
\begin{subfigure}{.44\textwidth}
  \centering
  \includegraphics[width=\textwidth]{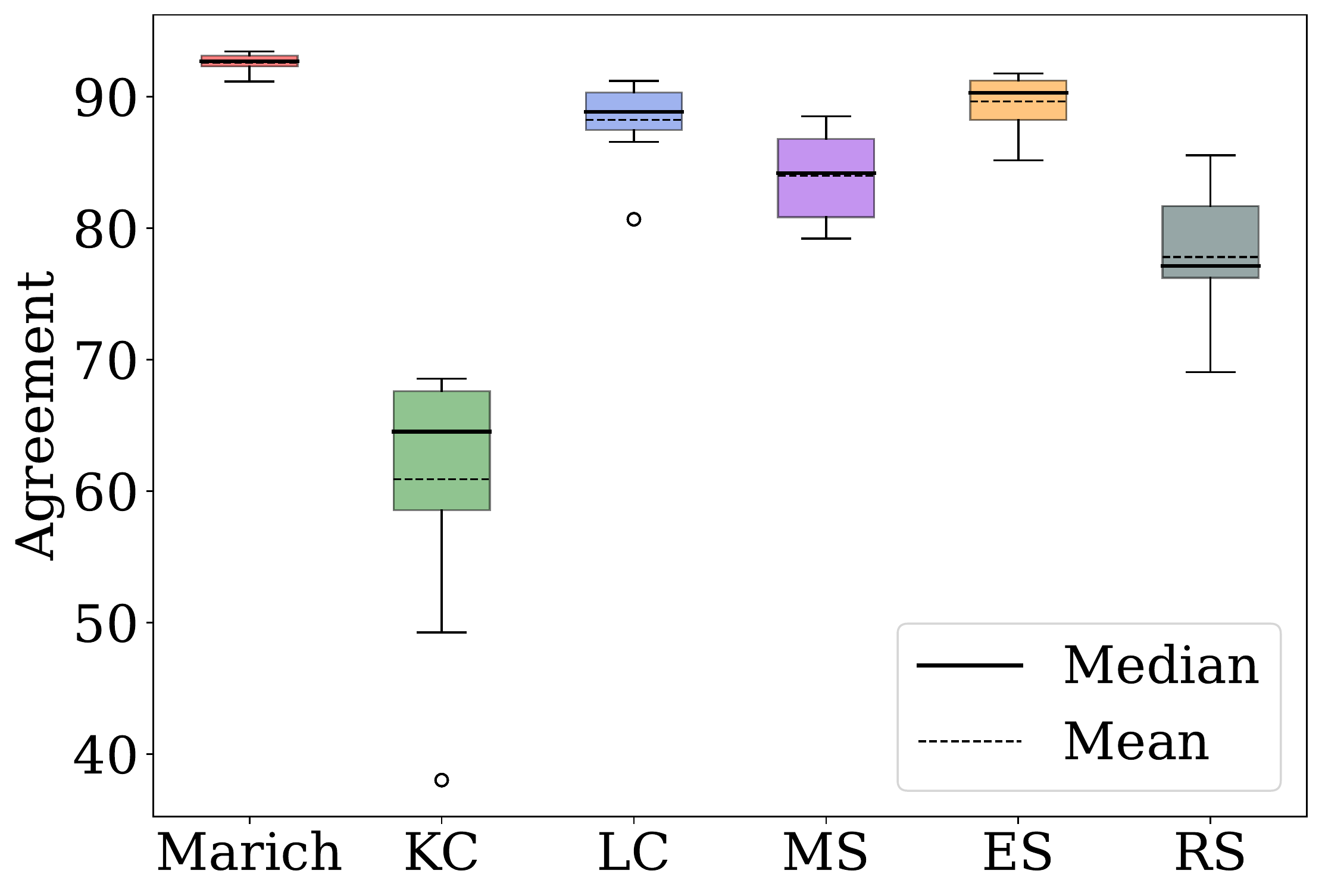}
  \caption{Logistic regression with CIFAR10 queries}
  \label{fig:utility_landscape_EMC}
\end{subfigure}
\begin{subfigure}{.44\textwidth}
  \centering
  \includegraphics[width=\textwidth]{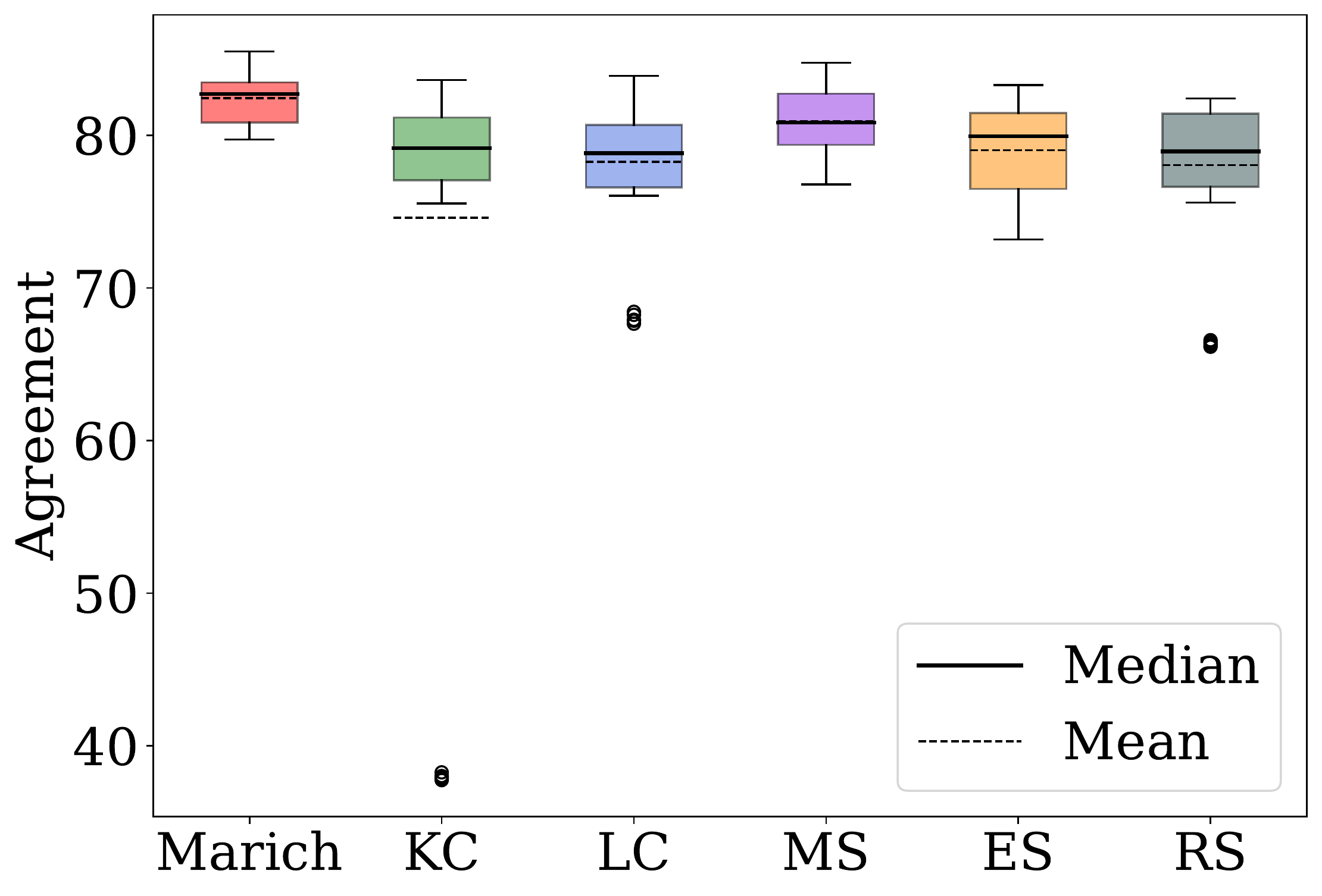}
  \caption{CNN with EMNIST queries}
  \label{fig:utility_landscape_EMC}
\end{subfigure}
\begin{subfigure}{.44\textwidth}
  \centering
  \includegraphics[width=\textwidth]{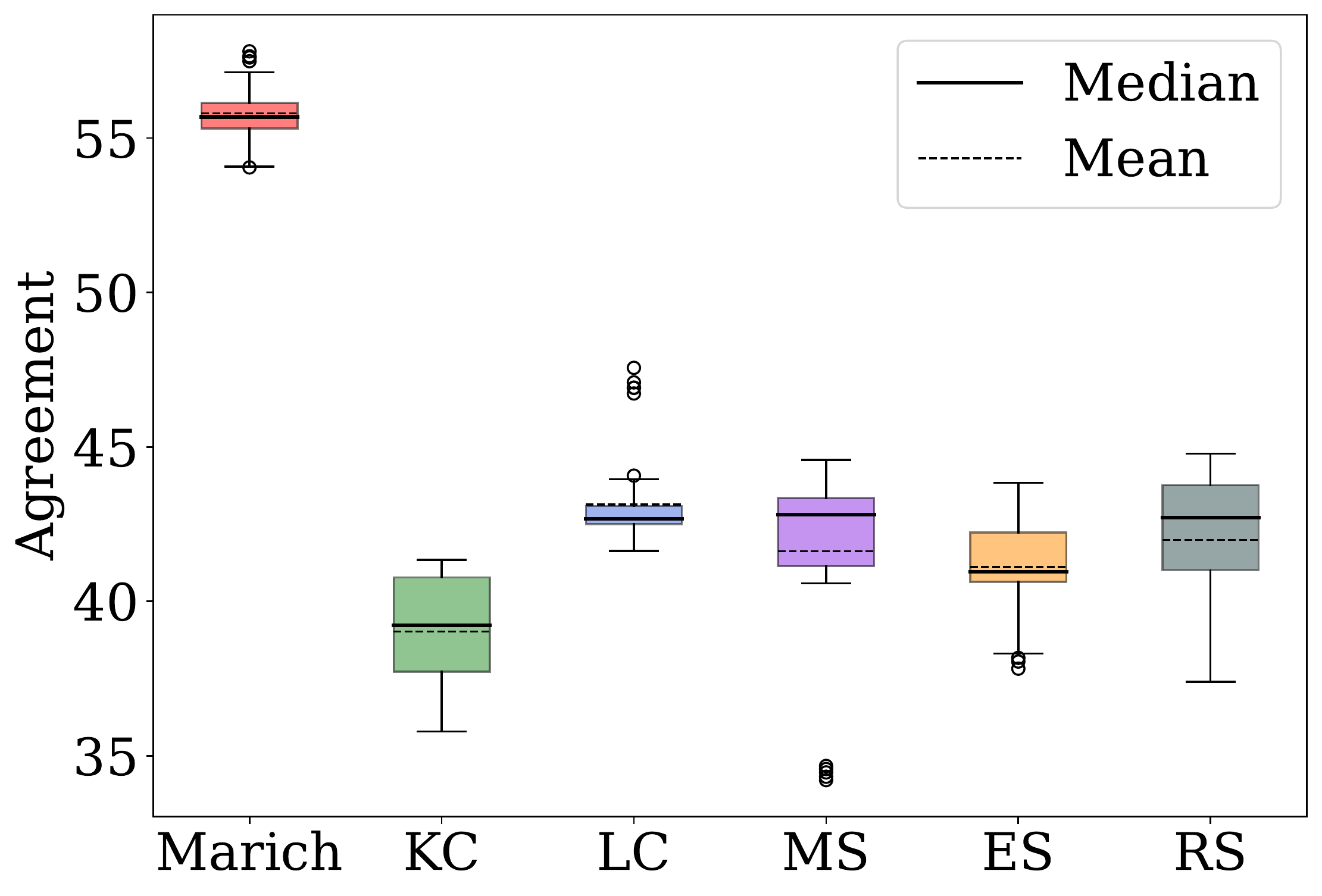}
  \caption{ResNet with CNN and ImageNet queries}
  \label{fig:utility_landscape_Shapley}
\end{subfigure}
\begin{subfigure}{.44\textwidth}
  \centering
  \includegraphics[width=\textwidth]{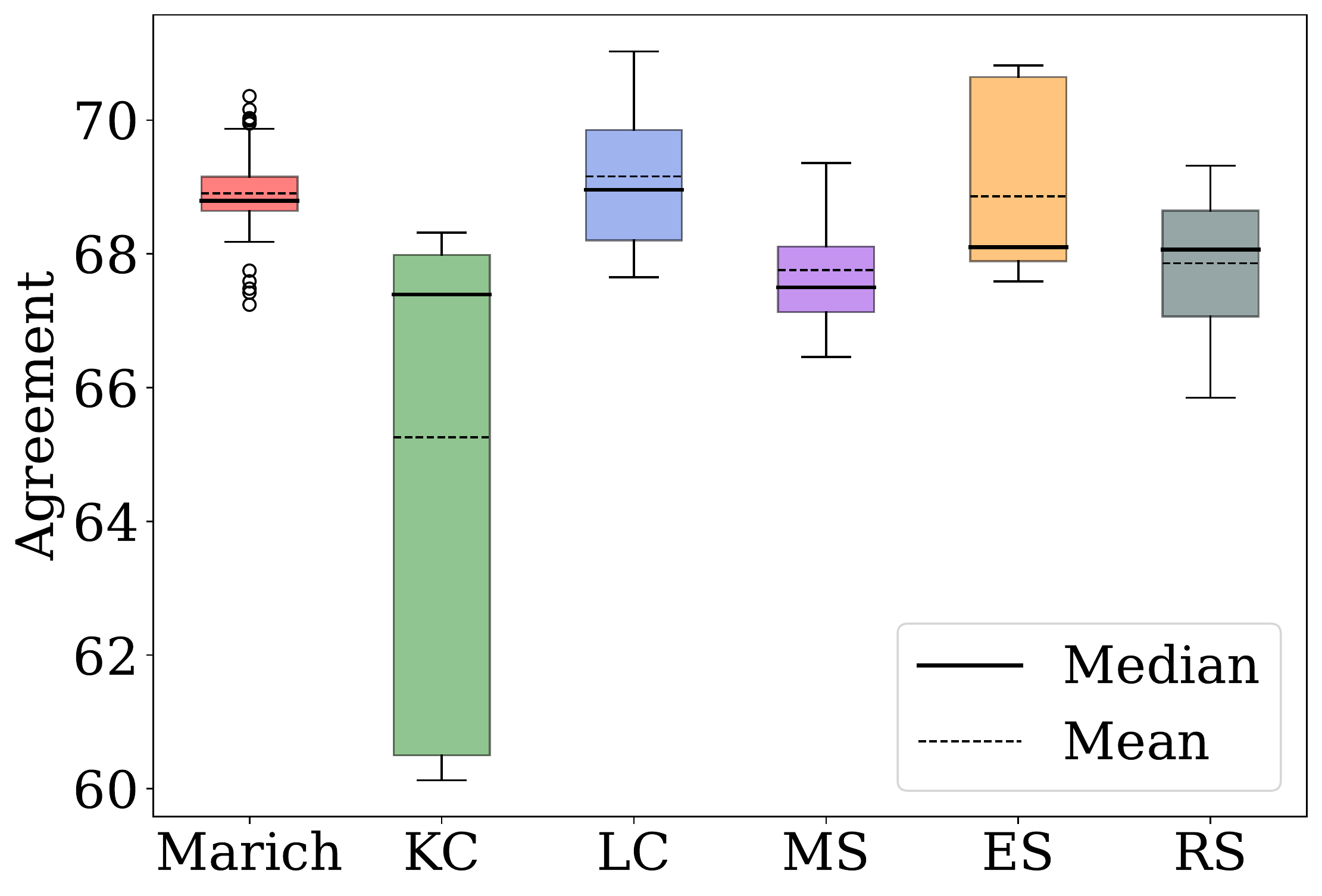}
  \caption{ResNet with ResNet18 and ImageNet queries}
  \label{fig:utility_landscape_Shapley}
\end{subfigure}
\begin{subfigure}{.44\textwidth}
  \centering
  \includegraphics[width=\textwidth]{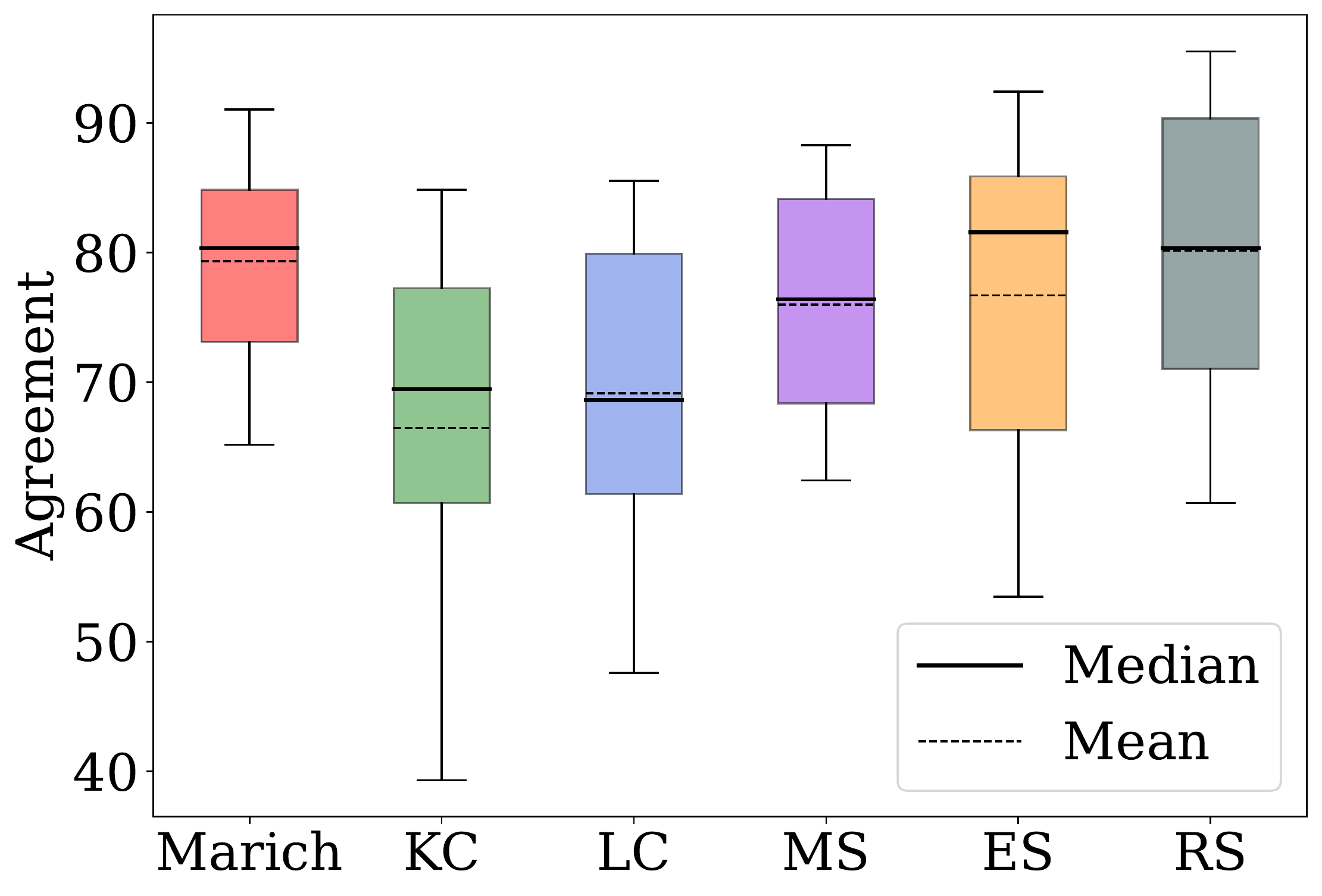}
  \caption{BERT with AGNews queries}
  \label{fig:utility_landscape_Shapley}
\end{subfigure}
\caption{Comparison of agreement in predictions (in $\%$) between the target model and the models extracted by different active learning algorithms.}\label{fig: pred_agreement}
\end{figure}
In Figure~\ref{fig: pred_agreement}, we illustrate the agreement in predictions of \extracted{} with \target{} on test datasets using different active learning algorithms. Prediction agreement functions as another metric of fidelity of prediction distributions constructed by extracted models in comparison with those of the target models.

Similar to Figure~\ref{fig: pred_kl}, we report the box-plot (mean, median $\pm$ 25 percentiles) of prediction agreements (in $\%$) calculated from $5$ runs for each of $10$ models extracted by each of the algorithms.

\textbf{Results.} We observe that the prediction distributions extracted by \marich{} achieve almost same to $\sim 30\%$ higher prediction agreement in comparison with the competing algorithms. Thus, we infer that in this particular case \textit{\marich{} achieves better fidelity than the other active sampling algorithms, in some instances, while it is similar to the BoC in some instances}.


\clearpage
\subsection{Fidelity of parameters of the extracted models}\label{sec:app_fidel}

\begin{figure}[h!]
\centering
\begin{subfigure}{.44\textwidth}
  \centering
  \includegraphics[width=\textwidth]{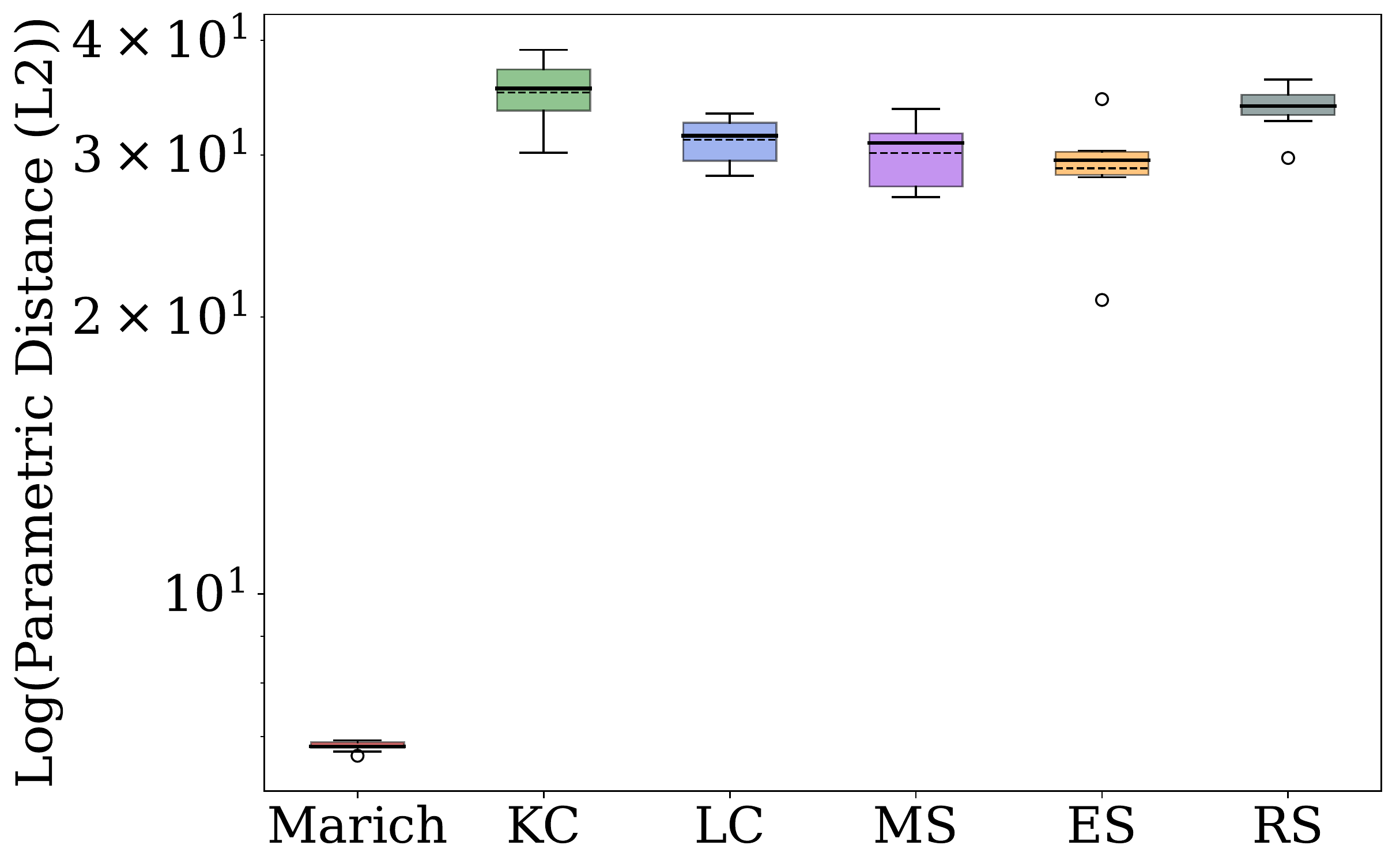}
  \caption{Logistic regression with EMNIST queries}
  \label{fig:utility_landscape_Shapley}
\end{subfigure}
\begin{subfigure}{.44\textwidth}
  \centering
  \includegraphics[width=\textwidth]{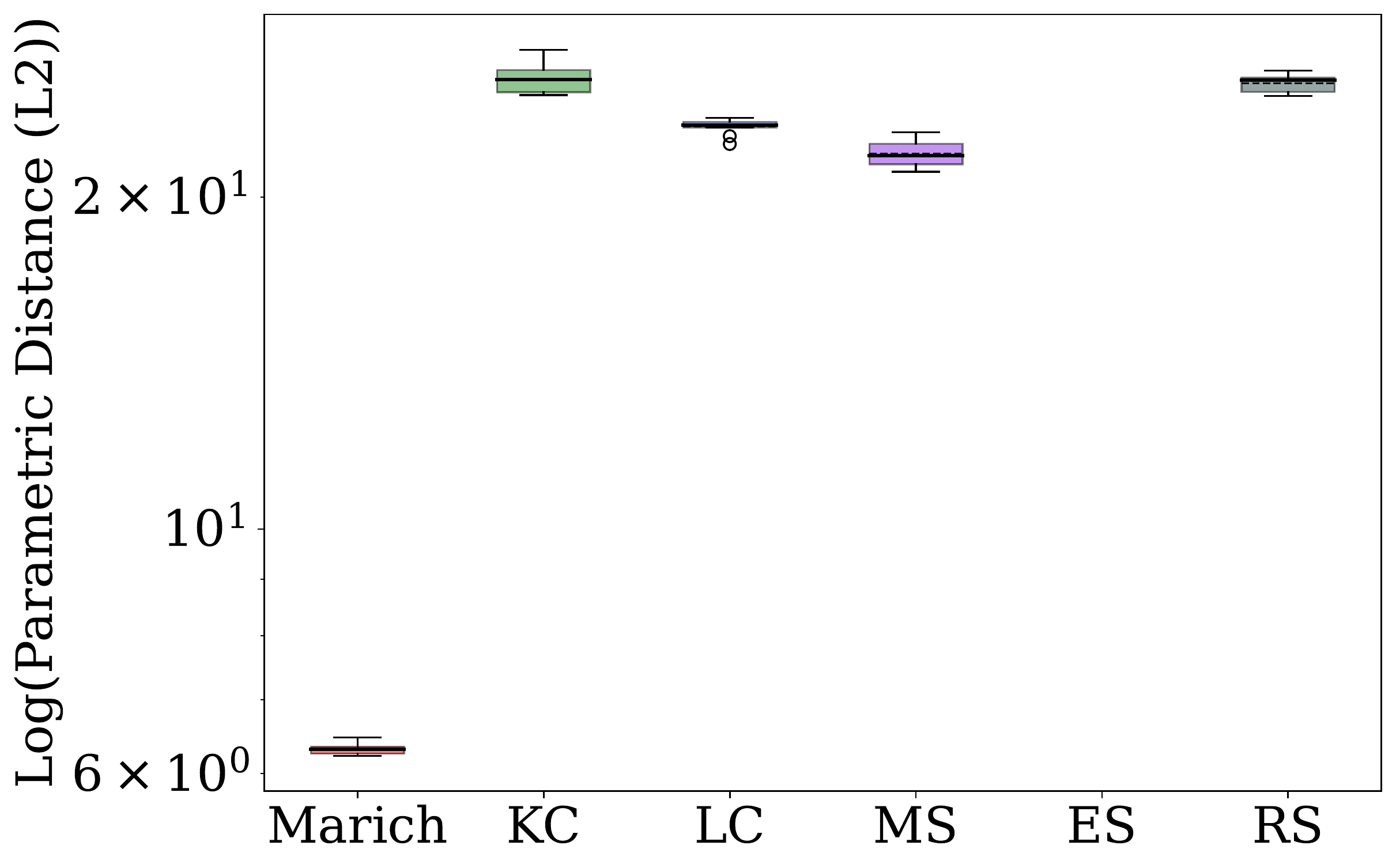}
  \caption{Logistic regression with CIFAR10 queries}
  \label{fig:utility_landscape_EMC}
\end{subfigure}
\begin{subfigure}{.44\textwidth}
  \centering
  \includegraphics[width=\textwidth]{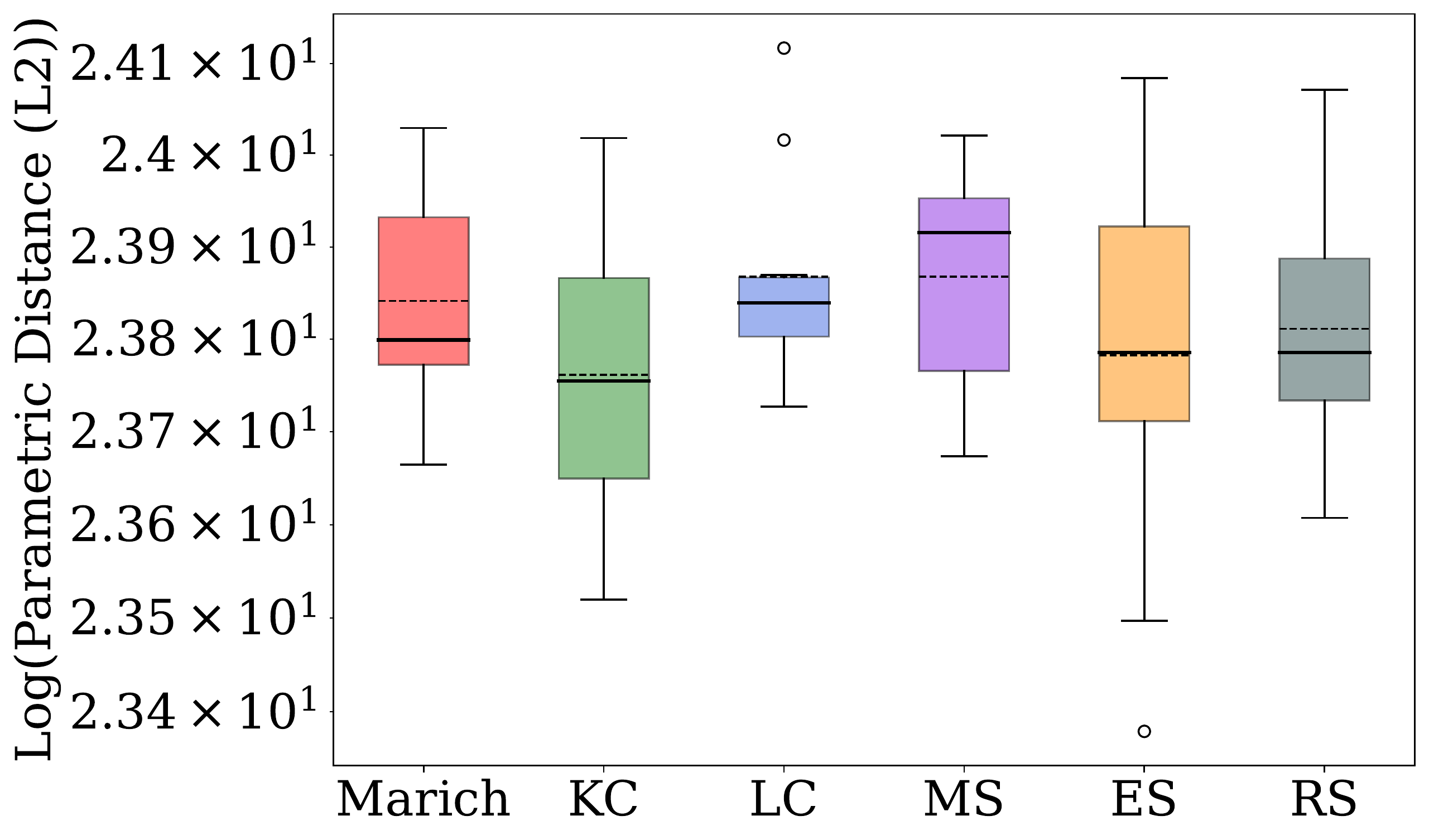}
  \caption{CNN with EMNIST queries}
  \label{fig:utility_landscape_EMC}
\end{subfigure}
\begin{subfigure}{.44\textwidth}
  \centering
  \includegraphics[width=\textwidth]{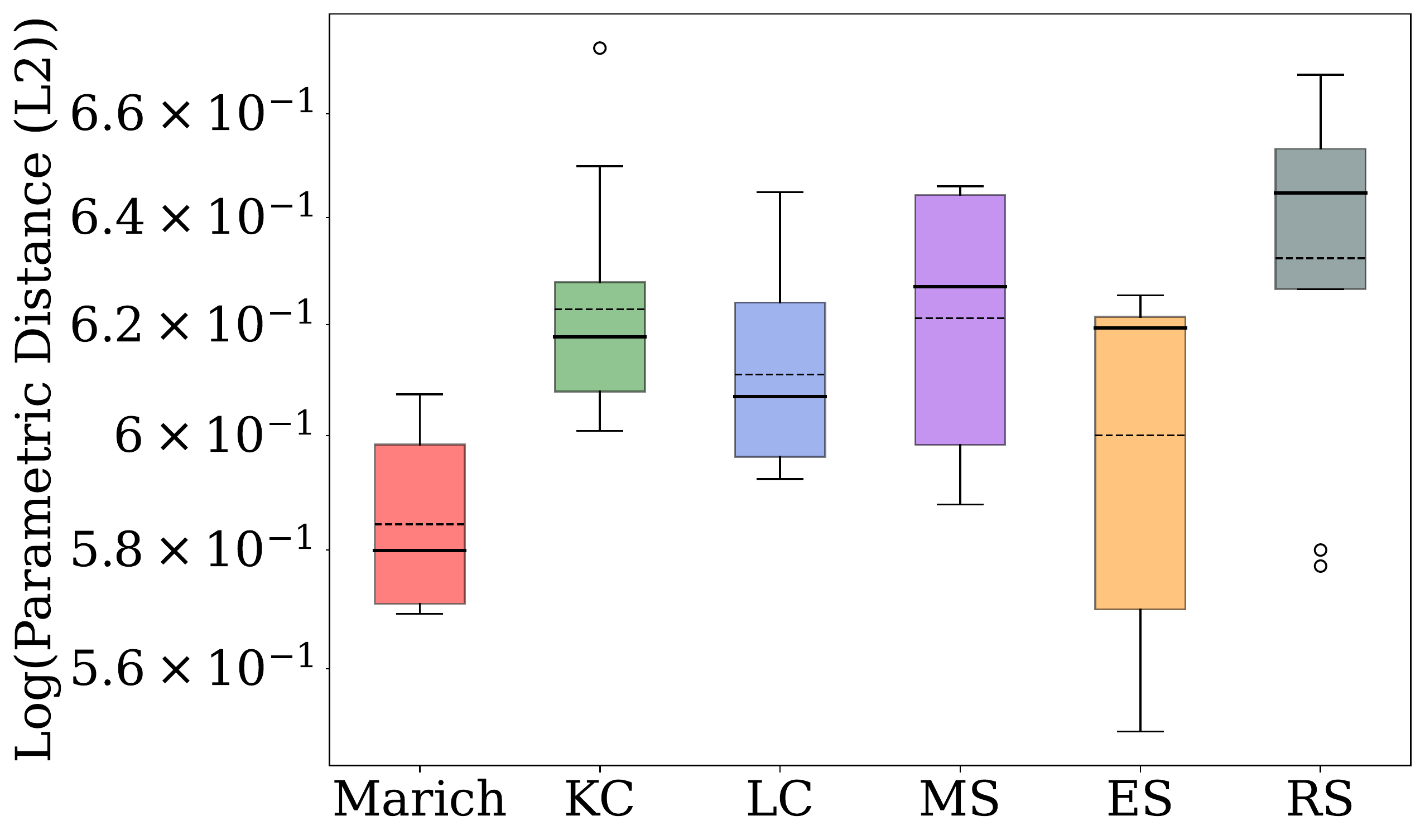}
  \caption{BERT with AGNews queries}
  \label{fig:utility_landscape_Shapley}
\end{subfigure}
\caption{Comparison of parametric fidelity for \marich{} and different active learning algorithms.}\label{fig:parametric_fidel}
\end{figure}
Generating extracted models with low parametric fidelity is not a main goal or basis of the design principle of \marich{}. Since parametric fidelity is a popularly studied metric to evaluate goodness of model extraction, in Figure~\ref{fig:parametric_fidel}, we depict the parametric fidelity of models extracted by different active learning algorithms.

Let $\mathbf{w}_{E}$ be the parameters of the extracted model and $\mathbf{w}_{T}$ be the parameters of the target model. We define parametric fidelity as $F_{\mathbf{w}} \triangleq \log \|\mathbf{w}_E - \mathbf{w}_T\|_2$.
Since the parametric fidelity is only computable when the target and extracted models share the same architecture, we report the four instances here where \marich{} is deployed with the same architecture as that of the target model. 
For logistic regression, we compare all the weights of the target and the extracted models. 
For BERT and CNN, we compare between the weights in the last layers of these models.

\textbf{Results.} For LR, we observe that the LR models extracted by \marich{} have $20-30$ times lower parametric fidelity than the extracted LR models of the competing algorithms. For BERT, the BERT extracted by \marich{} achieves $0.4$ times lower parametric fidelity than the Best of Competitors (BoC). As an exception, for CNN, the model extracted by K-center sampling achieves $0.996$ times less parametric fidelity than that of \marich{}.

Thus, we conclude that \textit{\marich{} as a by-product of its distributionally equivalent extraction principle also extracts model with high parametric fidelity}, which is often better than the competing active sampling algorithms.

\clearpage
\subsection{Membership inference with the extracted models}\label{sec:app_mi}
A main goal of \marich{} is to conduct a Max-Information attack on the target model, i.e. to extract an informative replica of its predictive distribution that retains the most information about the private training dataset.
Due to lack of any direct measure of informativeness of an extracted model with respect to a target model, we run Membership Inference (MI) attacks using the models extracted by \marich{}, and other competing active sampling algorithms.
High accuracy and agreement in MI attacks conducted on extracted models of \marich{} and the target models implicitly validate our claim that \marich{} is able to conduct a Max-Information attack.

\textbf{Observation 1.} From Figure~\ref{fig: memb_all}, we see that in most cases the probability densities of the membership inference are closer to the target model when the model is extracted using \marich{}, than using all other active sampling algorithms (BoC, Best of Competitors). 

\begin{figure}[h!]
\centering
\begin{subfigure}{0.85\textwidth}
  \centering
\includegraphics[width=\textwidth,trim={0cm 0cm 0cm 0cm}, clip]{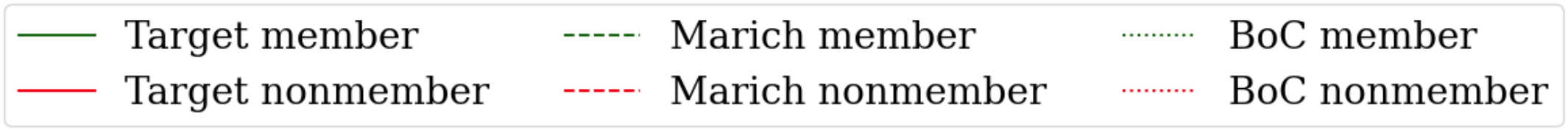}
\end{subfigure}\\
\begin{subfigure}{.42\textwidth}
  \centering
  \includegraphics[width=\textwidth]{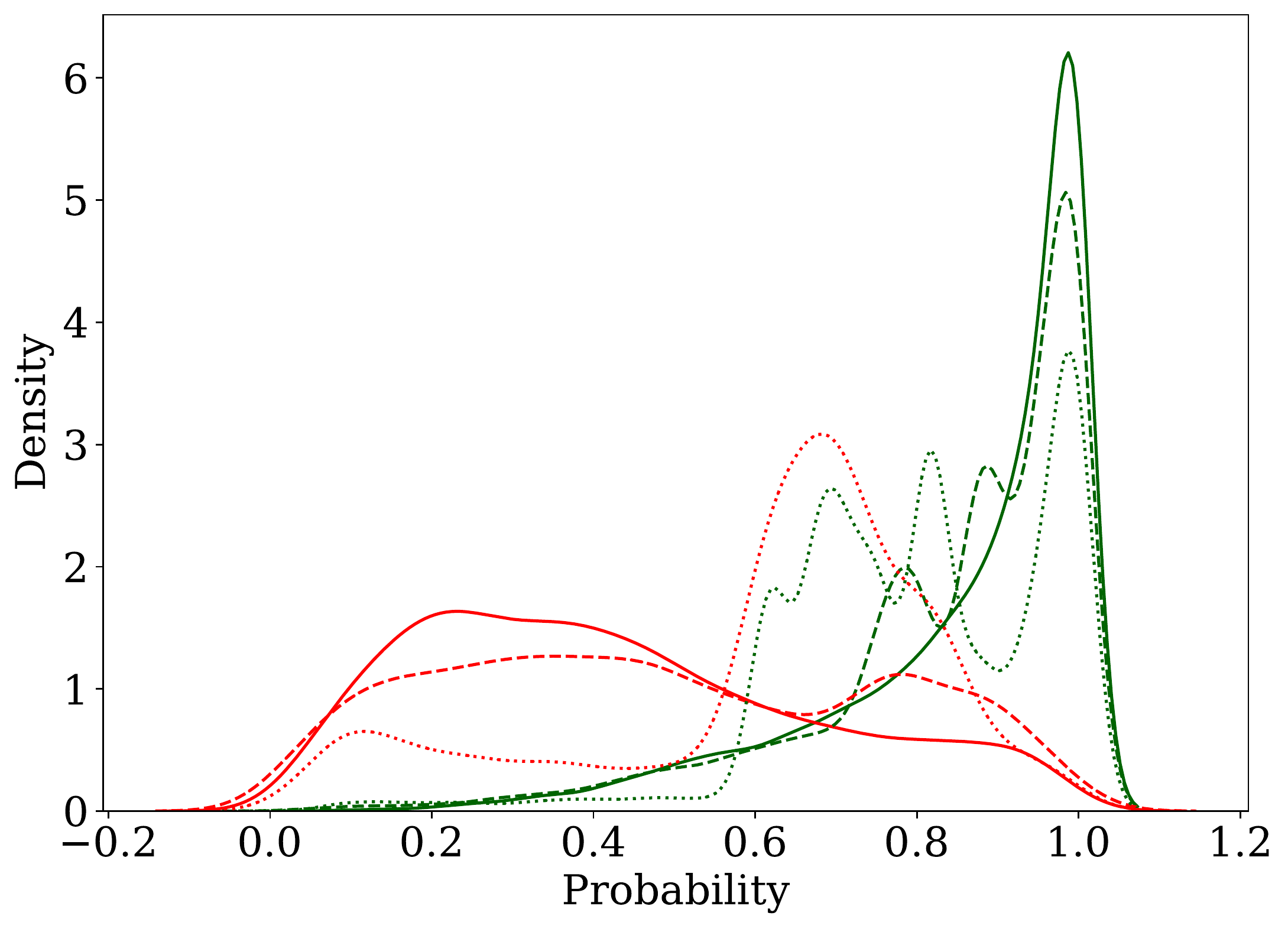}
  \caption{Logistic regression with EMNIST queries}
  \label{fig:utility_landscape_Shapley}
\end{subfigure}
\begin{subfigure}{.42\textwidth}
  \centering
  \includegraphics[width=\textwidth]{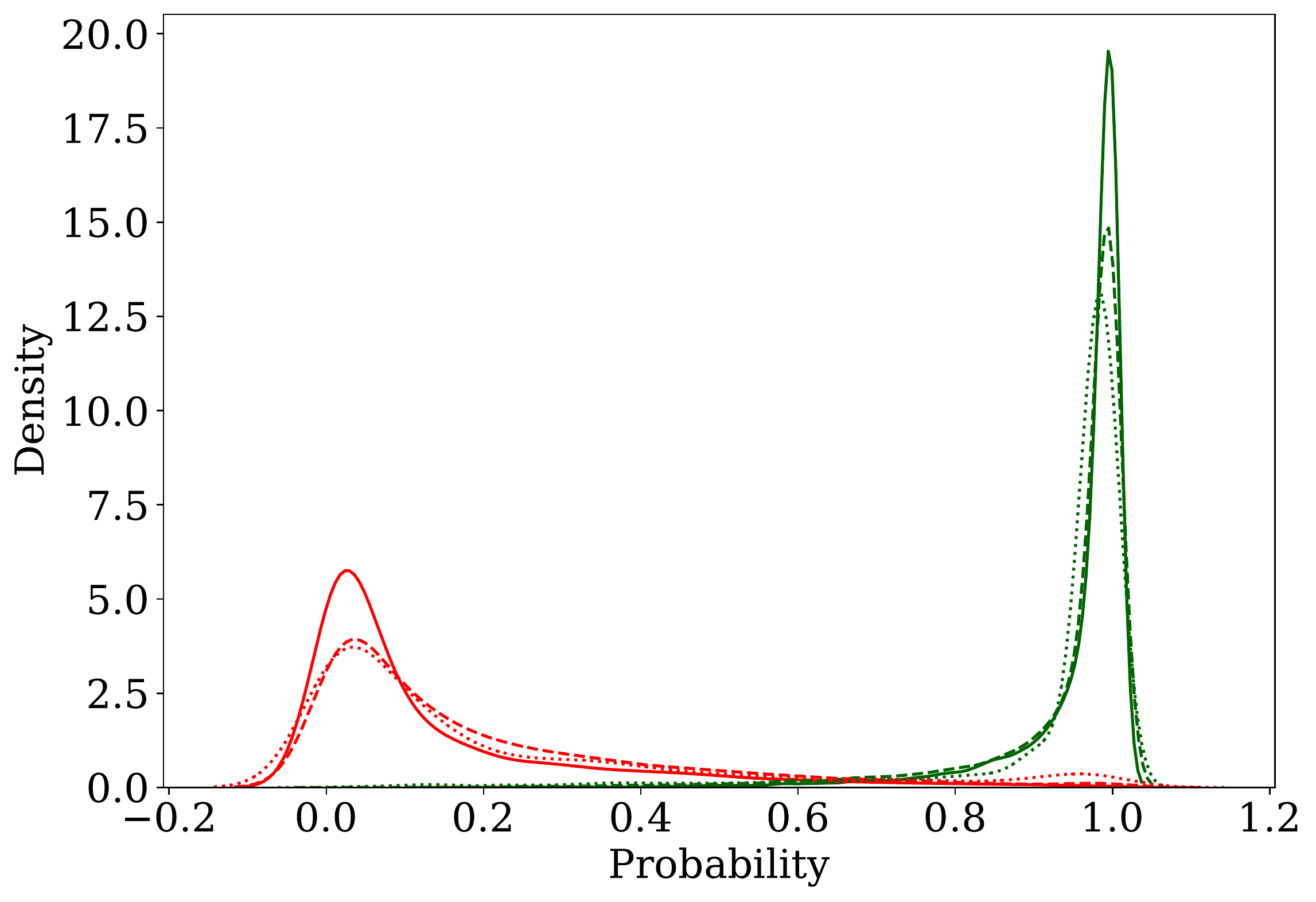}
  \caption{Logistic regression with CIFAR10 queries}
  \label{fig:utility_landscape_EMC}
\end{subfigure}\\
\begin{subfigure}{.42\textwidth}
  \centering
  \includegraphics[width=\textwidth]{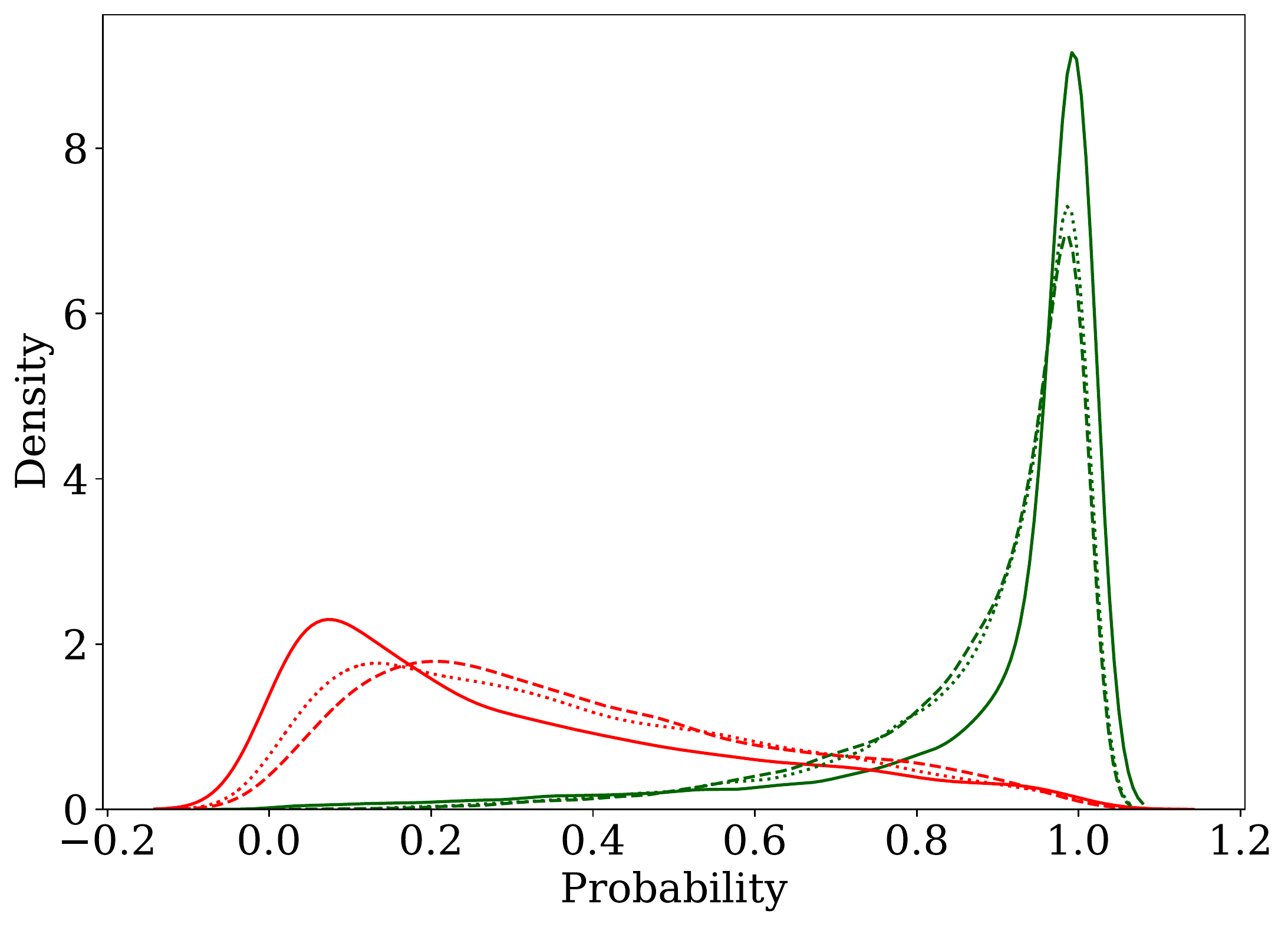}
  \caption{CNN with EMNIST queries}
  \label{fig:utility_landscape_Shapley}
\end{subfigure}
\begin{subfigure}{.42\textwidth}
  \centering
  \includegraphics[width=\textwidth]{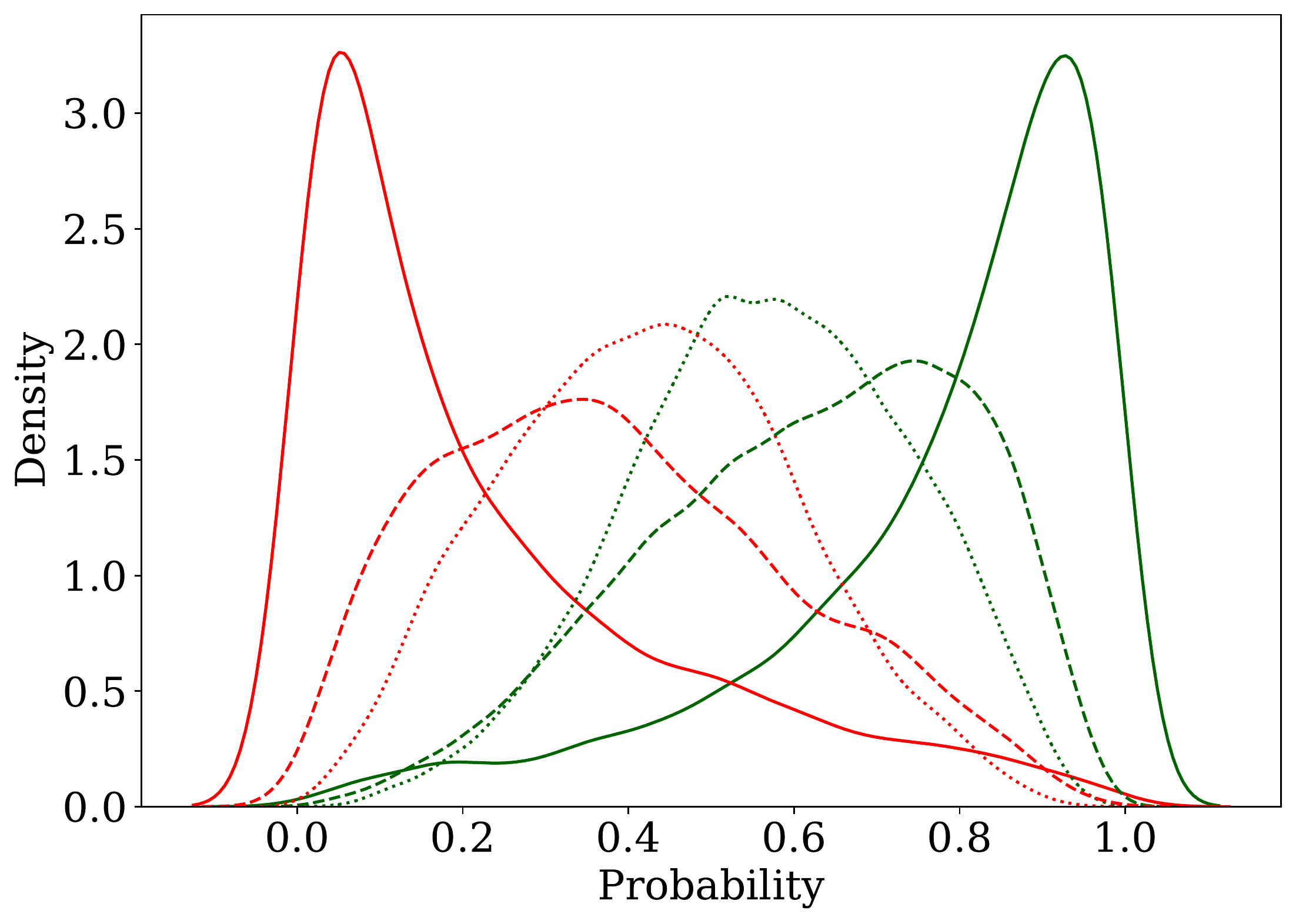}
  \caption{ResNet with CNN with ImageNet queries}
  \label{fig:utility_landscape_EMC}
\end{subfigure}\\
\begin{subfigure}{.42\textwidth}
  \centering
  \includegraphics[width=\textwidth]{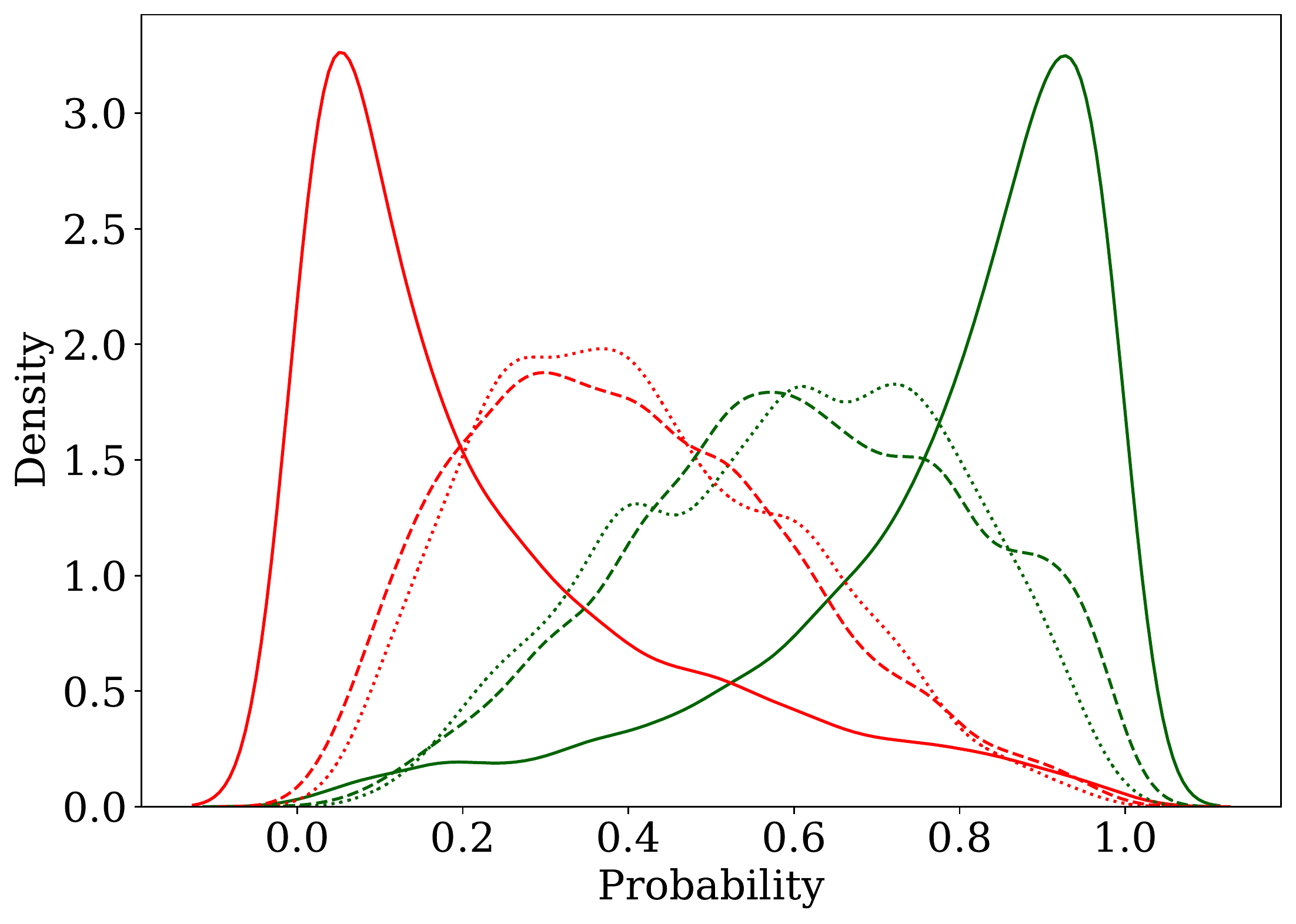}
  \caption{ResNet with ResNet18 with ImageNet queries}
  \label{fig:utility_landscape_Shapley}
\end{subfigure}
\begin{subfigure}{.42\textwidth}
  \centering
  \includegraphics[width=\textwidth]{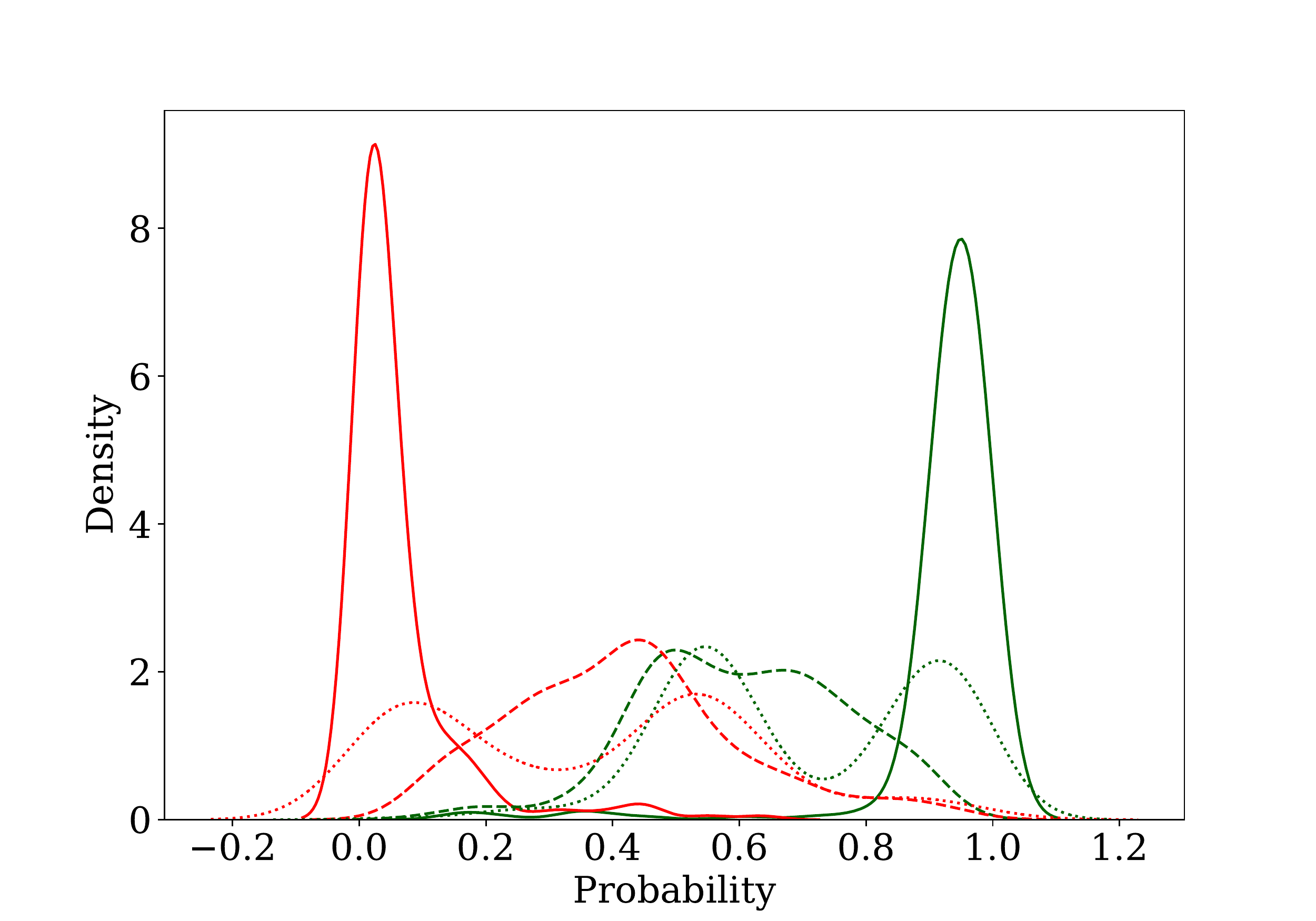}
  \caption{BERT with AGNews queries}
  \label{fig:utility_landscape_EMC}
\end{subfigure}
\caption{Comparison among membership vs. non-membership probability densities for membership attacks against models extracted by \marich, the best of competitors (BoC) and the target model. Each figure represents the model class and query dataset. Memberships and non-memberships inferred from the model extracted by \marich{} are significantly closer to the target model.\label{fig: memb_all}}
\end{figure}

\clearpage
\textbf{Observation 2.} In Figure~\ref{fig: agreements2}, we present the agreements from the member points, nonmember points and overall agreement curves for varying membership thresholds, along with the AUCs of the overall membership agreements. We see that in most cases, the agreement curves for the models extracted using \marich{} are above those for the models extracted using random sampling, thus AUCs are higher for the models extracted using \marich{}.

\begin{figure}[h!]
\centering\vspace*{-.5em}
\begin{subfigure}{0.85\textwidth}
  \centering
\includegraphics[width=\textwidth,trim={0cm 0cm 0cm 0cm}, clip]{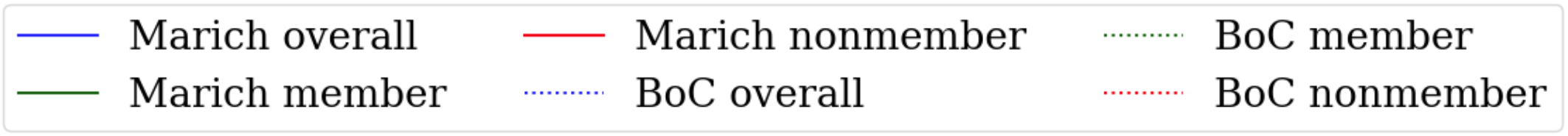}
\end{subfigure}\\
\begin{subfigure}{.42\textwidth}
  \centering
  \includegraphics[width=\textwidth]{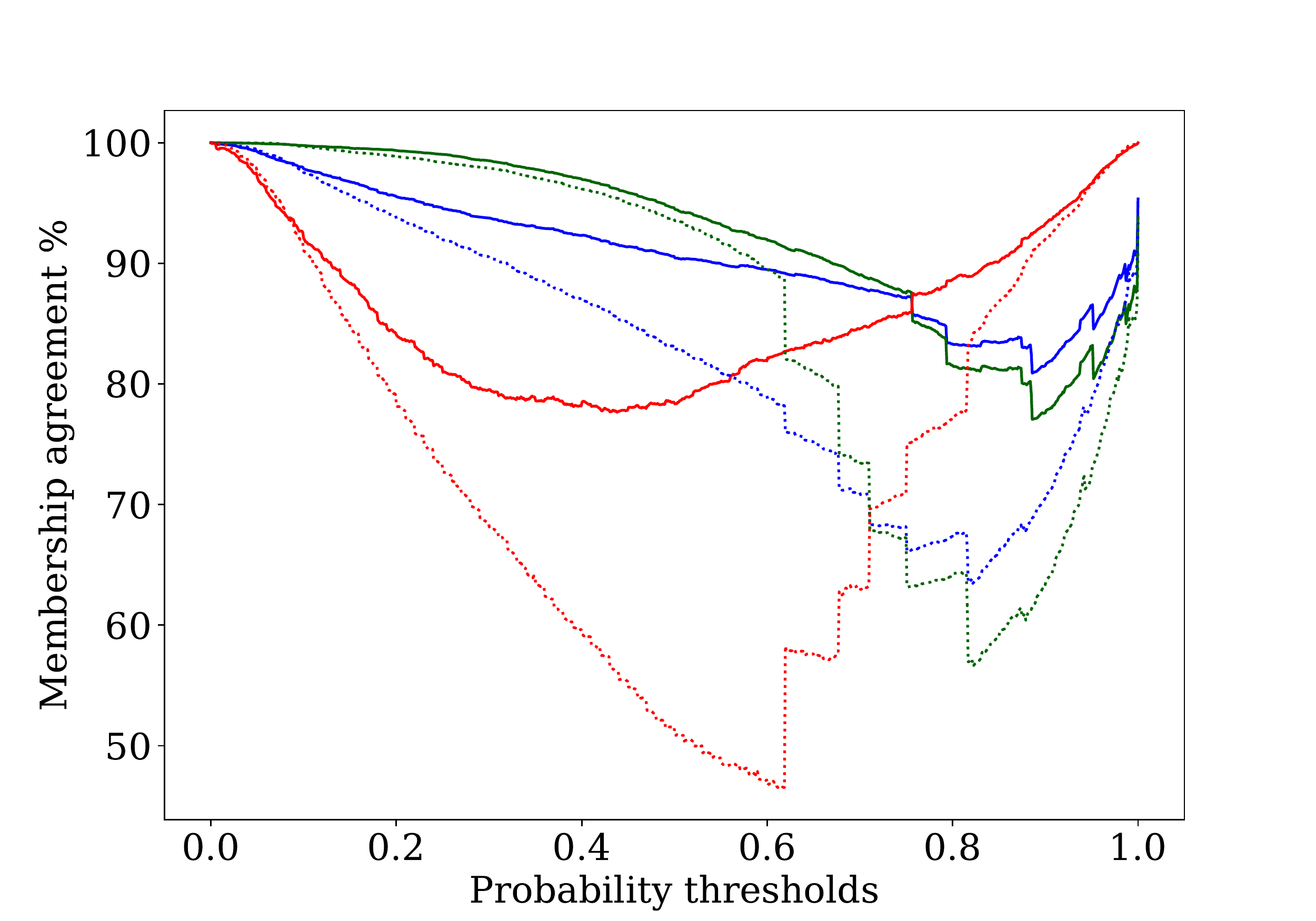}
  \caption{Logistic regression with EMNIST queries}
  \label{fig:utility_landscape_Shapley}
\end{subfigure}
\begin{subfigure}{.42\textwidth}
  \centering
  \includegraphics[width=\textwidth]{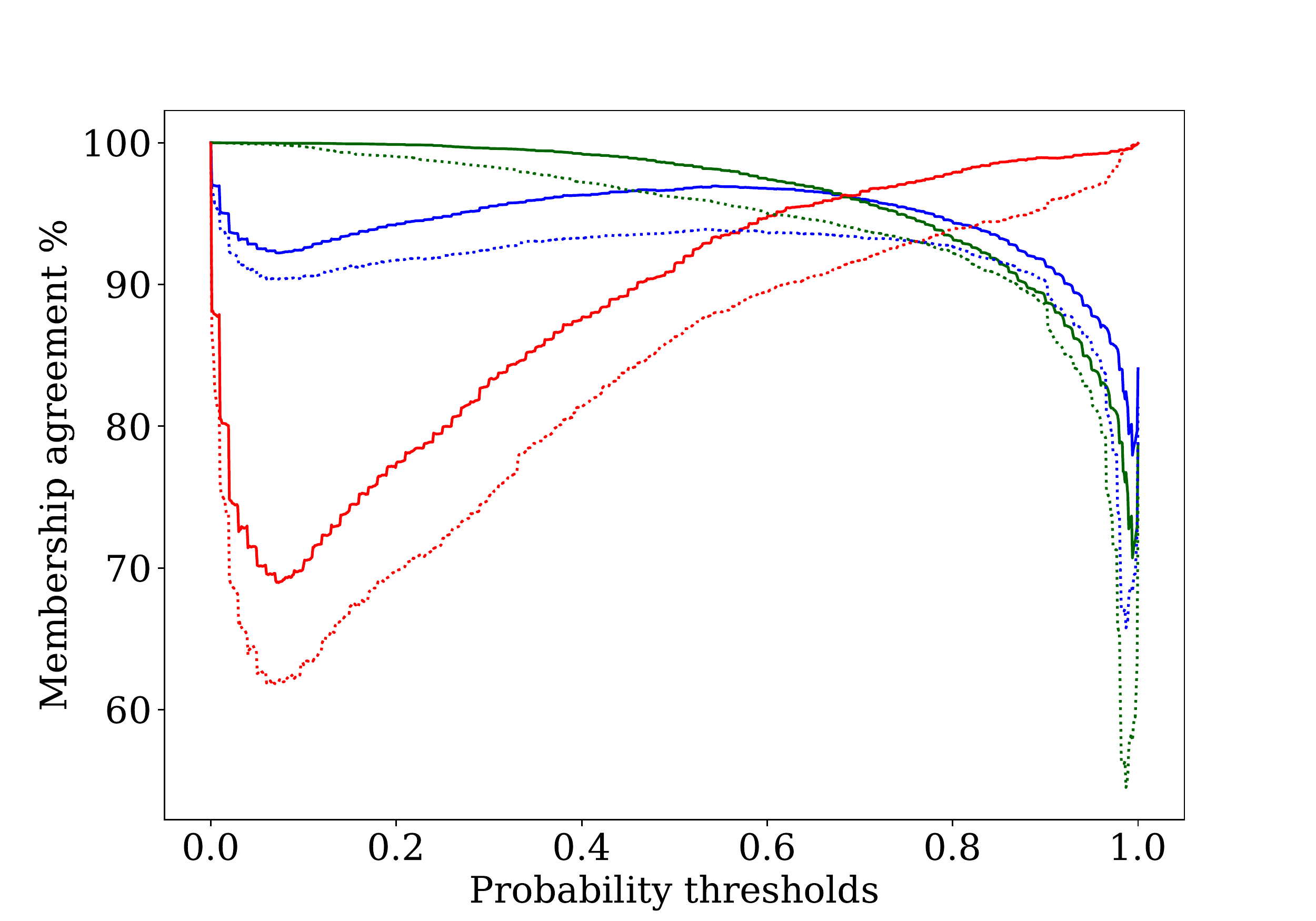}
  \caption{Logistic regression with CIFAR10 queries}
  \label{fig:utility_landscape_EMC}
\end{subfigure}\\
\begin{subfigure}{.42\textwidth}
  \centering
  \includegraphics[width=\textwidth]{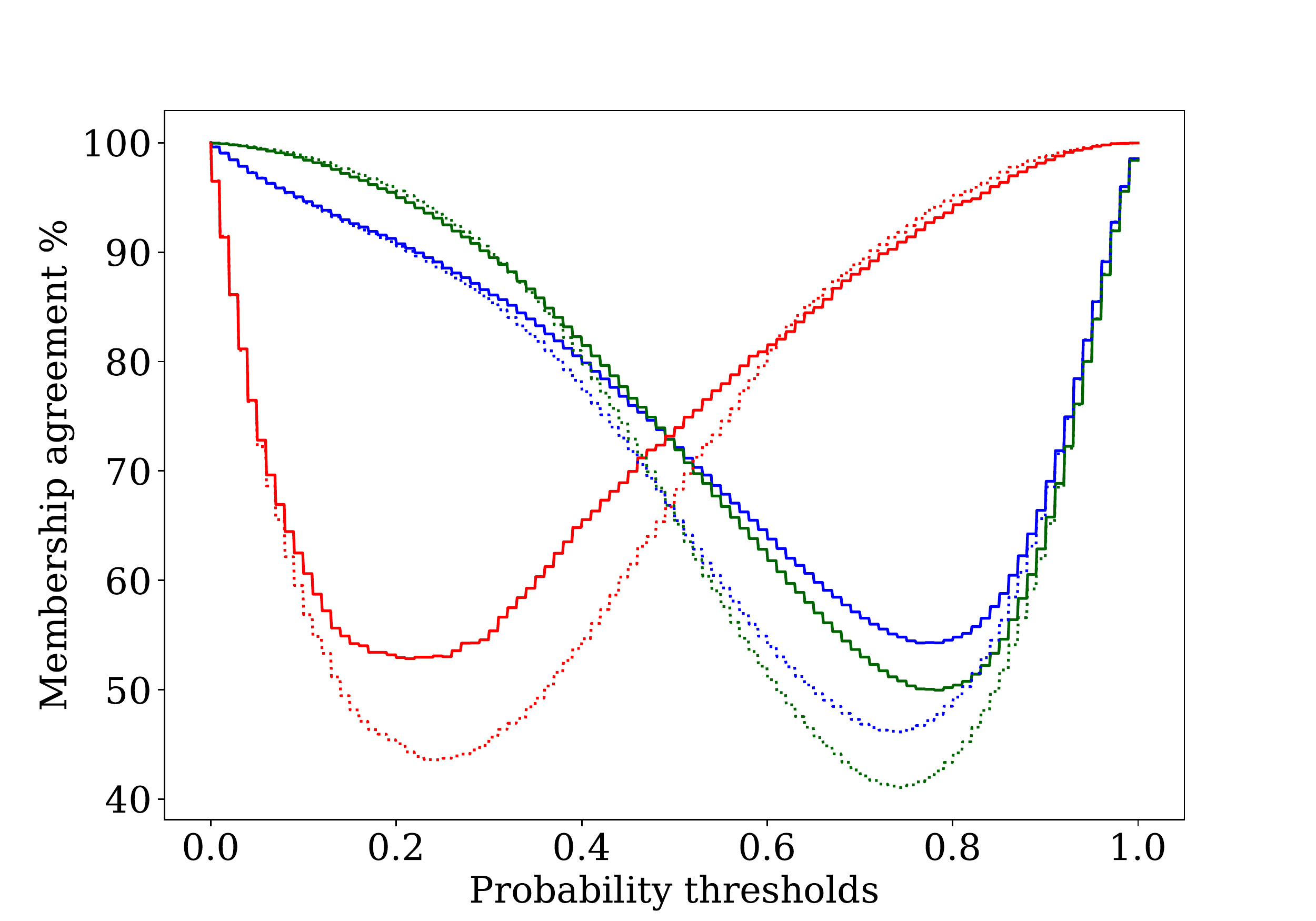}
  \caption{CNN with EMNIST queries}
  \label{fig:utility_landscape_Shapley}
\end{subfigure}
\begin{subfigure}{.42\textwidth}
  \centering
  \includegraphics[width=\textwidth]{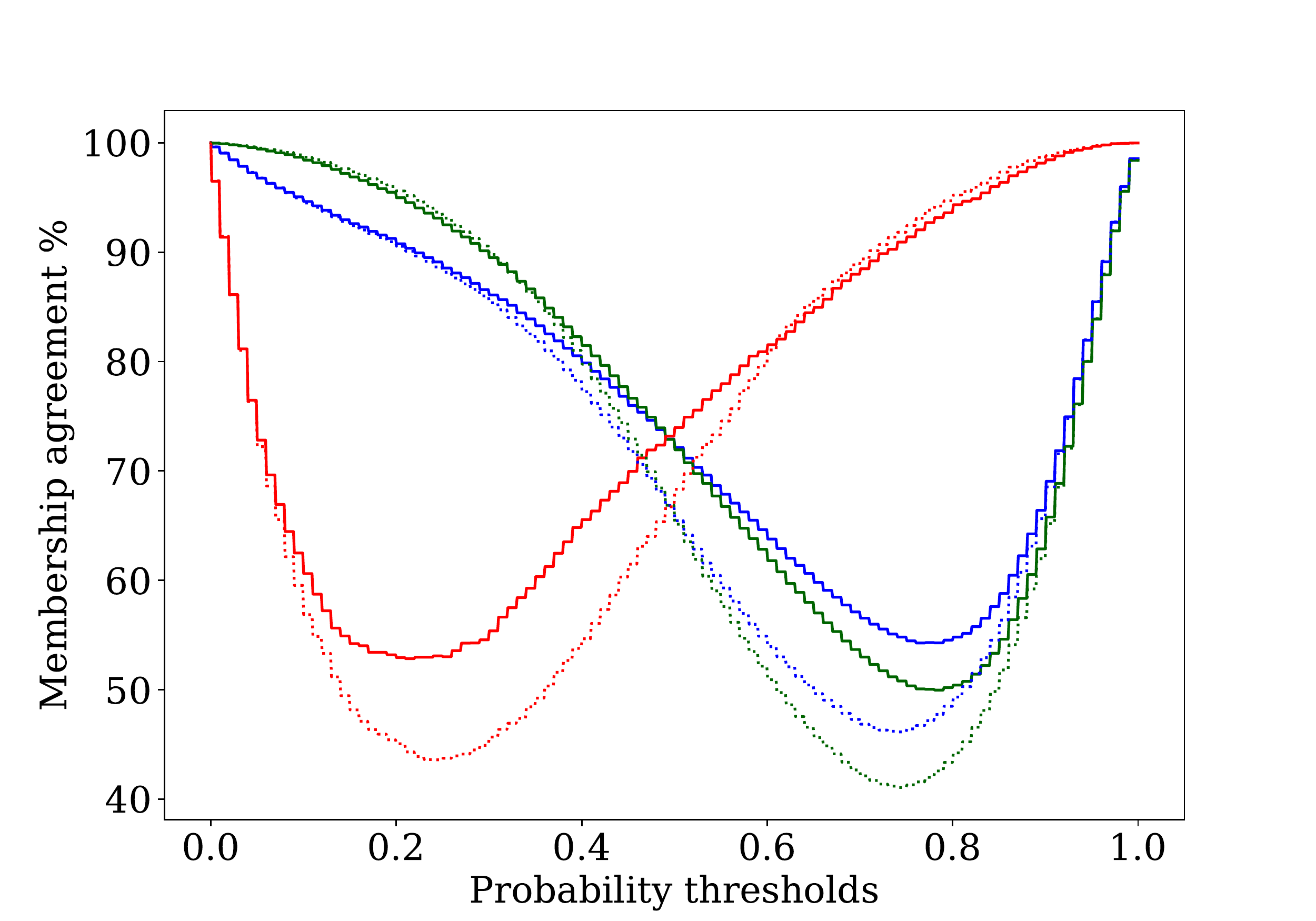}
  \caption{ResNet with CNN and ImageNet queries}
  \label{fig:utility_landscape_EMC}
\end{subfigure}\\
\begin{subfigure}{.42\textwidth}
  \centering
  \includegraphics[width=\textwidth]{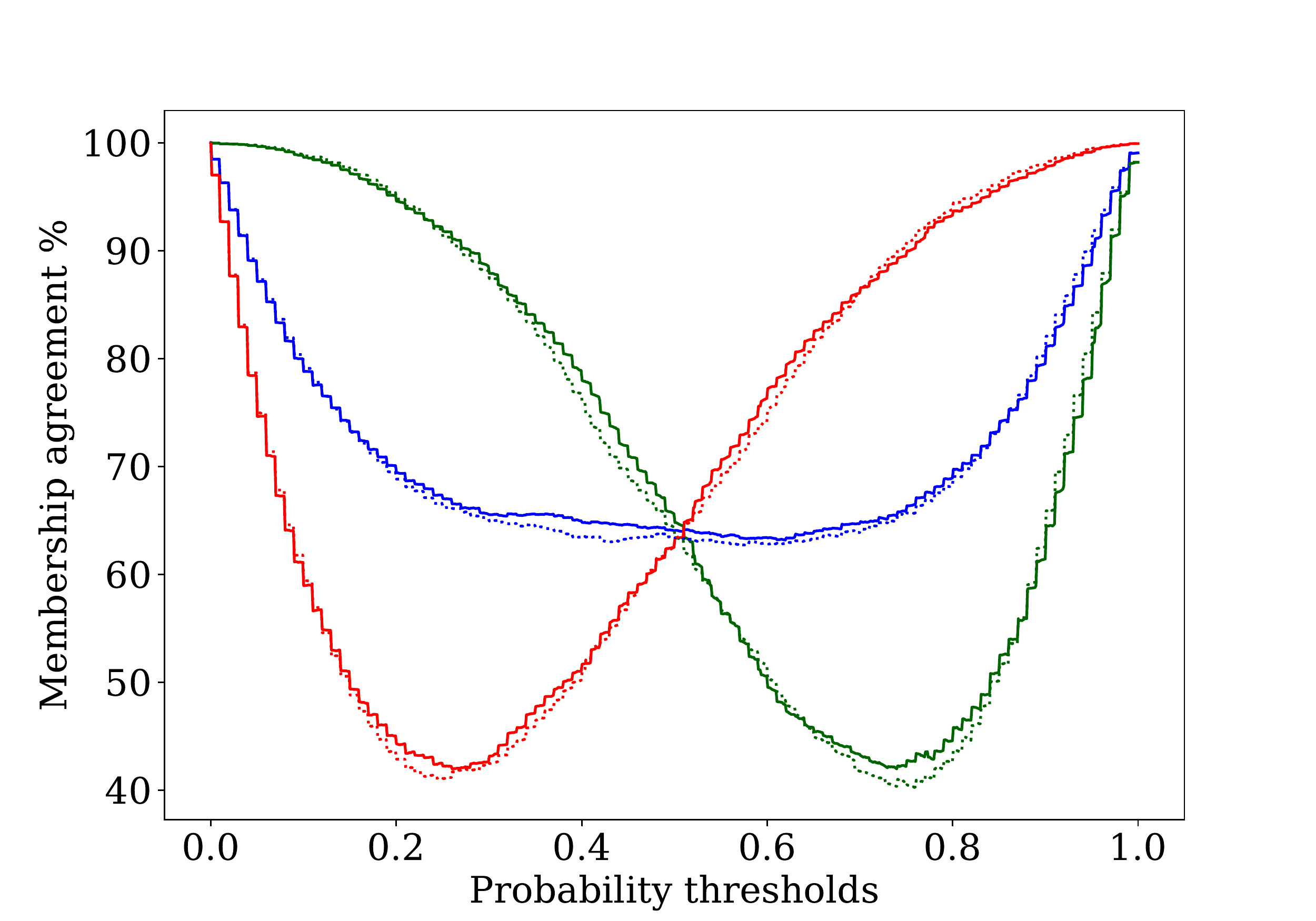}
  \caption{ResNet with ResNet18 \& ImageNet queries}
  \label{fig:utility_landscape_Shapley}
\end{subfigure}
\begin{subfigure}{.42\textwidth}
  \centering
  \includegraphics[width=\textwidth]{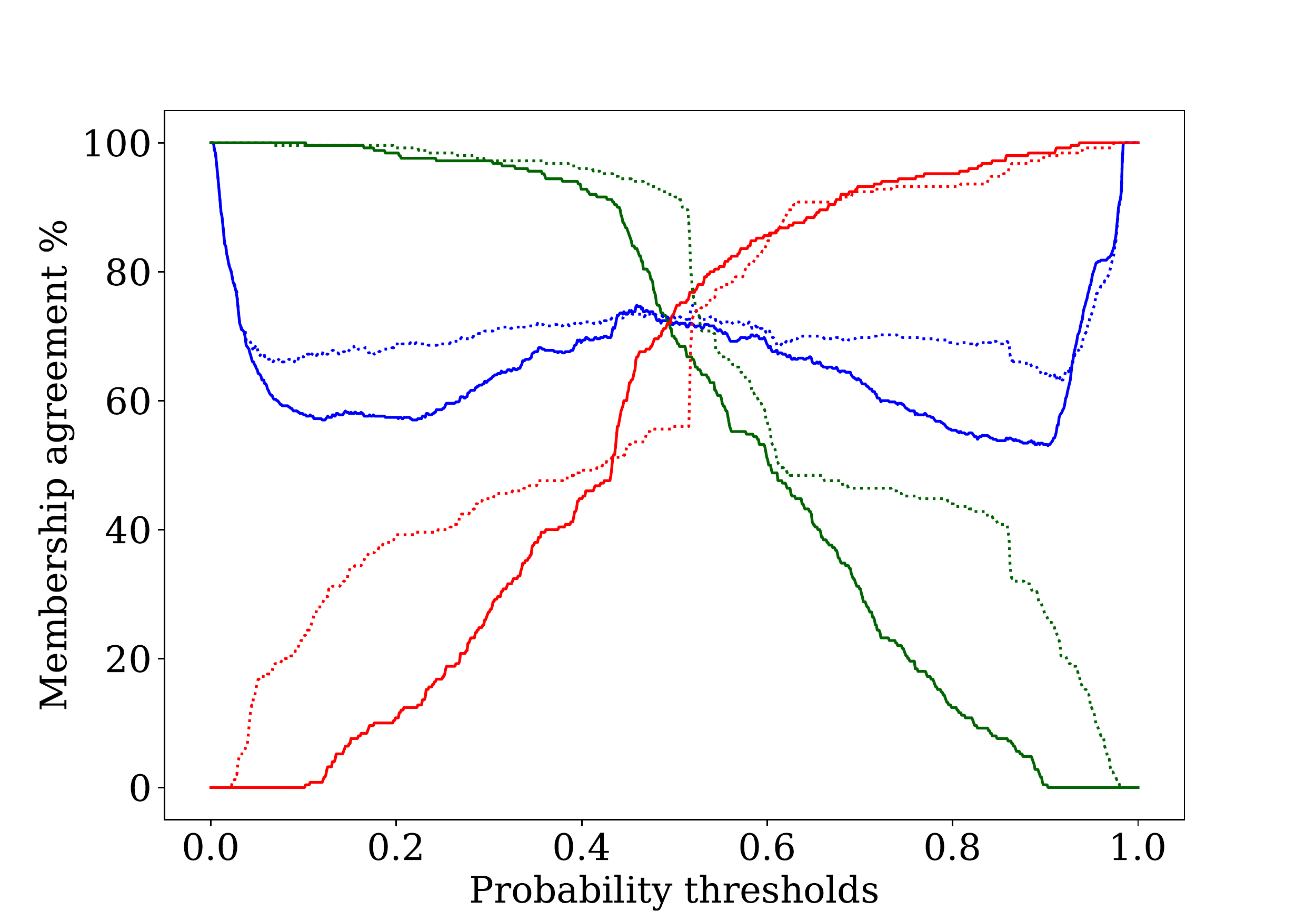}
  \caption{BERT with AGNews queries}
  \label{fig:utility_landscape_EMC}
\end{subfigure}
\caption{Comparison of membership, nonmembership and overall agreements of membership attacks against models extracted by \marich{} and the best of competitors with the target model trained with MNIST. Each figure represents the model class and query dataset. Membership agreement of the models extracted by \marich{} are higher. \label{fig: agreements2}}\vspace*{-.5em}
\end{figure}

\textbf{Observation 3.} In Table~\ref{tab:table2},~\ref{tab:table3}, and~\ref{tab:table4}, we summarise the MI accuracy on the private training dataset, Nonmembership inference accuracy on the private training dataset, Agreement in MI w.r.t. the MI on the target model, and AUC of Agreement in MI with that of the target model for Logistic Regression (LR), CNN, ResNet, and BERT target models. We observe that, while compared with other active sampling algorithms, out of 6 combinations of (target model, extracted model, query dataset) under study, the models extracted by \marich{} achieve the highest accuracy in MI and agreement in MI w.r.t. the target model, in most of the instances.

\textbf{Results.} \textit{These observations support our claim that model extraction using \marich{} gives models are accurate and informative replica of the target model.}

\begin{sidewaystable}[htpb]
\centering
\caption{Model extraction and membership inference statistics for extracting a Logistic Regression model and a CNN}\label{tab:table2} 
\resizebox{\textheight}{!}{%
\begin{tabular}{|c|c|c|c|c|c|c|c|c|c|c|c|}
    \hline
        \textbf{Member Dataset} & \textbf{Target Model} & \textbf{Attack Dataset} & \textbf{Algorithm used} & \textbf{Non-member Dataset} & \textbf{\#Queries} & \textbf{Membership acc} & \textbf{Nonmembership acc} & \textbf{Overall membership acc} & \textbf{Overall membership agreement} & \textbf{Membership agreement AUC} & \textbf{Accuracy} \\ \hline
        MNIST & LR & - & - & EMNIST & - & 95.37\% & 65.84\% & 87.99\% & - & - & 90.82\% \\ \hline
        \rowcolor{lightgray}
        MNIST & LR & EMNIST & Marich & EMNIST & 1863 & 93.43\% & \textbf{57.58\%} & \textbf{84.47\%} & \textbf{90.34\%} & \textbf{90.89\%} & \textbf{73.98+5.96\%}\\ \hline
        MNIST & LR & EMNIST & ES & EMNIST & 1863 & 97.22\% & 15.88\% & 75.88\% & 84.39\% & 82.32\% & 49.37+4.24\% \\ \hline
        MNIST & LR & EMNIST & KC & EMNIST & 1863 & 97.80\% & 14.56\% & 76.99\% & 87.84\% & 82.30\% & 47.62+6.25\% \\ \hline
        MNIST & LR & EMNIST & MS & EMNIST & 1863 & 97.63\% & 18.42\% & 77.83\% & 86.49\% & 82.74\% & 52.60+3.55\% \\ \hline
        MNIST & LR & EMNIST & LC & EMNIST & 1863 & 95.64\% & 25.10\% & 78.00\% & 80.11\% & 83.07\% & 51.968+5.05\% \\ \hline
        MNIST & LR & EMNIST & RS & EMNIST & 1863 & \textbf{98.23\%} & 11.92\% & 76.65\% & 88.08\% & 82.26\% & 46.91+5.69\% \\ \hline\hline
        MNIST & LR & - & - & CIFAR10 & - & 99.11\% & 94.76\% & 98.02\% & - & - & 90.82\% \\ \hline
        \rowcolor{lightgray}
        MNIST & LR & CIFAR10 & Marich & CIFAR10 & 959 & \textbf{97.72\%} & \textbf{92.14\%} & \textbf{96.32\%} & \textbf{96.89\%} & \textbf{94.32\%} & \textbf{81.06+0.56\%} \\ \hline
        MNIST & LR & CIFAR10 & ES & CIFAR10 & 959 & 96.47\% & 86.16\% & 93.89\% & 96.53\% & 91.29\% & 79.09+1.82\% \\ \hline
        MNIST & LR & CIFAR10 & KC & CIFAR10 & 959 & 95.87\% & 81.64\% & 92.31\% & 92.32\% & 89.64\% & 56.26+7.97\% \\ \hline
        MNIST & LR & CIFAR10 & MS & CIFAR10 & 959 & 96.79\% & 82.82\% & 93.30\% & 93.16\% & 90.92\% & 77.93+2.94\% \\ \hline
        MNIST & LR & CIFAR10 & LC & CIFAR10 & 959 & 96.19\% & 86.26\% & 93.71\% & 93.67\% & 91.53\% & 73.78+2.95\% \\ \hline
        MNIST & LR & CIFAR10 & RS & CIFAR10 & 959 & 95.84\% & 81.20\% & 92.18\% & 92.17\% & 90.11\% & 69.18+3.72\% \\ \hline\hline
        MNIST & CNN & - & - & EMNIST & - & 93.71\% & 78.76\% & 89.97\% & - & - & 94.83\% \\ \hline
        \rowcolor{lightgray}
        MNIST & CNN & EMNIST & Marich & EMNIST & 1863 & 94.55\% & 78.86\% & 90.62\% & 87.27\% & 86.72\% & \textbf{86.83+1.62\%} \\ \hline
        MNIST & CNN & EMNIST & ES & EMNIST & 1863 & \textbf{95.44\%} & 73.66\% & 89.99\% & 87.11\% & 86.38\% & 83.90+3.10\% \\ \hline
        MNIST & CNN & EMNIST & KC & EMNIST & 1863 & 95.35\% & 74.48\% & 90.13\% & 87.49\% & 85.94\% & 84.94+2.44\% \\ \hline
        MNIST & CNN & EMNIST & MS & EMNIST & 1863 & 94.59\% & 79.04\% & 90.70\% & 86.74\% & 86.08\% & 82.72+4.47\% \\ \hline
        MNIST & CNN & EMNIST & LC & EMNIST & 1863 & 94.39\% & 76.30\% & 89.86\% & 86.75\% & 86.10\% & 85.53+2.45\% \\ \hline
        MNIST & CNN & EMNIST & RS & EMNIST & 1863 & 94.17\% & \textbf{80.42\%} & \textbf{90.73\%} & \textbf{87.53\%} & \textbf{86.97\%} & 82.52+4.87\% \\ \hline
    \end{tabular}%
}
\end{sidewaystable}

\begin{sidewaystable}[htpb]
\centering
\caption{Model extraction and membership inference statistics for extracting a ResNet}\label{tab:table3} 
\resizebox{\textheight}{!}{%
\begin{tabular}{|c|c|c|c|c|c|c|c|c|c|c|c|c|}
    \hline
        \textbf{Member Dataset} & \textbf{Target Model} & \textbf{Attack Model} & \textbf{Attack Dataset} & \textbf{Algorithm used} & \textbf{Non-member Dataset} & \textbf{\#Queries} & \textbf{Membership acc} & \textbf{Nonmembership acc} & \textbf{Overall membership acc} & \textbf{Overall membership agreement} & \textbf{Membership agreement AUC} & \textbf{Accuracy} \\ \hline
        CIFAR10 & ResNet & - & - & - & ImageNet & - & 97.70\% & 56.80\% & 93.61\% & - & - & 91.82\% \\ \hline
        \rowcolor{lightgray}
        CIFAR10 & ResNet & CNN & ImageNet & Marich & ImageNet & 8429 & \textbf{70.84\%} & 68.44\% & 69.64\% & 64.14\% & 71.78\% & \textbf{56.11+1.34\%} \\ \hline
        CIFAR10 & ResNet & CNN & ImageNet & ES & ImageNet & 8429 & 65.00\% & 71.62\% & 68.31\% & 62.94\% & 71.36\% & 40.11+1.94\% \\ \hline
        CIFAR10 & ResNet & CNN & ImageNet & KC & ImageNet & 8429 & 65.80\% & \textbf{71.98\%} & 68.89\% & 63.68\% & 71.53\% & 37.21+3.41\% \\ \hline
        CIFAR10 & ResNet & CNN & ImageNet & MS & ImageNet & 8429 & 68.48\% & 71.96\% & 70.22\% & 64.48\% & 71.97\% & 41.27+0.85\% \\ \hline
        CIFAR10 & ResNet & CNN & ImageNet & LC & ImageNet & 8429 & 70.60\% & 71.68\% & \textbf{71.14\%} & \textbf{65.47\%} & \textbf{72.29\%} & 42.05+1.38\% \\ \hline
        CIFAR10 & ResNet & CNN & ImageNet & RS & ImageNet & 8429 & 67.56\% & 68.68\% & 68.12\% & 63.41\% & 71.39\% & 40.66+2.58\% \\ \hline\hline
        \rowcolor{lightgray}
        CIFAR10 & ResNet & ResNet18 & ImageNet & Marich & ImageNet & 8429 & 99.27\% & \textbf{10.54\%} & \textbf{90.40\%} & 93.84\% & \textbf{76.51\%} & \textbf{71.65+0.88\%} \\ \hline
        CIFAR10 & ResNet & ResNet18 & ImageNet & ES & ImageNet & 8429 & \textbf{100.00\%} & 0.02\% & 90.00\% & \textbf{97.29\%} & 72.33\% & 69.92+1.54\% \\ \hline
        CIFAR10 & ResNet & ResNet18 & ImageNet & KC & ImageNet & 8429 & 99.79\% & 2.06\% & 90.02\% & 94.06\% & 72.40\% & 70.67+0.12\% \\ \hline
        CIFAR10 & ResNet & ResNet18 & ImageNet & MS & ImageNet & 8429 & 99.99\% & 0.28\% & 90.02\% & 96.28\% & 72.81\% & 70.66+2.29\% \\ \hline
        CIFAR10 & ResNet & ResNet18 & ImageNet & LC & ImageNet & 8429 & 99.99\% & 0.04\% & 90.00\% & 96.77\% & 71.94\% & 70.26+1.44\% \\ \hline
        CIFAR10 & ResNet & ResNet18 & ImageNet & RS & ImageNet & 8429 & 99.94\% & 1.40\% & 90.08\% & 95.41\% & 72.94\% & 70.49+2.02\% \\ \hline
    \end{tabular}%
}

\bigskip\bigskip  
\caption{Model extraction and membership inference statistics for extracting a BERT model}\label{tab:table4} 
\resizebox{\textheight}{!}{%
\begin{tabular}{|c|c|c|c|c|c|c|c|c|c|c|c|c|}
    \hline
        \textbf{Member Dataset} & \textbf{Target Model} & \textbf{Attack Model} & \textbf{Attack Dataset} & \textbf{Algorithm used} & \textbf{Non-member Dataset} & \textbf{Queries} & \textbf{Membership acc} & \textbf{Nonmembership acc} & \textbf{Overall membership acc} & \textbf{Overall membership agreement} & \textbf{Membership agreement AUC} & \textbf{Accuracy} \\ \hline
        BBC News & BERT & - & - & - & AG News & - & - & - & - & - & - & 98.62\% \\ \hline
        \rowcolor{lightgray}
        BBC News & BERT & BERT & AG News & Marich & AG News & 474 & \textbf{83.60\%} & 66.80\% & \textbf{75.20\%} & \textbf{74.80\%} & 65.14\% & \textbf{85.45+5.96\%} \\ \hline
        BBC News & BERT & BERT & AG News & ES & AG News & 474 & 82.80\% & 48.40\% & 65.60\% & 66.00\% & 59.36\% & 79.25+12.67\% \\ \hline
        BBC News & BERT & BERT & AG News & KC & AG News & 474 & 68.40\% & 55.60\% & 62.00\% & 63.39\% & 61.10\% & 70.45+15.94\% \\ \hline
        BBC News & BERT & BERT & AG News & MS & AG News & 474 & 77.20\% & \textbf{72.80\%} & 75.00\% & 62.40\% & \textbf{71.07\%} & 72.82+11.59\% \\ \hline
        BBC News & BERT & BERT & AG News & LC & AG News & 474 & 80.00\% & 43.20\% & 61.60\% & 74.80\% & 57.93\% & 78.65+7.22\% \\ \hline
        BBC News & BERT & BERT & AG News & RS & AG News & 474 & 78.00\% & 47.60\% & 62.80\% & 63.80\% & 58.43\% & 76.31+16.78\% \\ \hline
    \end{tabular}%
}
\end{sidewaystable}

\clearpage
\section{Significance and comparison of three sampling strategies}\label{sec:runtime}
Given the bi-level optimization problem, we came up with \marich{} in which three sampling methods are used in the order: (i) \textsc{EntropySampling}, (ii) \textsc{EntropyGradientSampling}, and (iii) \textsc{LossSampling}.

These three sampling techniques contribute to different goals:
\begin{itemize}[leftmargin=*]
    \item \textsc{EntropySampling} selects points about which the classifier at a particular time step is most confused
    \item \textsc{EntropyGradientSampling} uses gradients of entropy of outputs of the extracted model w.r.t. the inputs as embeddings and selects points behaving most diversely at every time step.
    \item \textsc{LossSampling} selects points which produce highest loss when loss is calculated between target model's output and extracted model's output.
\end{itemize}
One can argue that the order is immaterial for the optimization problem. But looking at the algorithm practically, we see that \textsc{EntropyGradientSampling} and \textsc{LossSampling} incur much higher time complexity than \textsc{EntropySampling}. Thus, using \textsc{EntropySampling} on the entire query set is more efficient than the others. This makes us put \textsc{EntropySampling} as the first query selection strategy.

As per the optimization problem in Equation~(\ref{eqn:query_select}), we are supposed to find points that show highest mismatch between the target and the extracted models after choosing the query subset maximising the entropy. This leads us to the idea of \textsc{LossSampling}. But as only focusing on loss between models may choose points from one particular region only, and thus, decreasing the diversity of the queries. We use \textsc{EntropyGradientSampling} before \textsc{LossSampling}. This ensures selection of diverse points with high performance mismatch.

In Figure~\ref{fig:Algo_time}, we experimentally see the time complexities of the three components used. These are calculated by applying the sampling algorithms on a logistic regression model, on mentioned slices of MNIST dataset.

\begin{figure}[h!]
\centering
  \includegraphics[width=0.5\textwidth]{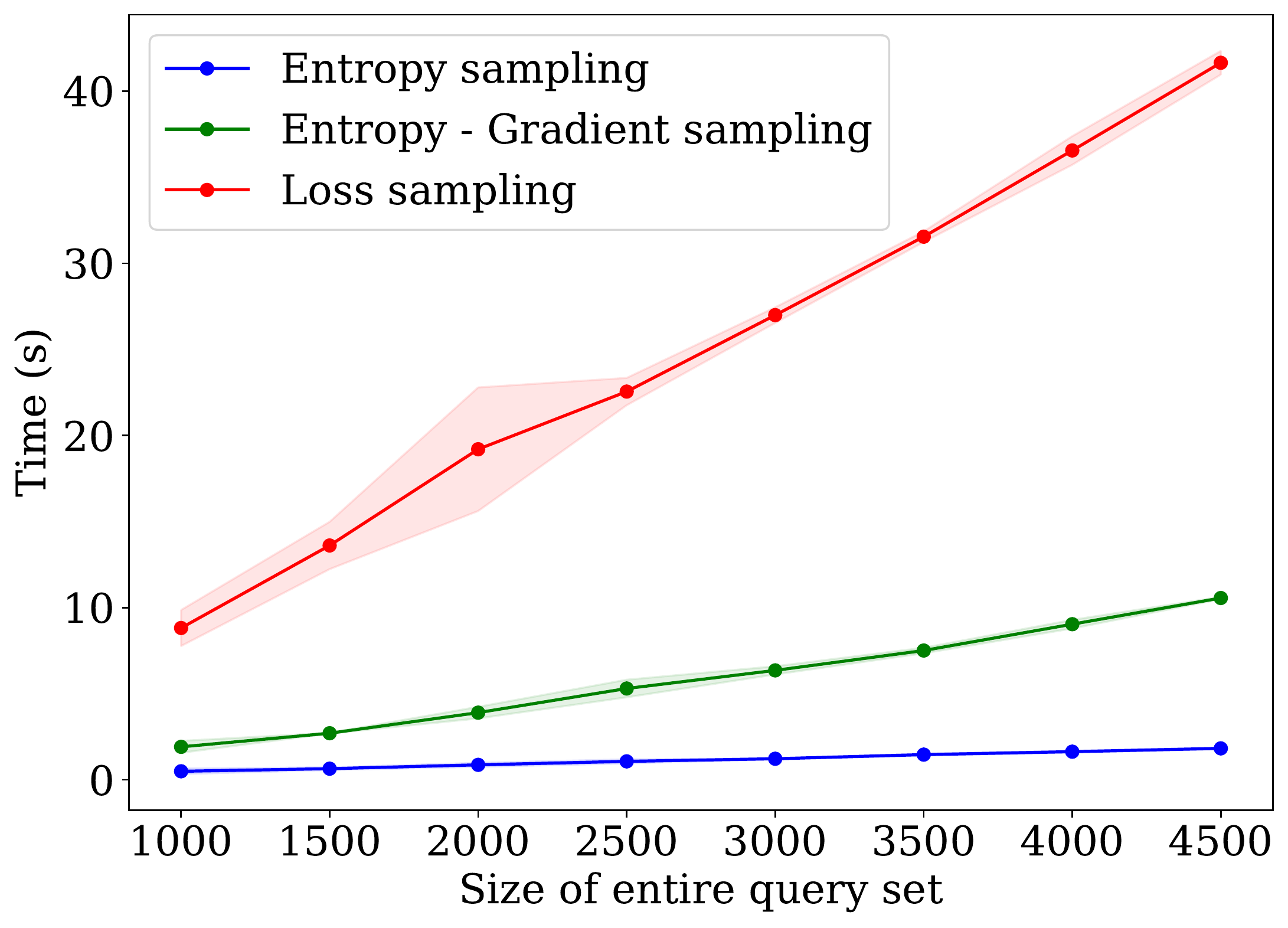}
  \caption{Runtime comparison of three sampling strategies to select queries from 4500 datapoints.}\label{fig:Algo_time}
\end{figure}
\begin{table}[h!]
\centering
 \caption{Time complexity of different sampling Strategies}
    \begin{tabular}{|l|c|c|c|}   
    \hline
        \textbf{Sampling Algorithm} & \textbf{Query space size} & \textbf{$\#$Selected queries} & \textbf{Time (s)} \\ \hline
        Entropy Sampling & 4500 & 100 & 1.82 $\pm$ 0.04 \\ \hline
        Entropy-Gradient Sampling & 4500 & 100 & 10.56 $\pm$ 0.07 \\ \hline
        Loss Sampling & 4500 & 100 & 41.64 $\pm$ 0.69 \\ \hline
    \end{tabular}
\end{table}

\newpage
\section{Performance against differentially private target models}
In this section, we aim to verify performance of \marich{} against privacy-preserving mechanisms. Specifically, we apply a $(\varepsilon, \delta)$-Differential Privacy (DP) inducing mechanism~\citep{dwork2006calibrating,dandekar2021differential} on the target model to protect the private training dataset. There are three types of methods to ensure DP: output perturbation~\citep{dwork2006calibrating}, objective perturbation~\citep{chaudhuri2011differentially,dandekar2018differential}, and gradient perturbation~\citep{dpsgd}. Since output perturbation and gradient perturbation methods scale well for nonlinear deep networks, we focus on them as the defense mechanism against \marich's queries.

\paragraph{Gradient perturbation-based defenses.}  DP-SGD~\citep{dpsgd} is used to train the target model on the member dataset. This mechanism adds noise to the gradients and clip them while training the target model. We use the default implementation of Opacus~\citep{opacus} to conduct the training in PyTorch. 

Following that, we attack the $(\varepsilon, \delta)$-DP target models using \marich{} and compute the corresponding accuracy of the extracted models. In Figure~\ref{fig:dp}, we show the effect of different privacy levels $\varepsilon$ on the achieved accuracy of the extracted Logistic Regression model trained with MNIST dataset and queried with EMNIST dataset. Specifically, we assign $\delta= 10^{-5}$ and vary $\varepsilon$ in $\lbrace 0.2, 0.5, 1, 2, \infty\rbrace$. Here, $\varepsilon = \infty$ corresponds to the model extracted from the non-private target model.

We observe that the accuracy of the models extracted from private target models are approximately $2.3-7.4\% $ lower than the model extracted from the non-private target model. This shows that performance of \marich{} decreases against DP defenses but not significantly.

\begin{figure}[h!]
\centering
  \includegraphics[width=0.46\textwidth]{}
  \caption{Performance of models extracted by \marich{} against $(\varepsilon, \delta)$-differentially private target models trained using DP-SGD. We consider different privacy levels $\varepsilon$ and $\delta=10^{-5}$. Accuracy of the extracted models decrease with increase in privacy (decrease in $\epsilon$).}\label{fig:dp}
\end{figure}
\paragraph{Output perturbation-based defenses.} Perturbing output of an algorithm against certain queries with calibrated noise, in brief output perturbation, is one of the basic and oldest form of privacy-preserving mechanism~\citep{dwork2006calibrating}. Here, we specifically deploy the Laplace mechanism, where a calibrated Laplace noise is added to the output of the target model generated against some queries. The noise is sampled from a Laplace distribution $\mathrm{Lap}(0, \frac{\Delta}{\varepsilon})$, where $\Delta$ is sensitivity of the output and $\varepsilon$ is the privacy level. This mechanism ensures $\varepsilon$-DP.

We compose a Laplace mechanism to the target model while responding to \marich's query and evaluate the change in accuracy of the extracted model as the impact of the defense mechanism.
We use a logistic regression model trained on MNIST as the target model. We query it using EMNIST and CIFAR10 datasets respectively. We vary $\varepsilon$ in $\lbrace 0.25, 2, 8, \infty\rbrace$. For each $\varepsilon$ and query dataset, we report the mean and standard deviation of accuracy of the extracted models on a test dataset. Each experiment is run 10 times.

We observe that decrease in $\varepsilon$, i.e. increase in privacy, causes decrease in accuracy of the extracted model. For EMNIST queries (Figure~\ref{fig:dp_out_emnist}), the degradation in accuracy is around $10\%$ for $\varepsilon=2,8$ but we observe a significant drop for $\varepsilon=0.25$. For CIFAR10 queries (Figure~\ref{fig:dp_out_cifar}), $\varepsilon=8$ has practically no impact on the performance of the extracted model. But for $\varepsilon= 2$ and $0.25$, the accuracy of extracted models drop down very fast.

Thus, we conclude that output perturbation defends privacy of the target model against \marich{} for smaller values of $\varepsilon$. 
But for larger values of $\varepsilon$, the privacy-preserving mechanism might not save the target model significantly against \marich{}.

\begin{figure}[h!]
\centering
\begin{subfigure}{.44\textwidth}
  \centering
   \includegraphics[width=\textwidth]{LR_privacy.pdf}
  \caption{LR with EMNIST queries}
  \label{fig:dp_out_emnist}
\end{subfigure}\hspace{1em}
\begin{subfigure}{.44\textwidth}
  \centering
   \includegraphics[width=\textwidth]{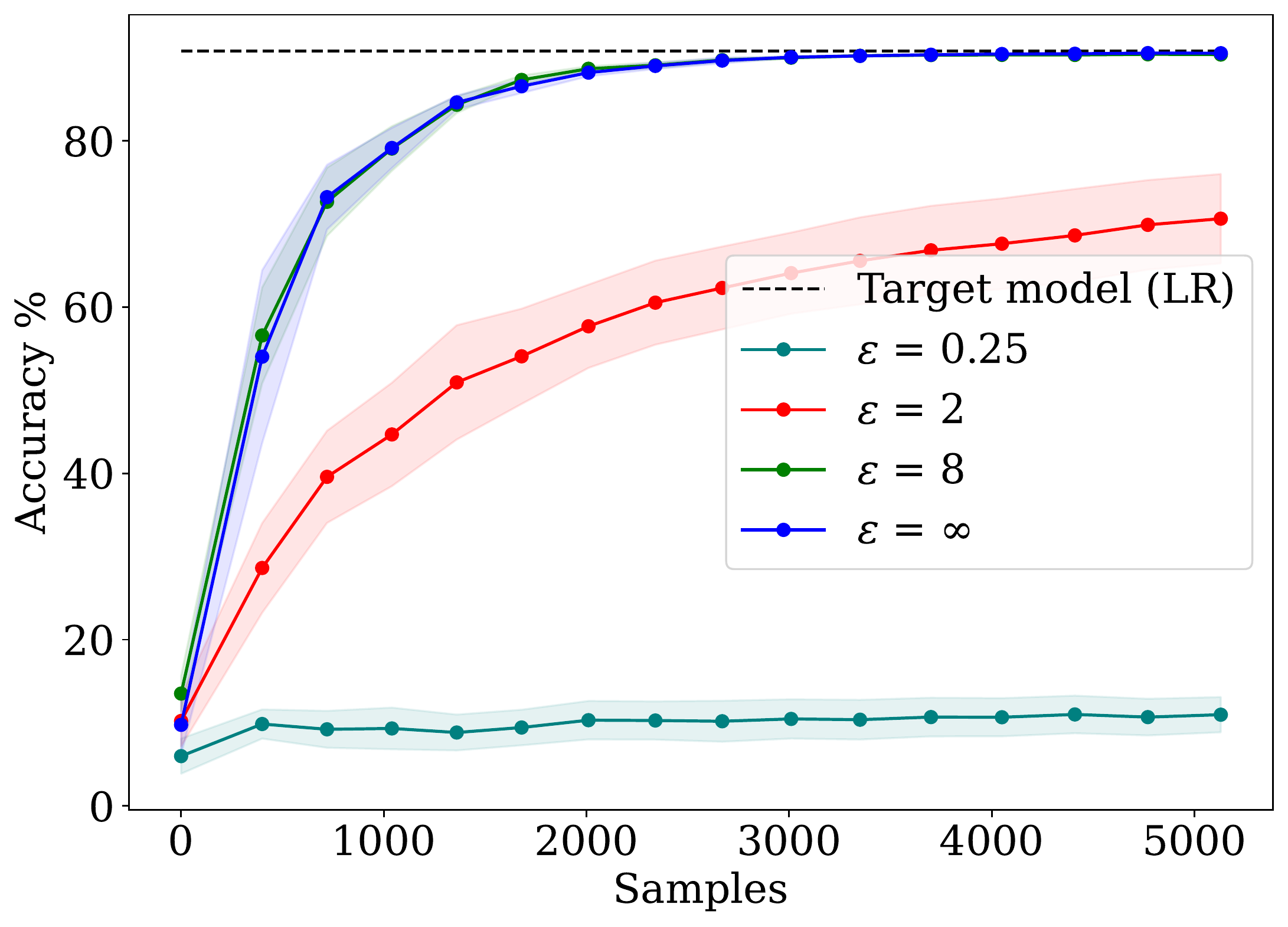}
  \caption{LR with CIFAR10 queries}
  \label{fig:dp_out_cifar}
\end{subfigure}
  \caption{Performance of models extracted by \marich{} against target models that perturbs the output of the queries to achieve $\varepsilon$-DP. We consider different privacy levels $\varepsilon$. Accuracy of the extracted models decrease with increase in privacy (decrease in $\epsilon$).}\label{fig:dp_2}
\end{figure}

\clearpage
\section{Effects of model mismatch}
From Equation~(\ref{eqn:query_select}), we observe that functionality of \marich{} is not constrained by selecting the same model class for both the target model $\target$ and the extracted model $\extracted$. 
To elaborately study this aspect, in this section, we conduct experiments to show \marich's capability to handle model mismatch and impact of model mismatch on performance of the extracted models.

Specifically, we run experiments for three cases: (1) \textit{extracting an LR model with LR and CNN}, (2) \textit{extracting a CNN model with LR and CNN}, and (3) \textit{extracting a ResNet model with CNN and ResNet18}. 
We train the target LR and the target CNN model on MNIST dataset.
We further extract these two models using EMNIST as the query datasets. 
We train the target ResNet model on CIFAR10 dataset and extract it using ImageNet queries.
The results on number of queries and achieved accuracies are summarised in Table~\ref{tab:cross_model} and Figure~\ref{fig:model_mismatch}.

\textbf{Observation 1.} In all the three experiments, we use \marich{} without any modification for both the cases when the model classes match and mismatch. This shows \textit{universality and model-obliviousness of \marich{} as a model extraction attack}.

\textbf{Observation 2.} From Figure~\ref{fig:model_mismatch}, we observe that model mismatch influences performance of the model extracted by \marich{}. When we extract the LR target model with LR and CNN, we observe that both the extracted models achieve almost same accuracy and the extracted CNN model achieves even a bit more accuracy than the extracted LR model. In contrast, when we extract the CNN target model with LR and CNN, we observe that the extracted LR model achieves lower accuracy than the extracted CNN model. Similar observations are found for extracting the ResNet with ResNet18 and CNN, respectively.

\textbf{Conclusions.} From these observations, we conclude that \marich{} can function model-obliviously. We also observe that if we use a less complex model to extract a more complex model, the accuracy drops significantly. But if we extract a low complexity model with a higher complexity one, we obtain higher accuracy instead of model mismatch. This is intuitive as the low-complexity extracted model might have lower representation capacity to mimic the non-linear decision boundary of the high-complexity model but the opposite is not true. 
In future, it would be interesting to delve into the learning-theoretic origins of this phenomenon.

\begin{figure}[h!]
\centering
\begin{subfigure}{.32\textwidth}
  \centering
  \includegraphics[width=\textwidth]{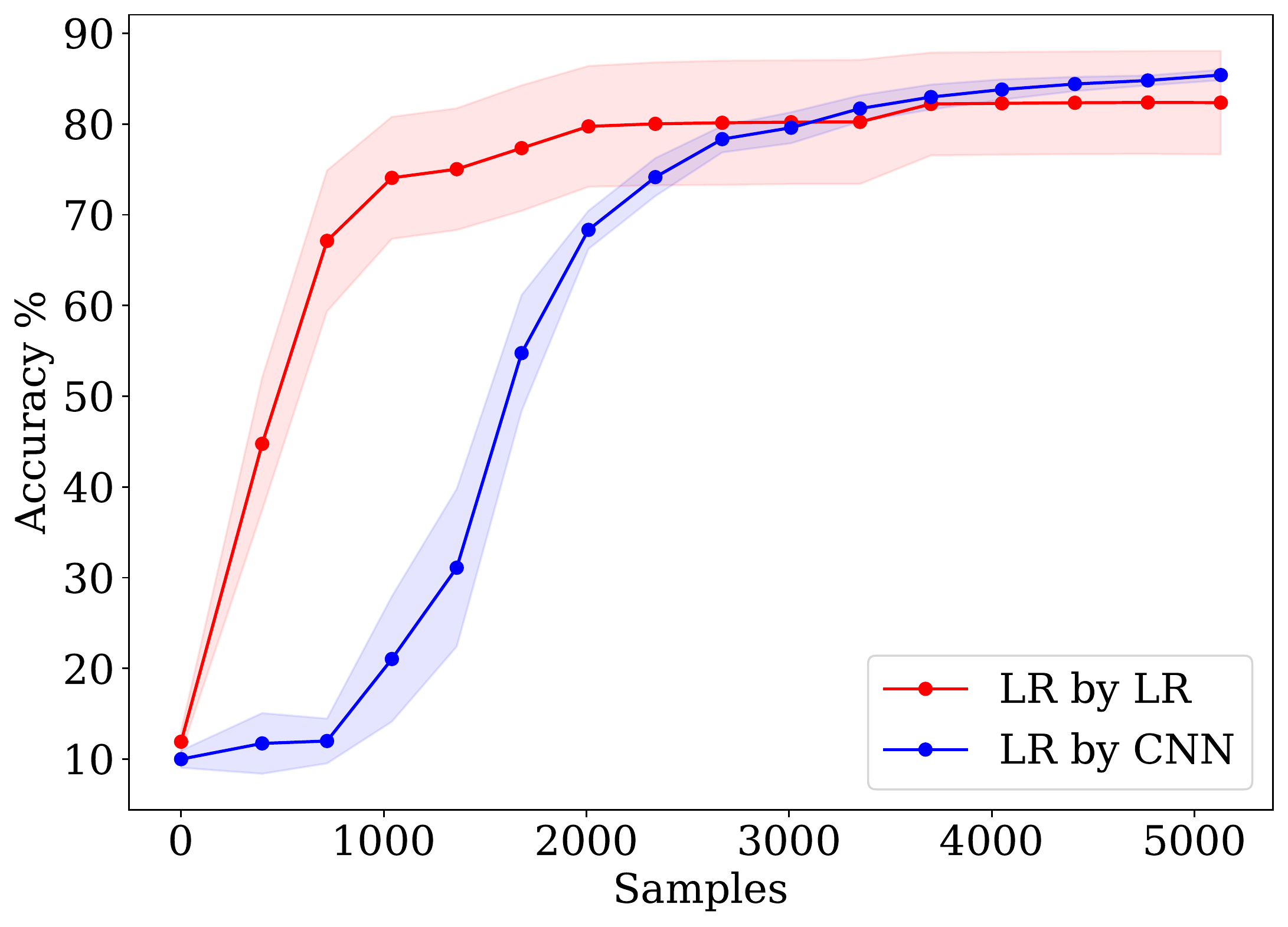}
  \caption{LR extracted by LR\\ vs. LR extracted by CNN}\label{fig:lr_by_cnn}
\end{subfigure}\hfill
\begin{subfigure}{.32\textwidth}
  \centering
    \includegraphics[width=\textwidth]{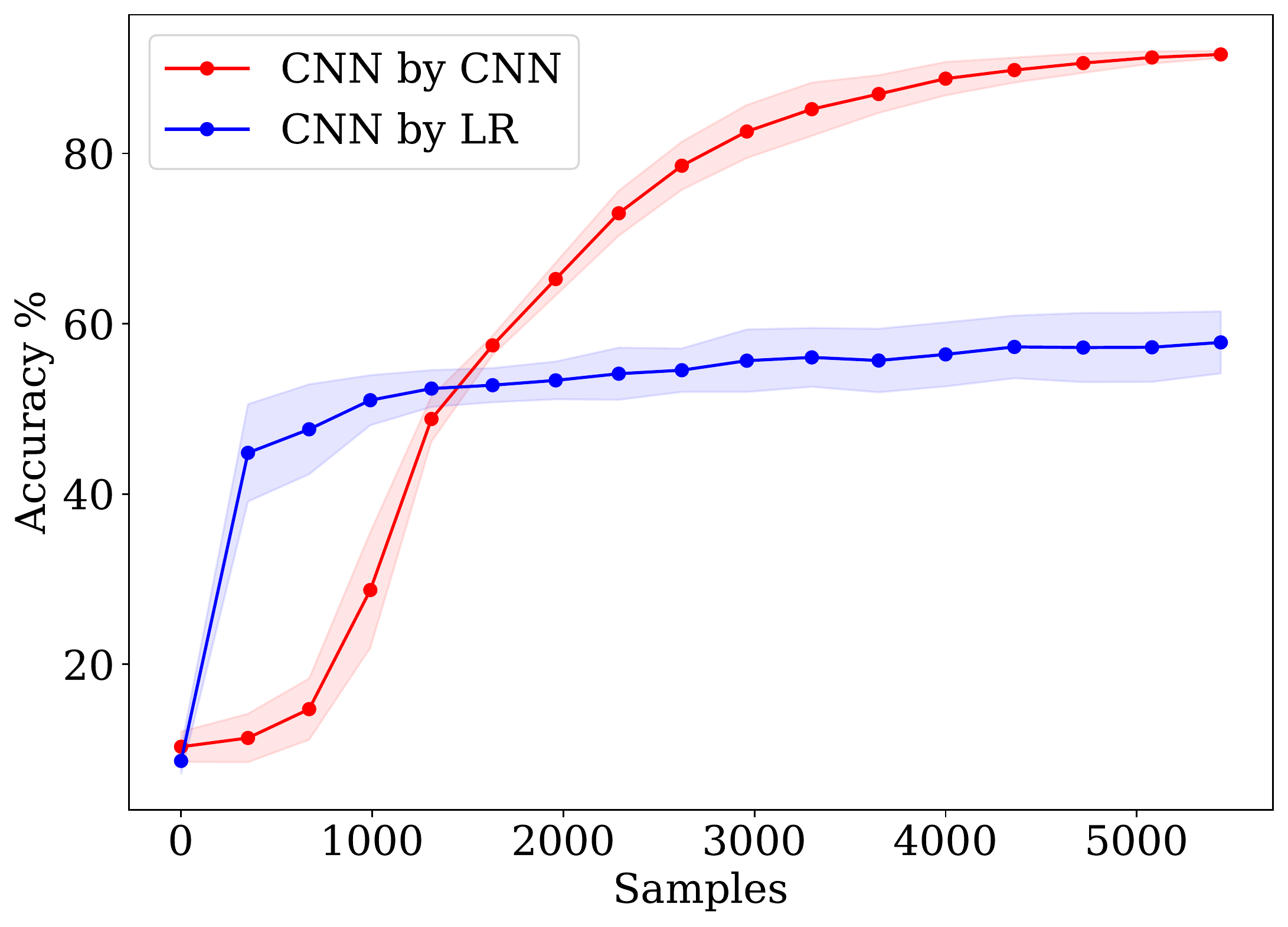}
  \caption{CNN extracted by CNN\\ vs. CNN extracted by LR}\label{fig:cnn_by_lr}
\end{subfigure}\hfill
\begin{subfigure}{.32\textwidth}
  \centering
    \includegraphics[width=\textwidth]{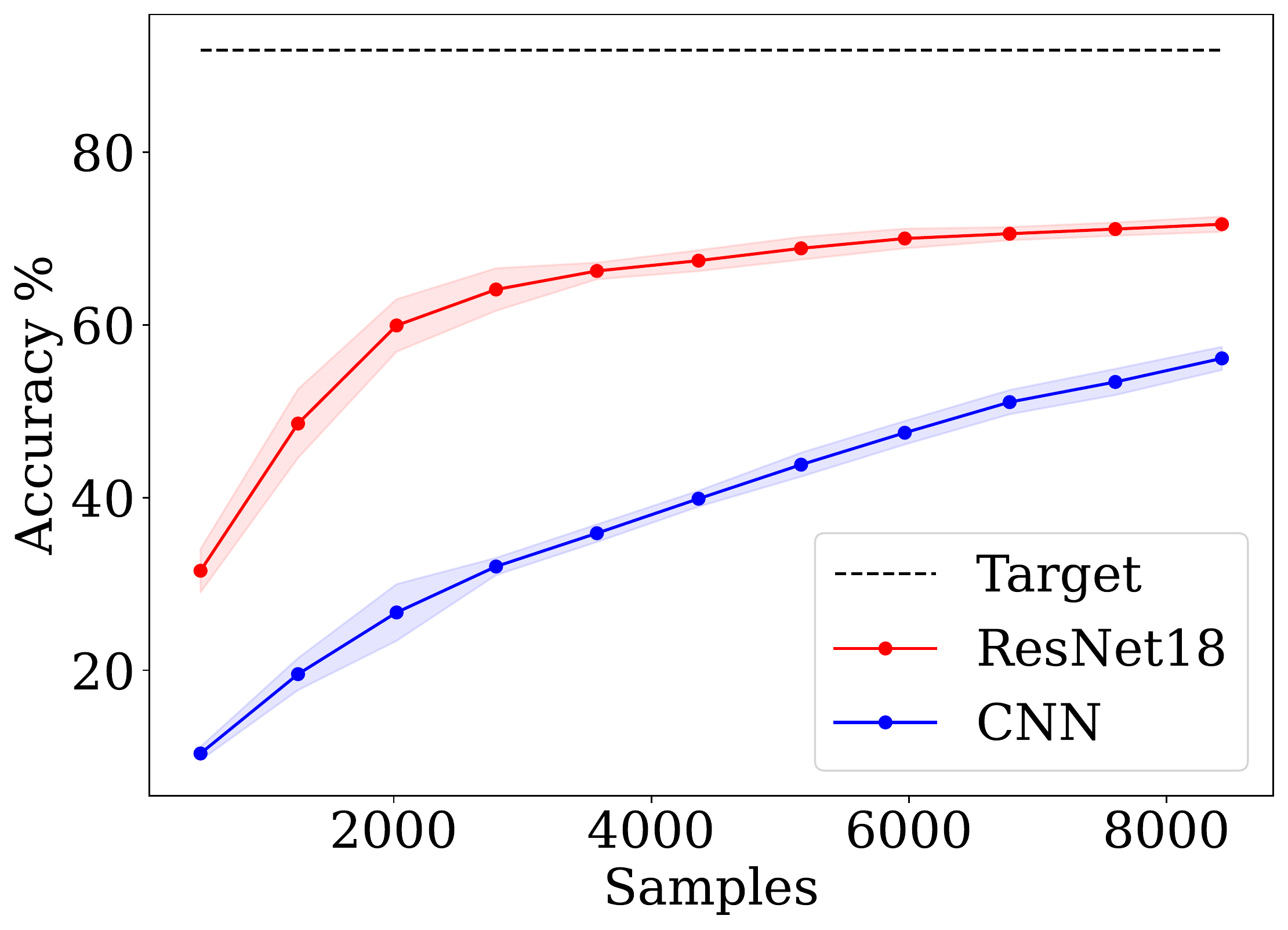}
  \caption{ResNet extracted by CNN \\ vs. ResNet extracted by ResNet18}\label{fig:resnet_compare}
\end{subfigure}
\caption{Effect of model mismatches on models extracted by \marich{}.}\label{fig:model_mismatch}
\end{figure}

\begin{table}[!ht]
    \caption{Effect of model mismatch on accuracy of The extracted models.}\label{tab:cross_model}
    \centering
    \begin{tabular}{|c|c|c|c|}
    \hline
        \textbf{$f^E$} & \textbf{$f^T$} & $\#$\textbf{samples} & \textbf{Accuracy} \\ \hline
        LR & LR & 5130 & 82.37 $\pm$ 5.7\% \\ \hline
        LR & CNN & 5130 & 85.41 $\pm$ 0.57\% \\ \hline
        CNN & LR & 5440 & 57.81 $\pm$ 3.64\% \\ \hline
        CNN & CNN & 5440 & 91.63 $\pm$ 0.42\% \\ \hline
        ResNet & CNN & 8429 & 56.11 $\pm$ 1.35\% \\ \hline
        ResNet & ResNet18 & 8429 & 71.65 $\pm$ 0.88\% \\ \hline
    \end{tabular}
\end{table}

\clearpage
\section{Choices of hyperparameters}
In this section, we list the choices of the hyperparameters of Algorithm~\ref{alg:algorithm} for different experiments and also explain how we select them.

Hyperparameters $\gamma_1$ and $\gamma_2$ are kept constant, i.e., 0.8, for all the experiments. These two parameters act as the budget shrinking factors.

Instead of changing these two, we change the number of points $n_0$, which are randomly selected in the beginning, and the budget $B$ for every step. We obtain the optimal hyperparameters for each experiment by performing a line search in the interval $[100, 500]$. 

We further change the budget over the rounds. At time step $t$, the budget, $B_t = \alpha^t \times B_{t-1}$. The idea is to select more points as \extracted{} goes on reaching the performance of \target{}. Here, $\alpha > 1$ and needs to be tuned. We use $\alpha = 1.02$, which is obtained through a line search in $[1.01, 1.99]$. 

For number of rounds $T$, we perform a line search in $[10, 20]$.

\begin{table}[!ht]
    \centering
    \caption{Hyperparameters for different datasets and target models.}
        \resizebox{\textwidth}{!}{%
    \begin{tabular}{|c|c|c|c|c|c|c|c|c|c|c|}
    \hline
        \textbf{Member Dataset} & \textbf{Target Model} & \textbf{Attack Model} & \textbf{Attack Dataset} & \textbf{Budget} & \textbf{Initial points} & \textbf{$\gamma_1$} & \textbf{$\gamma_2$} & \textbf{Rounds} & \textbf{Epochs/Round} & \textbf{Learning Rate} \\ \hline
        MNIST & LR & LR & EMNIST & 250 & 300 & 0.8 & 0.8 & 10 & 10 & 0.02 \\ \hline
        MNIST & LR & LR & CIFAR10 & 50 & 100 & 0.8 & 0.8 & 10 & 10 & 0.02 \\ \hline
        MNIST & CNN & CNN & EMNIST & 550 & 500 & 0.8 & 0.8 & 10 & 10 & 0.015 \\ \hline
        MNIST & CNN & CNN & CIFAR10 & 750 & 500 & 0.8 & 0.8 & 10 & 10 & 0.03 \\ \hline
        CIFAR10 & ResNet & CNN & ImageNet & 750 & 500 & 0.8 & 0.8 & 10 & 8 & 0.2 \\ \hline
        CIFAR10 & ResNet & ResNet18 & ImageNet & 750 & 500 & 0.8 & 0.8 & 10 & 8 & 0.02 \\ \hline
        BBC News & BERT & BERT & AG News & 60 & 100 & 0.8 & 0.8 & 6 & 3 & $5 \times 10^{-6}$ \\ \hline
    \end{tabular}}
\end{table}

\end{document}